\documentclass[10pt]{article} % For LaTeX2e
\usepackage[preprint]{tmlr}

\usepackage{amsmath,amsfonts,bm}

\def\eqref#1{equation~\ref{#1}}
\def\1{\bm{1}}

\DeclareMathAlphabet{\mathsfit}{\encodingdefault}{\sfdefault}{m}{sl}
\SetMathAlphabet{\mathsfit}{bold}{\encodingdefault}{\sfdefault}{bx}{n}

\usepackage{hyperref}
\usepackage{url}
\usepackage{amsmath}
\usepackage{amssymb}
\usepackage{mathtools}
\usepackage{amsthm}
\usepackage{tikz}
\usepackage{fontawesome5}
\usepackage{todonotes}
\usetikzlibrary{arrows.meta,positioning,calc}
\usepackage[capitalize,noabbrev]{cleveref}

\usepackage[page,header]{appendix}
\usepackage{titletoc}
\usepackage{lipsum}
\usepackage{booktabs}       % professional-quality tables
\usepackage{soul}
\usepackage{graphicx}
\usepackage{wrapfig}
\usepackage{hyperref}
\usepackage{cleveref}
\usepackage{url}
\usepackage{multirow}
\usepackage{graphicx}
\usepackage{subcaption}
\usepackage{wrapfig}
\theoremstyle{plain}
\newtheorem{theorem}{Theorem}[section]
\newtheorem{proposition}[theorem]{Proposition}
\newtheorem{lemma}[theorem]{Lemma}
\newtheorem{corollary}[theorem]{Corollary}
\theoremstyle{definition}
\newtheorem{definition}[theorem]{Definition}
\newtheorem{assumption}[theorem]{Assumption}
\theoremstyle{remark}

\title{Safe Online Learning via \\ Smooth Safety-Structured Policy Composition}

\author{\name Hongpeng Cao$^\dagger$ \email cao.hongpeng@tum.de \\
\addr School of Engineering and Design\\
Technical University of Munich\\
Garching, Munich 85748, Germany
\AND
\name Liqun Zhao$^\dagger$ \email liqun.zhao@eng.ox.ac.uk \\
\addr Department of Engineering Science\\
University of Oxford\\
Oxford OX1 3PJ, United Kingdom
\AND
\name Yuliang Gu$^\dagger$ \email yuliang3@illinois.edu \\
\addr Department of Mechanical Science and Engineering\\
University of Illinois Urbana-Champaign\\
Urbana, IL 61801, USA
\AND
\name Naira Hovakimyan \email nhovakim@illinois.edu \\
\addr Department of Mechanical Science and Engineering\\
University of Illinois Urbana-Champaign\\
Urbana, IL 61801, USA
\AND
\name Lui Sha \email lrs@illinois.edu \\
\addr Department of Computer Science\\
University of Illinois Urbana-Champaign\\
Urbana, IL 61801, USA
\AND
\name Marco Caccamo \email mcaccamo@tum.de \\
\addr School of Engineering and Design\\
Technical University of Munich\\
Garching, Munich 85748, Germany
}

\def\month{MM}  % Insert correct month for camera-ready version
\def\year{YYYY} % Insert correct year for camera-ready version
\def\openreview{\url{https://openreview.net/forum?id=XXXX}} % Insert correct link to OpenReview for camera-ready version

\begin{document}
\maketitle
\vspace{-2.5em}
{\footnotesize\noindent $\dagger$ Equal contribution.}
\vspace{1em}
\begin{abstract}
Safe online reinforcement learning requires policies to respect safety constraints while maintaining smooth optimization dynamics. Existing approaches typically rely on either strict safety enforcement via action interventions, which introduce discontinuities in system interaction and learning, or soft safety constraint formulations, which preserve smooth learning but provide limited safety assurance. 
We propose \emph{AutoSafe}, a safety-aware policy architecture that integrates structured safety monitoring and intervention directly into the action generation process. This design enables smooth, risk-dependent transitions between performance-driven and safety-preserving behaviors, resulting in continuous online interaction and learning dynamics. Empirical results across a suite of continuous-control benchmarks demonstrate strong safety enforcement without sacrificing learning smoothness. We further validate AutoSafe on a physical cart-pole system, highlighting its practical effectiveness for safe online learning in the real world.
\end{abstract}

\section{Introduction}
\label{sec:introduction}
Enabling deep reinforcement learning (DRL) agents to interact safely with their environment while maintaining stable and continual policy improvement remains a central challenge for real-world learning systems~\citep{ibarz_how_2021}. In safety-critical applications, however, safety requirements must be enforced as hard constraints, significantly complicating online learning.

Among existing safe reinforcement learning (RL) approaches, \emph{constrained policy optimization} methods \citep{altman_constrained_2021, achiam_constrained_2017} and \emph{safety-aware reward shaping} methods \citep{westenbroek_lyapunov_2022, caophysics} typically formulate safety requirements as \emph{soft constraints}, allowing temporary violations during learning in exchange for improved long-term performance. An advantage of this formulation is that it preserves smooth gradient propagation during policy optimization. While effective in some settings, this paradigm implicitly assumes that the agent learns to be safe through experience with safety violations, which may be unacceptable in safety-critical applications.

In contrast, \emph{safety filter-based} methods~\citep{hsu_safety_2024, zhong2023towards, alshiekh_safe_2018, cheng_end--end_2019} enforce safety through explicit intervention mechanisms that preemptively modify unsafe actions. While such hard enforcement can provide stronger formal safety guarantees, it may introduce abrupt action corrections~\citep{hsu_safety_2024} that disrupt gradient flow and destabilize online and continual learning, particularly under frequent interventions. Recent work has explored differentiable safety filters~\citep{amos_optnet_2017, xiao_barriernet_2023, jin_safe_2021,markgraf_safe_2025, suttle_sampling-based_2024} that have helped address the gradient-flow issue in safe learning. However, many existing approaches still rely on expensive online optimization or are mainly evaluated in simplified or offline settings~\citep{xiao_barriernet_2023, suttle_sampling-based_2024}. As a result, scalability to higher-dimensional systems remains an important challenge~\citep{hsu_safety_2024}, particularly in safe online learning scenarios that require efficient safety intervention in time.

% ~\citep{markgraf_safe_2025}; however, these approaches typically require solving online optimization problems, which limits their scalability to high-dimensional state and action spaces~\citep{hsu_safety_2024}.

To resolve the fundamental trade-off between smooth policy optimization and strict safety enforcement, we propose \emph{AutoSafe}, a simple yet effective \emph{safety-aware policy architecture} that combines strong real-time safety assurance with smooth learning dynamics.

\emph{AutoSafe} introduces two key innovations compared to existing safe RL approaches. 
First, it incorporates risk monitoring and safe intervention as \emph{structural inductive biases} through a differentiable policy composition. Rather than treating safety as an external post-processing step~\citep{hsu_safety_2024}, AutoSafe integrates safety awareness directly into the action-generation process, thereby reshaping the policy parameterization and its learning dynamics. 
Crucially, the differentiable composition allows learning signals to backpropagate through the safety mechanism, enabling stable and efficient policy optimization under safety constraints. 
As a result, the resulting policy exhibits inherently safety-aware interactions with the environment: it behaves conservatively in high-risk regions while gradually releasing safety constraints and reverting to nominal performance-driven behavior as risk diminishes.

Second, in contrast to typical \emph{last-moment} intervention mechanisms used in safety filters, where corrective actions are applied only when a safety condition is triggered, inducing hard intervention. AutoSafe embeds a \emph{safe policy prior} defined over the entire state space and explicitly guides the intervention process. This prior provides a reliable fallback behavior for safety enforcement and can be constructed using well-established safe controller design methods~\citep{freeman2008robust, grandia_multi-layered_2021, simplex}. By supplying well-defined safe actions at all times, AutoSafe enables \emph{precautionary} interventions before the system reaches the safety boundary, resulting in smoother state trajectories and more stable online learning.

We compare AutoSafe against representative safety filter–based approaches and safe reinforcement learning baselines across multiple simulated benchmarks. AutoSafe consistently exhibits stable learning dynamics, achieves strong safety assurance as standard safety filters, and matches or outperforms state-of-the-art safe learning methods in task performance. We further show the practical value of AutoSafe through a real-world cart-pole training experiment.

Our contributions are summarized as follows:
(i) We propose a safety-aware policy architecture that embeds safety monitoring and intervention as \emph{structural inductive biases}, enabling smooth action generation and stable online learning.
(ii) We provide theoretical insights into key properties of the proposed architecture, including its smooth intervention behavior and learning dynamics.
(iii) We empirically evaluate AutoSafe against representative safety filters and safety-aware learning baselines in online policy learning settings across a suite of simulated continuous-control tasks.
(iv) We demonstrate the practical applicability of AutoSafe through real-world experiments on a cart-pole system.

\section{Preliminaries}
\subsection{Policy Learning under Hard Safety Constraints}
We formulate the safe learning problem as an infinite-horizon discounted Markov decision process (MDP), defined as
$\mathcal{M} = \{\mathcal{S}, \mathcal{A}, P, R, \gamma\}$.
Here, $\mathcal{S} \subseteq \mathbb{R}^n$ denotes the $n$-dimensional state space,
$\mathcal{A} \subseteq \mathbb{R}^m$ denotes the $m$-dimensional action space,
$P : \mathcal{S} \times \mathcal{A} \rightarrow \mathcal{S}$ is the state transition function,
$R : \mathcal{S} \times \mathcal{A} \rightarrow \mathbb{R}$ is a bounded reward function,
and $\gamma \in (0,1)$ is the discount factor.

The objective of safe deep reinforcement learning under hard safety constraints is to find a policy
$\pi^\theta : \mathcal{S} \rightarrow \mathcal{A}$
that maximizes the expected discounted return while ensuring constraint satisfaction at all time steps:
\begin{align}\label{safe_rl_problem}
\max_{\pi^\theta} \quad
& V^{\pi^\theta}(\mathbf{s})
=
\mathbb{E}_{\tau \sim \pi^\theta}
\left[
\sum_{t=0}^{\infty} \gamma^t
R(\mathbf{s}_t, \mathbf{a}_t)
~\bigg|~
\mathbf{s}_0 = \mathbf{s}
\right], \\
\text{s.t.} \quad
& \mathbf{s}_t \in \mathcal{S}_c,
\quad \forall t \in \mathbb{N}.
\end{align}
Here, $\tau = \{\mathbf{s}_0, \mathbf{a}_0, \mathbf{s}_1, \mathbf{a}_1, \ldots\}$ denotes the trajectory induced by policy $\pi^\theta$,
and $\mathcal{S}_c \subseteq \mathcal{S}$ denotes the set of admissible (safe) states.
Throughout this work, we consider a fully observable MDP with continuous state and action spaces.
\subsection{Safety Filter}
\label{sec:safety_filter}
A safety filter is an automatic mechanism that continuously monitors the state of an autonomous system and intervenes by modifying intended actions when safety risks are detected ~\citep{hsu_safety_2024}.
The goal of a safety filter is to enforce hard safety constraints during interaction, while allowing a learning-based policy to optimize task performance whenever possible.
\begin{figure}[t]
    \centering
    \begin{subfigure}[c]{0.45\linewidth}
        \centering
        \includegraphics[width=\linewidth]{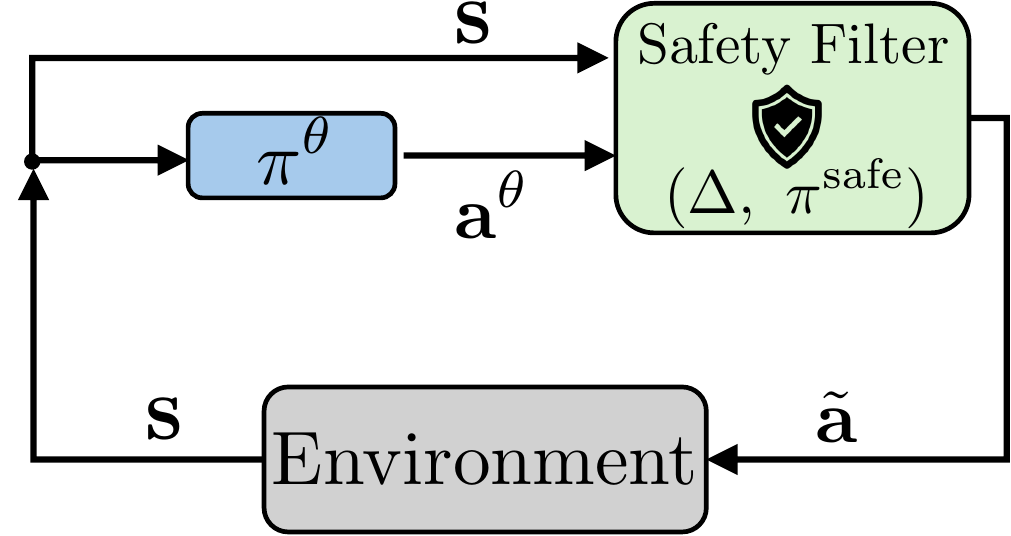}
        \caption{A safety filter that monitors system states and modifies actions when safety risks are detected.}
        \label{fig:safety-filter}
    \end{subfigure}
    \hfill
    \begin{subfigure}[c]{0.48\linewidth}
        \centering
        \includegraphics[width=\linewidth]{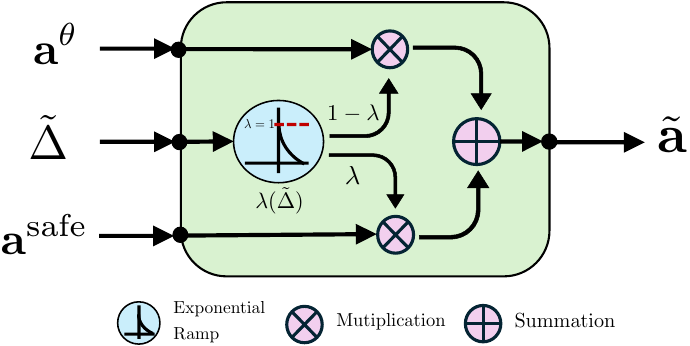}
        \caption{AutoSafe policy architecture with differentiable convex composition of a learning-based policy and a certified safe policy via a state-dependent risk-aware weight \(\lambda\).}
        \label{fig:intervention}
    \end{subfigure}
    \caption{Diagram of a conventional safety filter architecture and the proposed \textit{AutoSafe} policy architecture.}
    \label{fig:diagram}
\end{figure}
% \begin{figure}
%     \centering
%     \includegraphics[width=0.45\linewidth]{images/diagram.pdf}
%     \caption{A safety filter that monitors system states and modifies actions when safety risks are detected.}
%     \label{fig:safety-filter}
% \end{figure}
As shown in Fig.~\ref{fig:safety-filter}, typically, a safety filter consists of three components.
First, a \emph{safety monitor}
$\Delta: \mathcal{S} \times (\mathcal{A}) \rightarrow \mathbb{R}$
evaluates the risk associated with a given state and (possibly) intended action by producing a scalar safety margin measurement.
Second, a \emph{fallback (safe) policy}
$\pi^{\mathrm{safe}} : \mathcal{S} \times (\mathcal{A}) \rightarrow \mathcal{A}$
provides safe actions based on the state and the (possibly) intended unsafe action, which is usually deterministic and designed to keep the system state within the constraint set $\mathcal{S}_c$.
Finally, an \emph{intervention mechanism}
$\eta : \mathcal{S} \times \mathcal{A} \rightarrow \mathcal{A}$
determines when and how the fallback policy is applied.

Most existing work on safety filters focuses on the design of safe policies
$\pi^{\mathrm{safe}}(\cdot)$, including \emph{action replacement}~\citep{simplex,zhong2023towards,lee_neural_2020,alshiekh_safe_2018} and \emph{action projection}~\citep{cheng2019end, ames_control_2019, grandia_multi-layered_2021}, as well as on the design of the safety monitor $\Delta(\cdot)$, which typically falls into \emph{value-based}~\citep{ames_control_2019, lee_neural_2020} or \emph{rollout-based}~\citep{chen_fastrack_2021, bastani_safe_2021} approaches.

In contrast, relatively little attention has been paid to the design of the intervention mechanism itself. As a result, most existing safety filters adopt a binary intervention logic, either explicitly via action replacement or implicitly via optimization-based action projection, of the following form:
\begin{align}
\eta(\mathbf{s}_t, \mathbf{a}^\theta_t) =
\begin{cases}
\mathbf{a}^\theta_t \sim \pi^\theta(\cdot \mid \mathbf{s}_t),
& \text{if } \Delta(\mathbf{s}_t, \mathbf{a}^\theta_t) > \Delta_\text{min}, \\[4pt]
\mathbf{a}^{\mathrm{safe}}_t = \pi^{\mathrm{safe}}(\mathbf{s}_t), 
& \text{otherwise.} \nonumber
\end{cases}
\end{align}
Here, $\mathbf{a}^\theta_t$ denotes the action generated by the learning-based policy $\pi^\theta$.
$\Delta(\mathbf{s}_t, \mathbf{a}_t)$ denotes the safety margin at time step $t$, where the safety condition
$\Delta(\mathbf{s}_t, \mathbf{a}^\theta_t) > \Delta_\text{min}$
is induced by a state--action--dependent safety certificate, such as a control barrier function, or by a state-dependent safety measure $\Delta(\mathbf{s}_t)$ defined with respect to the closed-loop dynamics, e.g., a Lyapunov-based safety value. Satisfying this condition ensures that the system remains within a predefined safe set.

\section{Safety-filtered Actor-Critic DRL}
\label{sec:problem_discussion}

A key motivation for AutoSafe follows from the fact that online DRL interacts with the environment through
\emph{samples} from a stochastic policy.
With a conventional safety filter, unsafe samples are handled by a hard intervention to a deterministic safety fallback, which indeed enforces safety but creates an undesirable learning behavior:
whenever the filter triggers, the executed action becomes \emph{independent} of $\pi^\theta$.
As a result, the agent receives little learning signal about \emph{how} to update the policy to avoid future safety intervention.

To be more specific, most DRL algorithms for continuous control adopt an actor–critic architecture, in which a parameterized policy (actor) is optimized using learning signals from one or more critic networks.  Let's denote the policy as $\pi^\theta: \mathcal{S} \rightarrow \mathcal{A}$ and a function of some form of critic parameterized by $\psi$ for $\pi^\theta$ as $C^\psi_{\pi^\theta}: \mathcal{S} \times \mathcal{A} \rightarrow \mathbb{R}$. The update gradient of the actor in a normal case without safety filter can be generally expressed as:
\begin{equation}\label{eq:actor_update_rule}
\nabla_{\theta} J(\theta)
=
\mathbb{E}_{\mathbf{s} \sim \mathcal{D}^{\pi^\theta}}
\Big[
\nabla_{\theta}\pi^{\theta}(\mathbf{s})\,
\nabla_{\mathbf{a}^\theta}C^{\psi}_{\pi^\theta}(\mathbf{s},\mathbf{a}^\theta)
\Big], 
\end{equation}
where $\mathcal{D}^{\pi^\theta}$ denote samples generated by $\pi^\theta$ under the dynamic transition
$\mathbf{s}' \sim P(\cdot \mid \mathbf{s}, \mathbf{a}^{\theta})$.
The specific form of the critic $C^{\psi}_{\pi^\theta}$ depends on the underlying
actor-critic algorithm, such as advantage estimation with a state-value critic
\citep{ppo}, action-value critics \citep{ddpg, fujimoto_addressing_2018},
or entropy-regularized action-value critics \citep{sac}.
 
When a safety filter is applied to the interaction process, the action sampled
from the policy $\mathbf{a}^{\theta} \sim \pi^{\theta}(\mathbf{s})$ is modified according to a safety
intervention rule $\eta(\cdot)$, yielding the executed action
$\tilde {\mathbf{a}} = \eta\!\left(\mathbf{s}, \pi^{\theta}(\mathbf{s})\right)$.
We denote the resulting safety-filtered policy by $\tilde \pi^\theta$.
Under this intervention, the actor update in~Eq.~\ref{eq:actor_update_rule} becomes
\begin{align}
\label{eq:actor_update_rule_with_filter}
\nabla_{\theta} J(\theta)
& =
\mathbb{E}_{\mathbf{s} \sim \mathcal{D}^{\tilde \pi^\theta}}
\Big[
\nabla_{\theta} \tilde \pi^{\theta}(\mathbf{s})\,
\nabla_{\tilde {\mathbf{a}} }C^{\psi}_{\tilde \pi^\theta}(\mathbf{s}, \tilde {\mathbf{a}})
\Big], \nonumber\\
& = \mathbb{E}_{\mathbf{s} \sim \mathcal{D}^{\tilde \pi^\theta}}
\Big[
\nabla_{\theta} \eta(\mathbf{s}, \pi^\theta(\mathbf{s}))\,
\nabla_{\tilde {\mathbf{a}} }C^{\psi}_{\tilde \pi^\theta}(\mathbf{s}, \tilde {\mathbf{a}})
\Big].
\end{align}
From~Eq.~\ref{eq:actor_update_rule_with_filter}, it is evident that when the safety
intervention rule $\eta$ is non-differentiable, the safety filter blocks the gradient
backpropagation to the actor, preventing effective policy updates. 

One possible workaround is to treat the safety filter as part of the environment dynamics~\citep{markgraf_safe_2025}, such that the actor update formally retains the same structure as in~Eq.~\ref{eq:actor_update_rule}. This modeling choice, however, induces a modified transition kernel when safety intervention occurs:
$
P^{\eta}
=
P\!\left(\mathbf{s}' \mid \mathbf{s}, \eta\!\left(\mathbf{s}, \pi^{\theta}(\mathbf{s})\right)\right),
$ where the next state is not \emph{uniquely} dependent on the action output of $\pi^\theta$.

When safety interventions occur, distinct intended unsafe actions may map to the same executed filtered action through $\eta (\cdot)$. This many-to-one mapping yields identical transition outcomes for different $(\mathbf{s}, \mathbf{a}^\theta)$ pairs, introducing ambiguity in the critic’s learning and weakening its ability to capture action-dependent value differences~\citep{markgraf_safe_2025}. Consequently, value estimates may become biased and policy gradients less informative, slowing convergence.

Moreover, hard safety interventions often rely on last-moment switching between the learned policy and a fallback controller. The abrupt change may introduce non-smooth transition dynamics near the safety decision boundary. These discontinuities further degrade learning by inflating gradient variance and biasing value estimation in actor–critic methods~\citep{achiam_constrained_2017, ray2019benchmarking}.

These considerations motivate the design of a safe online learner that
(i) transitions smoothly between regimes with inactive and active safety intervention, thereby preserving informative gradients and stable learning signals for policy optimization, and
(ii) learns a natural safe behavior by acting \emph{cautiously} as the safety margin tightens: the policy should progressively bias its actions toward safe behavior while reducing exploratory variance to mitigate accidental safety violations.

% \begin{figure}
%     \centering
%     \includegraphics[width=0.6\linewidth]{images/im.pdf}
%     \caption{AutoSafe policy architecture with differentiable convex composition of a learning-based policy and a certified safe policy via a state-dependent risk-aware weight \(\lambda\).}
%     \label{fig:intervention}
% \end{figure}
\section{Smooth Intervention via Convex Policy Composition}
We introduce \emph{AutoSafe} (Fig.~\ref{fig:intervention}), a safety-aware policy architecture that integrates safety directly into action generation. Rather than treating safety as a rejection or correction step applied after action selection, AutoSafe structures the executable policy as a differentiable convex composition of a performance-driven learner and a safety prior. 
\subsection{Structural Policy Composition}
\label{sec:method}
At each time step, the executed action $\tilde{\mathbf{a}}$ is synthesized by interpolating between the stochastic action proposed by the learning policy, $\mathbf{a}^\theta \sim \pi^\theta(\cdot \mid \mathbf{s})$, and a deterministic safe action provided by the safety prior, $\mathbf{a}^{\mathrm{safe}} = \pi^{\mathrm{safe}}(\mathbf{s})$:
\begin{equation}
    \tilde{\mathbf{a}} = (1 - \lambda(\mathbf{s})) \, \mathbf{a}^\theta + \lambda(\mathbf{s}) \, \mathbf{a}^{\mathrm{safe}}, \quad \lambda(\mathbf{s}) \in [0, 1].
    \label{eq:composite_action}
\end{equation}
Here, $\lambda(s)$ acts as a dynamic mixing that balances between reward maximization and risk mitigation.

This design leverages the geometry of continuous control tasks. Assuming the feasible action space $\mathcal{A}$ is convex (e.g., standard box constraints), any convex combination of two valid actions is guaranteed to remain valid ($\mathbf{a}^\theta, \mathbf{a}^{\mathrm{safe}} \in \mathcal{A} \to \tilde{\mathbf{a}} \in \mathcal{A}$ \citep{boyd_convex_2004}). Physically, this means the intervention does not arbitrarily ``clip'' the action, but rather pulls it along the line segment connecting the agent's desired action to the safe anchor.

This formulation secures two key properties for online learning: \textbf{Differentiability:} Unlike discrete safety filters,~Eq.~\ref{eq:composite_action} is fully differentiable with respect to $\lambda$. This preserves gradient flow even during intervention. 
\textbf{Expressiveness:} The composite policy strictly contains the original policy class. In safe regions ($\lambda \to 0$), the agent recovers full autonomy to learn optimal behaviors; as risk escalates ($\lambda \to 1$), behavior smoothly collapses to the safe prior.

\subsection{Distributional Shaping \& Variance Control}
\label{sec:distributional_effect}

The structural composition in~Eq.~\ref{eq:composite_action} fundamentally reshapes the policy's stochastic profile. By viewing the intervention as a transformation (a pushforward under the affine map) of the learner's distribution $\pi^\theta(\cdot \mid \mathbf{s})$, we can quantify exactly how the geometric interpolation alters the agent's behavior.

For Gaussian policies, standard in continuous control where $\pi^\theta(\cdot \mid \mathbf{s}) = \mathcal{N}(\mu_\theta(\mathbf{s}), \Sigma_\theta(\mathbf{s}))$, the executed policy $\tilde{\pi}$ remains Gaussian with moments:
\begin{equation}
    \tilde{\mu}(\mathbf{s}) = (1-\lambda)\mu_\theta(\mathbf{s}) + \lambda \mathbf{a}^{\mathrm{safe}}, ~~ \tilde{\Sigma}(\mathbf{s}) = (1-\lambda)^2 \Sigma_\theta(\mathbf{s}).
    \label{eq:mix_moments}
\end{equation}
This result reveals that the convex structure of \emph{AutoSafe} automatically couples safety with uncertainty reduction. As $\lambda$ increases, the intervention induces:

\begin{itemize}
    \item \textbf{Mean bias:} The expected action $\tilde{\mu}$ shifts toward the safe prior $\mathbf{a}^{\mathrm{safe}}$, correcting the learner’s intent.
    \item \textbf{Variance damping:} Exploration noise is reduced by a quadratic factor $(1-\lambda)^2$.
\end{itemize}
This coupling is the first-principles mechanism that stabilizes learning: as the agent approaches risk (high $\lambda$), it does not merely steer away; it effectively restricts stochastic exploration to prevent accidental boundary violations.
\begin{figure}
    \centering
     \includegraphics[width=0.8\linewidth]{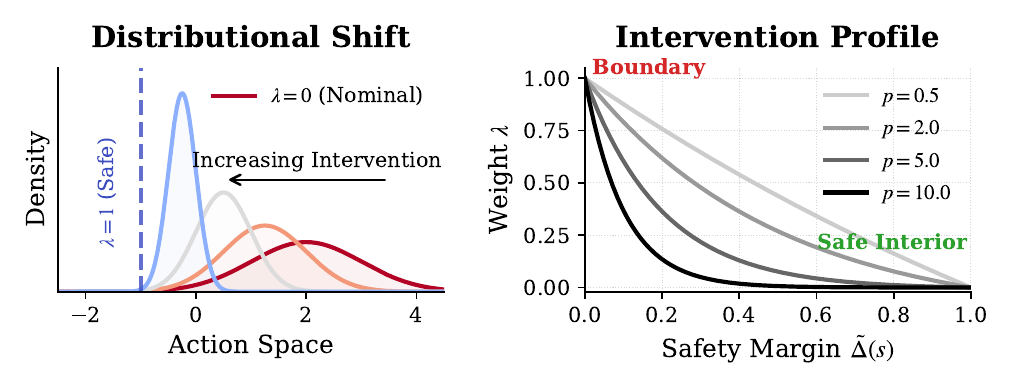}
    \caption{\textbf{(Left)} Distributional Shaping (Sec.~4.2): As the intervention weight $\lambda$ increases, the policy distribution gradually shifts its mean toward the safe anchor $a^{\mathrm{safe}}$, while its variance decreases quadratically (~Eq.~\ref{eq:mix_moments}). \textbf{(Right)} Learnable Sharpness (Sec.~4.4): The sharpness parameter $p$ controls the intervention profile. A larger $p$ creates a ``sharper'' boundary that delays intervention to prioritize performance, while a smaller $p$ induces earlier and more gradual intervention.}
    \label{fig:illustrate_lambda}
\end{figure}
\subsection{Risk-Calibrated Learnable Intervention}
\label{sec:risk_calib}
\label{sec:learnable_rate}

With the intervention mechanism in Sec.~\ref{sec:method} and its variance-damping effect
in Sec.~\ref{sec:distributional_effect} established, we now specify how the intervention
weight $\lambda(\mathbf{s})$ is chosen. Our guiding principle is \emph{minimal
intervention}: the learning policy should retain as much control authority as possible,
while the executed policy should place only negligible probability mass in unsafe regions.
For the smooth rate analyzed below, $\lambda(\mathbf{s})$ is treated as fixed after
conditioning on $\mathbf{s}$, which preserves the Gaussian pushforward identity in
Eq.~\ref{eq:mix_moments}. Action-dependent safety checks are still allowed, but are used
only as hard triggers.

Let
\begin{equation}
    \hat{\Delta}(\mathbf{s},\mathbf{a})
    \coloneqq
    \Delta(\mathbf{s},\mathbf{a})-\Delta_{\min}
\end{equation}
denote the relative safety margin, so that
$\hat{\Delta}(\mathbf{s},\mathbf{a})\ge 0$ indicates satisfaction of the safety
condition. For a fixed state $\mathbf{s}$, the executed action
$\tilde{\mathbf{a}}_\lambda$ induces the random margin
\[
    Z(\lambda)
    \coloneqq
    \hat{\Delta}(\mathbf{s},\tilde{\mathbf{a}}_\lambda).
\]
We require the executed policy to satisfy the state-wise chance constraint
\begin{equation}
    \Pr\!\left(Z(\lambda)<0\mid\mathbf{s}\right)
    =
    \Pr\!\left(
    \Delta(\mathbf{s},\tilde{\mathbf{a}}_\lambda)<\Delta_{\min}
    \mid\mathbf{s}
    \right)
    \le
    \delta(\mathbf{s}),
    \label{eq:chance_constraint}
\end{equation}
where $\delta(\mathbf{s})$ is a risk tolerance. This constraint bounds the unsafe tail
probability of the executed action distribution.

A tractable intervention template follows by locally approximating the relative safety
margin as affine along the interpolation segment between the nominal action
$\mathbf{a}^{\theta}$ and the safe prior action $\mathbf{a}^{\mathrm{safe}}$. Under this
approximation, and using a one-sided tail condition, the smallest feasible intervention
weight has the closed form
\begin{equation}
    \lambda^*(\mathbf{s})
    =
    \mathrm{clip}_{[0,1]}
    \left(
    \frac{
    \beta_\delta\sigma_\Delta(\mathbf{s})-\mu_\Delta(\mathbf{s})
    }{
    \Delta_{\mathrm{safe}}(\mathbf{s})
    -\mu_\Delta(\mathbf{s})
    +\beta_\delta\sigma_\Delta(\mathbf{s})
    }
    \right),
    \label{eq:lambda_star}
\end{equation}
where
\[
    \mu_\Delta(\mathbf{s})
    =
    \mathbb{E}\!\left[
    \hat{\Delta}(\mathbf{s},\mathbf{a}^{\theta})
    \mid\mathbf{s}
    \right],
    \qquad
    \sigma_\Delta(\mathbf{s})
    =
    \operatorname{Std}\!\left[
    \hat{\Delta}(\mathbf{s},\mathbf{a}^{\theta})
    \mid\mathbf{s}
    \right],
\]
and
\[
    \Delta_{\mathrm{safe}}(\mathbf{s})
    =
    \hat{\Delta}\!\left(
    \mathbf{s},\mathbf{a}^{\mathrm{safe}}(\mathbf{s})
    \right).
\]
Here $\beta_\delta$ is a risk quantile, e.g.,
$\beta_\delta=\Phi^{-1}(1-\delta(\mathbf{s}))$ for Gaussian margins, or the conservative
sub-Gaussian choice $\beta_\delta=\sqrt{2\log(1/\delta(\mathbf{s}))}$. The derivation and
approximation assumptions are given in Appendix~\ref{app:derivations}.

Eq.~\ref{eq:lambda_star} provides the key design insight. If the risk-adjusted nominal
margin
\[
    L_\theta(\mathbf{s})
    \coloneqq
    \mu_\Delta(\mathbf{s})-\beta_\delta\sigma_\Delta(\mathbf{s})
\]
is nonnegative, the nominal policy already satisfies the tail-risk condition and no
intervention is needed. Otherwise, the numerator in Eq.~\ref{eq:lambda_star} measures the
nominal safety deficit, while the denominator measures the safety improvement available
by moving toward the safe prior. Thus, $\lambda$ should increase as the safety margin
decreases, as uncertainty increases, or as the desired risk tolerance becomes stricter.

In online deep RL, directly estimating $\sigma_\Delta(\mathbf{s})$ and selecting a
state-dependent $\beta_\delta$ can be unreliable. AutoSafe therefore uses
Eq.~\ref{eq:lambda_star} as a structural template rather than as an online estimator. The
implemented rate enforces three properties: monotonicity with respect to the safety
margin, endpoint safety at the boundary, and a learnable sharpness controlling how
quickly intervention rises.

The monitor provides a normalized state-conditioned margin
$\tilde{\Delta}(\mathbf{s})\in[0,1]$, where $\tilde{\Delta}(\mathbf{s})=1$ denotes a
clearly safe interior state and $\tilde{\Delta}(\mathbf{s})=0$ denotes the safety
boundary. We instantiate the intervention weight as
\begin{equation}
    \lambda\!\left(\tilde{\Delta}(\mathbf{s}),p(\mathbf{s})\right)
    =
    \frac{
    \exp\!\left(p(\mathbf{s})(1-\tilde{\Delta}(\mathbf{s}))\right)-1
    }{
    \exp\!\left(p(\mathbf{s})\right)-1
    },
    \label{eq:shifted_boltzmann}
\end{equation}
where $p(\mathbf{s})>0$ is a learnable sharpness parameter. This map satisfies
\[
    \lambda(1,p)=0,
    \qquad
    \lambda(0,p)=1,
    \qquad
    \frac{\partial \lambda}{\partial \tilde{\Delta}}<0.
\]
Hence, intervention vanishes in the safe interior and increases monotonically as the
system approaches the safety boundary.

The sharpness parameter controls the conservatism of this transition. Smaller
$p(\mathbf{s})$ produces earlier and smoother intervention, while larger
$p(\mathbf{s})$ delays intervention until closer to the boundary, preserving more
autonomy for the learning policy. We parameterize it using a small prediction head,
\begin{equation}
    p(\mathbf{s};\phi)
    =
    p_{\min}
    +
    (p_{\max}-p_{\min})
    \operatorname{sigmoid}\!\left(h_\phi(\mathbf{s})\right),
    \qquad
    0<p_{\min}<p_{\max}.
\end{equation}
The parameters $\phi$ are optimized jointly with the actor through the executed action
\[
    \tilde{\mathbf{a}}
    =
    (1-\lambda(\mathbf{s}))\mathbf{a}^{\theta}
    +
    \lambda(\mathbf{s})\mathbf{a}^{\mathrm{safe}}.
\]

Finally, near-boundary determinism follows directly from Eq.~\ref{eq:mix_moments}:
\[
    \tilde{\Sigma}(\mathbf{s})
    =
    (1-\lambda(\mathbf{s}))^2\Sigma_\theta(\mathbf{s}).
\]
As $\lambda(\mathbf{s})\to 1$, the executed covariance collapses, suppressing exploratory
variance near the safety boundary. If the monitor detects an imminent safety violation, e.g., $\hat{\Delta}(\mathbf{s},\mathbf{a}^{\theta}) \le 0$ for a state-action monitor or $\hat{\Delta}(\mathbf{s}) \le 0$ for a state monitor, then $\tilde{\Delta}(\mathbf{s}) = 0$, and \textit{AutoSafe} sets $\lambda = 1$ to fully execute the certified safe action $\mathbf{a}^{\mathrm{safe}}$. Thus, the learnable rate shapes precautionary
behavior before the boundary is reached, while the hard trigger preserves the endpoint
safety inherited from the certified safe prior.

\section{Experiments and Results}
\label{sec:experiments}
\subsection{Backbone Algorithms}
\begin{wrapfigure}{r}{0.41\linewidth}
    \centering
    \vspace{-10pt}
    \includegraphics[width=1.0\linewidth]{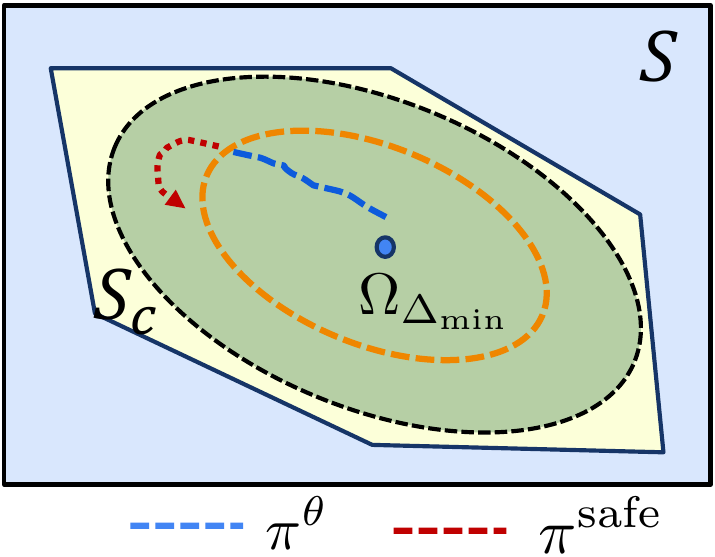}
    \caption{Illustration of a Simplex-style safety mechanism with an inner safe set \(\Omega_{\Delta_{\min}}\) and switching to a certified safe policy at the safety boundary in a static 2D case.}
    \label{fig:simplex}
    \vspace{-10pt}
\end{wrapfigure}

The proposed policy composition diagram is \emph{safety-filter agnostic}. As a representative instantiation, we adopt the widely used \emph{Simplex} architecture \citep{simplex} as the baseline safety design. Simplex employs a robust, \emph{certified-safe fallback policy} obtained by solving an offline model-based optimization problem that explicitly accounts for closed-loop stability under safety constraints. 

The safe action is given by a state-feedback control law
$
\mathbf{a}^{\mathrm{safe}} \coloneqq \mathbf{F}\mathbf{s},
$
where \(\mathbf{F}\) denotes the feedback gain matrix. Safety monitoring is performed using a state-dependent Lyapunov-like function
$
\Delta(\mathbf{s}) \coloneqq 1 - \mathbf{s}^\top \mathbf{P}\mathbf{s},
$
where \(\mathbf{P} \succ 0\) is a positive definite Lyapunov matrix associated with
the closed-loop dynamics under the fallback controller, obtained via standard Lyapunov or LMI-based synthesis. Following the convention introduced in \cref{sec:safety_filter}, control switches to \(\pi^\theta\) when
\(\Delta(\mathbf{s}) > \Delta_{\min}\) and to \(\pi^{\mathrm{safe}}\) otherwise.

As illustrated in a two-dimensional setting in Fig.~\ref{fig:simplex}, the
condition \(\Delta(\mathbf{s}) > \Delta_{\min}\) defines an inner safe region.
\[
\Omega_{\Delta_{\min}} \coloneqq \left\{ \mathbf{s} \;\middle|\; 1 - \mathbf{s}^\top \mathbf{P}\mathbf{s} > \Delta_{\min} \right\},
\]
shown as the ellipsoidal set with yellow boundary. When the system state lies within \(\Omega_{\Delta_{\min}}\), control authority is assigned to the learning-based policy \(\pi^\theta\). Once the monitored safety margin \(\Delta(\mathbf{s})\) decreases to the threshold \(\Delta_{\min}\), the controller immediately switches to the certified safe policy \(\pi^{\mathrm{safe}}\). The black dashed boundary denotes the largest invariant safety envelope,
corresponding to the zero level set
\(\Delta(\mathbf{s}) = 0\) (i.e., \(\mathbf{s}^\top \mathbf{P}\mathbf{s} = 1\)), 
within which the fallback controller guarantees forward invariance for safety. Notably, Simplex avoids solving optimization problems at runtime as in ~\citep{ames_control_2019, cheng_end--end_2019} and always provides a valid safe action as a backup, making it particularly attractive for real-time safety-critical applications.

In this work, we adopt \emph{soft-actor-critic} algorithm (SAC) \citep{sac} for policy learning, where the policy parameters $\theta$ are optimized via gradient ascent on the objective
\begin{align}
    J_{\text{}}(\theta)
    &= \mathbb{E}_{\mathbf{s} \sim \mathcal{D},\, \tilde{\mathbf{a}} \sim \tilde{\pi}^\theta}\!\left[ 
    Q^\psi(\mathbf{s},\tilde{\mathbf{a}}) - \alpha \log \tilde{\pi}^\theta(\tilde{ \mathbf{a}} \mid \mathbf{s})
    \right],
\end{align}
where \(Q^\psi\) denotes the action-value critic that estimates the expected return of
executing the composite policy \(\tilde{\pi}^\theta\), i.e., taking action
\(\tilde{\mathbf{a}} \sim \tilde{\pi}^\theta\) as defined in~Eq.~\ref{eq:composite_action}, and
\(\alpha\) is the temperature parameter that weights the entropy regularization
term. AutoSafe does not modify the critic or the learning objective. Instead, it
alters only how the actor produces executable actions by composing the
learning-based action with a safe fallback action. The remainder of the
interaction and training scheme remains identical to the original SAC
algorithm.
\subsection{Experiment Setup}
We evaluate our method against safety-filter-based baselines and safe RL baselines in the online safe learning context, including: \textbf{Safety Filter:}
i) SimplexRL \citep{alshiekh_safe_2018, lee_neural_2020, cai_runtime_2025}: runtime safety filter by safe action replacement based on \textit{Simplex};
ii) CBF \citep{ames_control_2019, cheng2019end}: runtime safety filter by safe action projection using control barrier function.
\textbf{Safe Learning with Prior:}
i) AdaLam \citep{cheng2019control, tian_reinforcement_2024}: weighted summation of safe policy prior and DRL policy, where the weights are adaptively adjusted based on context and safety;
ii) Residual \citep{johannink_residual_2019}: learns a residual policy on top of a fixed safe policy prior;
\textbf{Constrained RL:}
i) Lyapunov \citep{westenbroek_lyapunov_2022, caophysics}: adds Lyapunov-based penalties for reward shaping for safe exploration;
ii) Lagrangian \citep{ha_learning_2020, achiam_constrained_2017}: constrained policy optimization via dual variables.

We consider four simulated and one real-world case studies, including:
i) CartPole Balancing (sim \& real)~\citep{towers2024gymnasium}, with continuous force control under position and angle constraints;
ii) Glucose Regulation~\citep{tian_reinforcement_2024}, with continuous insulin control under glucose level constraints;
iii) 3D Quadrotor Goal Reaching~\citep{yuan2022safe}, with four-dimensional thrust control under position and attitude constraints; and
iv) Quadruped Navigation~\citep{yang2022learning}, with six-dimensional acceleration control on uneven terrain under height and velocity constraints.
Implementation details are summarized in Appendix~\ref{app:exp}. All experiments are run with five random seeds.

\subsection{Main Results}

\begin{table}[h]
    \centering
    \caption{\textbf{Main Results.} Performance and safety comparison across four continuous control tasks. We report mean $\pm$ std over five random seeds. Higher return is better ($\uparrow$), while fewer safety violations are better ($\downarrow$). Best results are marked in \textbf{bold}.}
    \label{tab:main_results}
    
    % Optional: Resize to fit text width if the table is too wide
    \resizebox{\textwidth}{!}{ 
    
    \begin{tabular}{l cc cc cc cc} 
        \toprule
        & \multicolumn{2}{c}{\textbf{Cartpole}} & \multicolumn{2}{c}{\textbf{Glucose}} & \multicolumn{2}{c}{\textbf{3D Quadrotor}} & \multicolumn{2}{c}{\textbf{Quadruped}} \\
        \cmidrule(lr){2-3} \cmidrule(lr){4-5} \cmidrule(lr){6-7} \cmidrule(lr){8-9}
        \textbf{Method} & Return $\uparrow$ & Viol. $\downarrow$ & Return $\uparrow$ & Viol. $\downarrow$ & Return $\uparrow$ & Viol. $\downarrow$ & Return $\uparrow$ & Viol. $\downarrow$ \\
        \midrule
        SAC~\citep{sac}  & \textbf{478.6 $\pm$ 2.4 }  & 74.4 $\pm$ 4.4     & -155.0 $\pm$ 12.4 & 24.4 $\pm$ 17.2 & 0.6 $\pm$ 0.9     & 2437.2 $\pm$ 625.3   & 601.5 $\pm$ 330.5  & 1895.6 $\pm$ 257.1 \\
        SimplexRL~\citep{lee_neural_2020}    & 384.5 $\pm$ 127.9 & \textbf{0.0 $\pm$ 0.0} & \textbf{-124.0 $\pm$ 4.1} & \textbf{0.0 $\pm$ 0.0} & 17.5 $\pm$ 10.7 & 50.0 $\pm$ 53.7 & 1880.5 $\pm$ 461.8 & 0.2 $\pm$ 0.4 \\
        CBF~\citep{ames_control_2019}         & 448.2 $\pm$ 5.0   & 43.8 $\pm$ 10.3    & -143.4 $\pm$ 10.8 & 13.4 $\pm$ 13.6 & 132.4 $\pm$ 123.4 & 876.6 $\pm$ 161.7    & 12.9 $\pm$ 2.9     & 17502.0 $\pm$ 2604.0 \\
        AdaLam~\citep{tian_reinforcement_2024}      & 473.7 $\pm$ 2.4   & 39.8 $\pm$ 89.0    & -384.1 $\pm$ 24.1 & \textbf{0.0 $\pm$ 0.0}  & 88.8 $\pm$ 198.1  & 47149.6 $\pm$ 49797.1 & 477.4 $\pm$ 296.2  & 960.6 $\pm$ 224.2 \\
        Residual~\citep{johannink_residual_2019}    & 477.6 $\pm$ 1.7   & 16.4 $\pm$ 11.2    & -503.3 $\pm$ 3.6  & 47.8 $\pm$ 85.1 & 0.4 $\pm$ 0.8     & 1258.8 $\pm$ 238.2   & 1493.5 $\pm$ 338.5 & 522.6 $\pm$ 57.6 \\
        Lyapunov~\citep{westenbroek_lyapunov_2022}    & 475.2 $\pm$ 2.9   & 19.0 $\pm$ 10.4    & -195.6 $\pm$ 33.1 & 20.2 $\pm$ 14.8 & 0.1 $\pm$ 0.1     & 8391.0 $\pm$ 14100.4  & 408.5 $\pm$ 178.7  & 2082.4 $\pm$ 989.3 \\
        Lagrangian~\citep{ha_learning_2020}  & 474.9 $\pm$ 4.4   & 99.2 $\pm$ 27.4    & -192.6 $\pm$ 12.5 & 23.6 $\pm$ 16.4 & 0.0 $\pm$ 0.1     & 262756.2 $\pm$ 62636.0 & 17.9 $\pm$ 8.4    & 17824.6 $\pm$ 3828.9 \\
        \midrule
        \textbf{AutoSafe (Ours)} & 477.7 $\pm$ 2.5 & \textbf{0.0 $\pm$ 0.0} & -131.2 $\pm$ 3.7 & \textbf{0.0 $\pm$ 0.0} & \textbf{740.8 $\pm$ 27.2} & \textbf{0.0 $\pm$ 0.0} & \textbf{2015.2 $\pm$ 757.7} & \textbf{0.0 $\pm$ 0.0} \\
        \bottomrule
    \end{tabular}
    } % End resizebox
\end{table}

Overall, AutoSafe outperforms the baselines in the majority of tasks and runs, achieving higher returns while maintaining strong safety assurance, as shown in~\cref{tab:main_results}. In relatively low-dimensional settings such as Cartpole and Glucose, hard-intervention–based safety filters (e.g., SimplexRL and CBF) achieve performance similar to AutoSafe. However, their performance degrades substantially on higher-dimensional, more dynamically complex tasks, such as Quadrotor and Quadruped. In these environments, frequent safety interventions introduce abrupt distribution shifts that hinder smooth function approximation, leading to unstable learning and occasional divergence. 

Residual learning combines learning-based and safe actions through direct summation and performs well in simpler tasks such as Cartpole and Glucose. However, its effectiveness depends strongly on the quality and alignment of the safe prior; mismatches can bias learning and slow convergence in more complex domains. Similarly, AdaLam adaptively blends learning-based and safe actions via a weighting parameter $\lambda$. However, safety is not explicitly enforced, and the weighting parameter \(\lambda\) is optimized solely for task performance. In practice, we observe that as \(\lambda\) decreases, safety violations increase dramatically, causing learning to make little further progress.

Lastly, approaches that learn using safe priors benefit from incorporating domain knowledge and tend to achieve stronger safety and performance within limited training budgets. In contrast, constrained RL methods typically rely on learning safety from constraint violations, which can lead to increased violations during training and require substantially more interaction to converge. We summarize the computational time comparison for all methods in~\cref{tab:wall_time} at~\cref{sec:time_comparison}, showing that \textit{AutoSafe} achieves computational efficiency comparable to vanilla SAC.

\subsection{Influences of hard intervention on learning}
% \begin{figure}
%     \centering
%     \includegraphics[width=0.5\linewidth]{images/simplex_vs_autosafe_cartpole_critic_loss.pdf}
%     \caption{Performance return and critic loss curves (logarithmic) for tasks with different levels of goal conflicts.}
%     \label{fig:intervention_example}
% \end{figure}
% \begin{figure}
%     \centering
%     \includegraphics[width=0.5\linewidth]{images/cart_pole_lambda_trigger.pdf}
%     \caption{Learning to Be Safe: Smooth vs. Hard Intervention.
%     \textbf{(Left)}: Smooth policy composition yields stable and consistent safety learning across different levels of task conflict. \textbf{(Right)}: Hard safety intervention initially reduces interventions but later stalls, leading to repeated safety interventions.}
%     \label{fig:learning-behaviour}
% \end{figure}

\begin{figure}[t]
    \centering
    \begin{subfigure}[b]{0.48\linewidth}
        \centering
        \includegraphics[width=\linewidth]{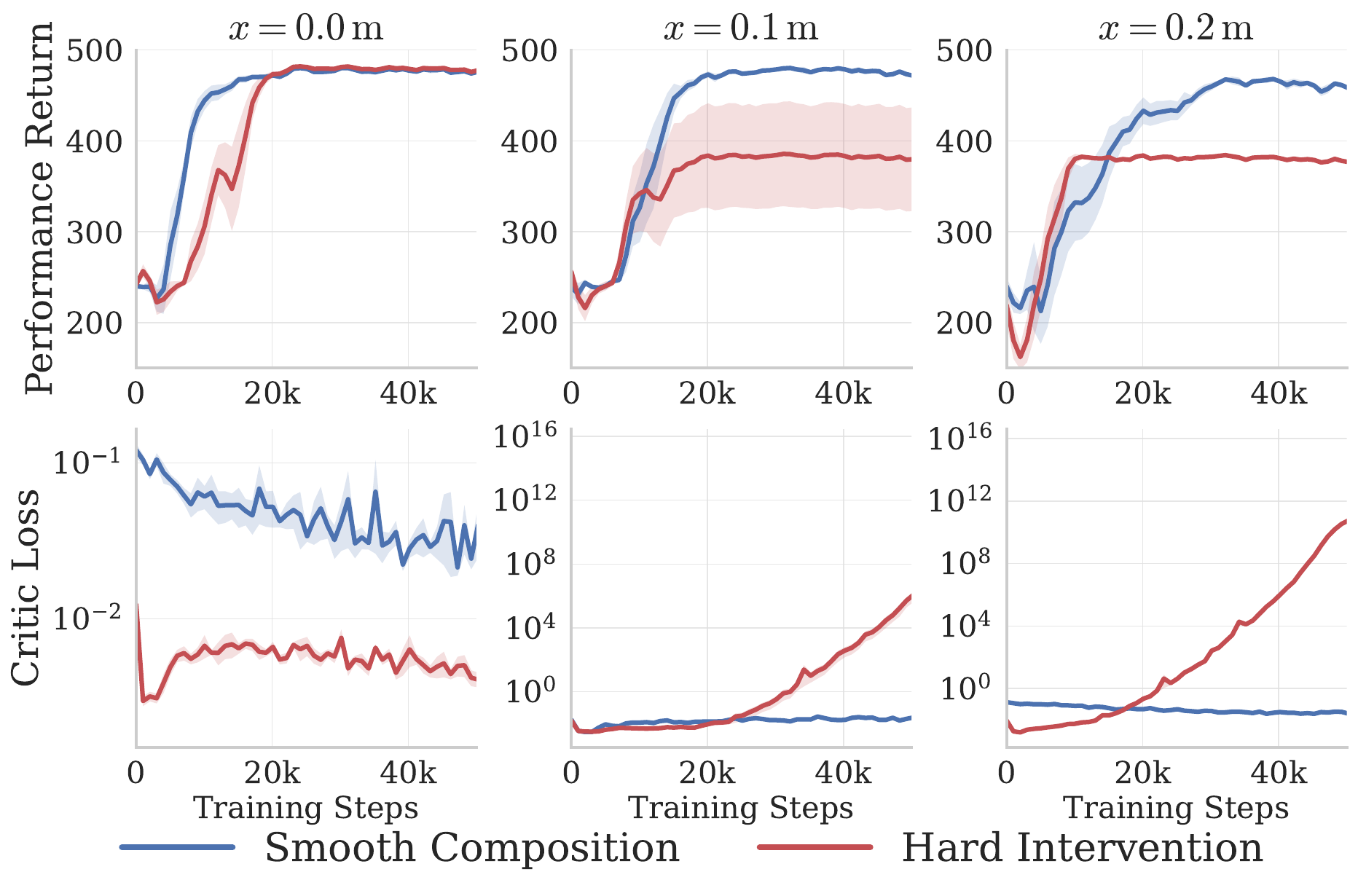}
        \caption{Performance return and critic loss curves (logarithmic) for tasks with different levels of goal conflicts.}
        \label{fig:intervention_example}
    \end{subfigure}
    \hfill
    \begin{subfigure}[b]{0.48\linewidth}
        \centering
        \includegraphics[width=\linewidth]{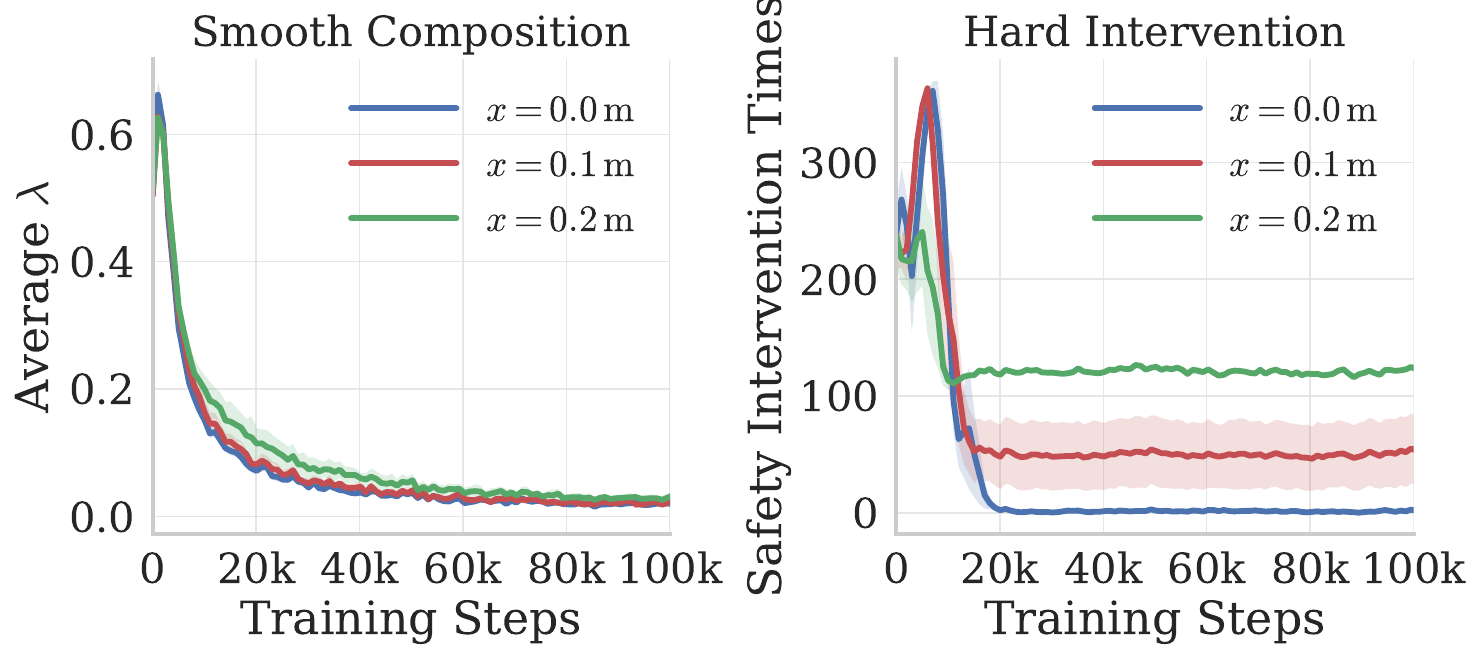}
        \caption{Learning to Be Safe: Smooth vs. Hard Intervention.
        \textbf{(Left)}: Smooth policy composition yields stable and consistent safety learning across different levels of task conflict. 
        \textbf{(Right)}: Hard safety intervention initially reduces interventions but later stalls, leading to repeated safety interventions.}
        \label{fig:learning-behaviour}
    \end{subfigure}
    \caption{Comparison of learning dynamics and safety intervention behavior.}
    \label{fig:combined}
\end{figure}
In this section, we study the impact of \emph{hard safety interventions} on learning dynamics using a controlled CartPole experiment based on \emph{Simplex}. We consider a position-tracking task in which the learning policy must stabilize the system while reaching a target cart position. The safe policy regulates the system toward \(x_{\mathrm{goal}} = 0.0~\mathrm{m}\). To induce different levels of goal conflict, the learning policy is tasked with reaching target positions \(x_{\mathrm{goal}} \in \{0.0, 0.1, 0.2\}~\mathrm{m}\), with larger offsets corresponding to stronger conflicts between task objectives and safety enforcement. 

% The most challenging setting, \(x_{\mathrm{goal}} = 0.2~\mathrm{m}\), lies near the boundary of the safety envelope.
As shown in Fig.~\ref{fig:intervention_example}, our method robustly handles different levels of goal conflict and consistently outperforms the
\emph{hard-intervention} baseline. In contrast, learning under the hard-intervention mechanism becomes increasingly sensitive as the goal conflict intensifies. Frequent safety interventions lead to unstable critic estimation, manifested by rapidly increasing critic loss. As a result, policy optimization stagnates, and overall task performance degrades significantly.

Fig.~\ref{fig:learning-behaviour} further illustrates the learning dynamics under different goal-reaching tasks. With smooth policy composition, the agent gradually reduces the intervention weight \(\lambda\), relying less on the safe policy and increasingly exploiting the learning-based policy to maximize task performance. By contrast, while the hard-intervention mechanism initially reduces the intervention frequency, this trend breaks down under stronger goal conflicts.
For \(x_{\mathrm{goal}} =0.1~\mathrm{m}~ \text{and}~0.2~\mathrm{m}\), the intervention rate plateaus after approximately \(20\text{k}\) steps, coinciding with a sharp rise in critic loss (Fig.~\ref{fig:intervention_example}). This behavior suggests that excessive and persistent interventions induce extrapolation errors in the critic, preventing the policy from receiving informative learning signals and ultimately halting learning progress.

Further analysis on the sensitivity to the choice of $\Delta_\mathrm{min}$ for both SimplexRL and \textit{AutoSafe} is provided in~\cref{tab:abalation_delta} at the \ref{sec:delta_ablation}. The results show the expected trade-off: a larger $\Delta_\mathrm{min}$ leads to earlier safety intervention and more conservative behavior, while a smaller $\Delta_\mathrm{min}$ allows more aggressive exploration at the cost of increased safety violations.
\subsection{Adaptation of learned $p$}
\begin{figure}
    \centering
    \includegraphics[width=0.75\linewidth]{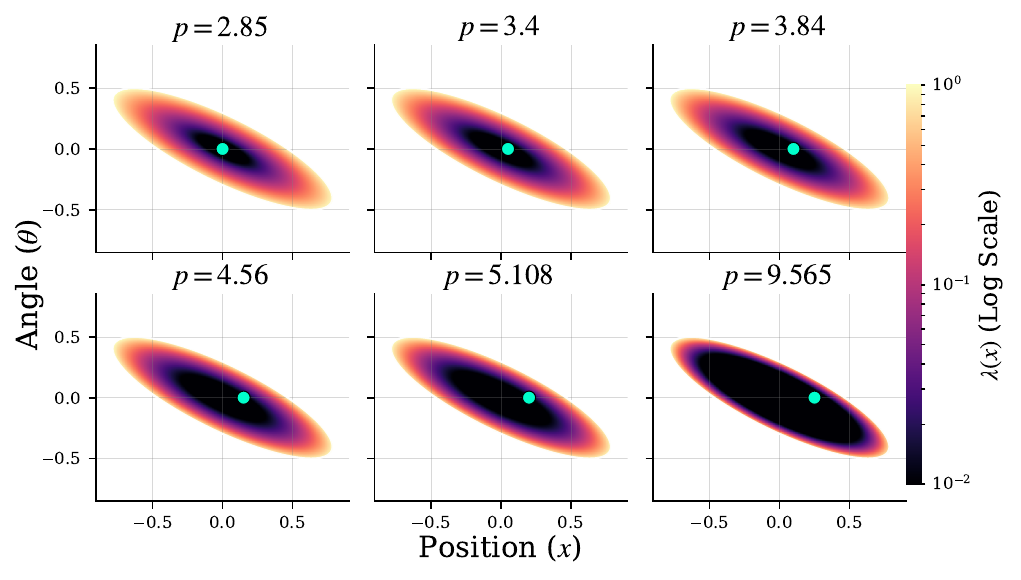}
    \caption{\textbf{CartPole safety envelope visualization.}
Visualization of $\lambda(\tilde{\Delta}(\mathbf{s}))$ over the $x$ and $\theta$ dimensions for different learned values of $p$ across CartPole tasks. The parameter $p$ adapts to the level of goal conflict: after the agent has learned to remain safe, increasing $p$ reduces $\lambda$ at the same state, downweighting the safety policy to favor higher task performance within the safety envelope.}
    \label{fig:hmap-cartpole}
\end{figure}
Making the sharpness parameter $p$ learnable enables the agent to adaptively adjust its value, thereby maximizing performance across different system dynamics and tasks. 
To validate this hypothesis, we visualize the learned \(p\) from the a more fine-grained CartPole experiment in Fig.~\ref{fig:hmap-cartpole}. As the goal mismatch increases, the suboptimality of the safe prior becomes more pronounced. Correspondingly, the value of learned \(p\) increases, which in turn reduces the blending weight \(\lambda\). This indicates that the agent adaptively learns to rely less on the suboptimal safe policy to seek higher return. 

We observe that the converged sharpness parameter \(p\) varies substantially across tasks (see Fig.~\ref{fig:tem_autosafe} at Appendix~\ref{app:additional_results}) and is not analytically tractable to determine. Heuristic schedules, therefore, require task-specific tuning and may introduce design bias, as seen in the Quadrotor setting (see Fig.~\ref{fig:ablation_temperature} in Appendix~\ref{app:additional_results}). Learning \(p\) along with the policy avoids this issue and yields robust performance across all tasks.

% Furthermore, we visualize the evolution of $p$ during training (see Fig.~\ref{fig:tem_autosafe} at Appendix~\ref{app:additional_results}) and observe that the converged values differ notably across applications. 
% We also ablate a heuristic design in which the sharpness parameter $p$ is scheduled to gradually increase the weight on the RL policy (see Fig.~\ref{fig:ablation_temperature} in \cref{app:additional_results}). 
% While the learning-based approach performs robustly across all tasks without task-specific tuning, scheduled variants require careful adjustment and may introduce design bias, as observed in the Quadrotor setting. This suggests that it might be non-trivial to set a proper value. 
\begin{figure}
    \centering
    \includegraphics[width=0.70\linewidth]{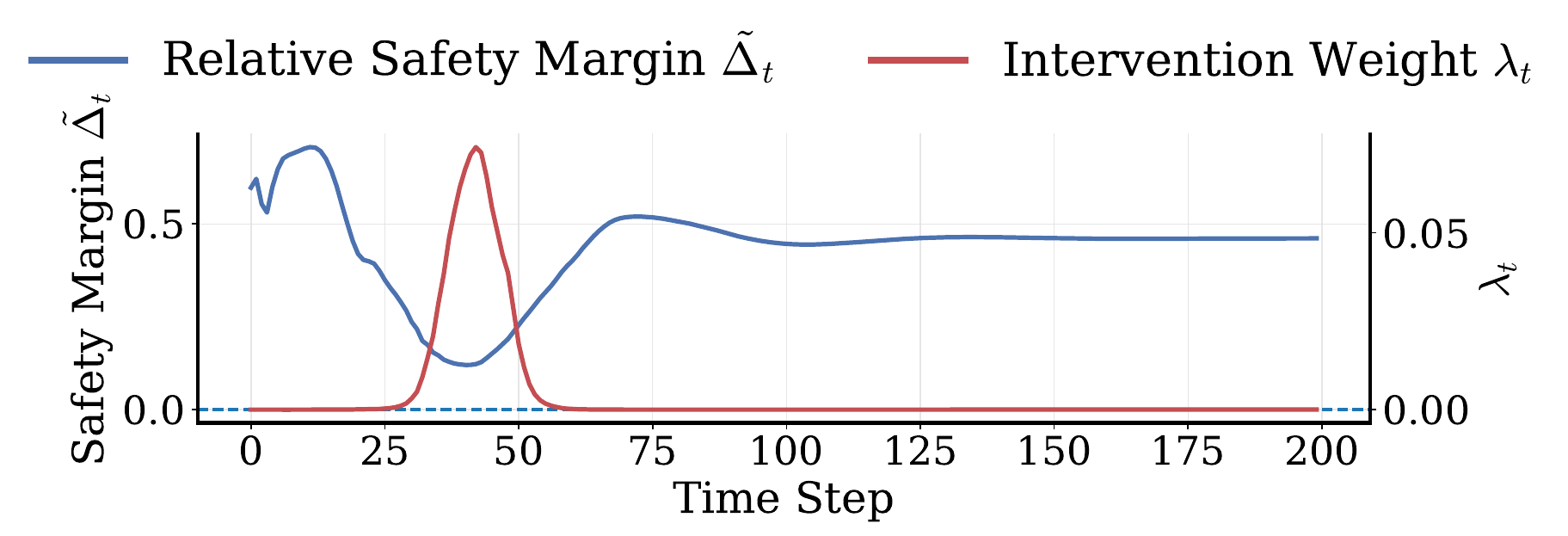}
    \includegraphics[width=0.70\linewidth]{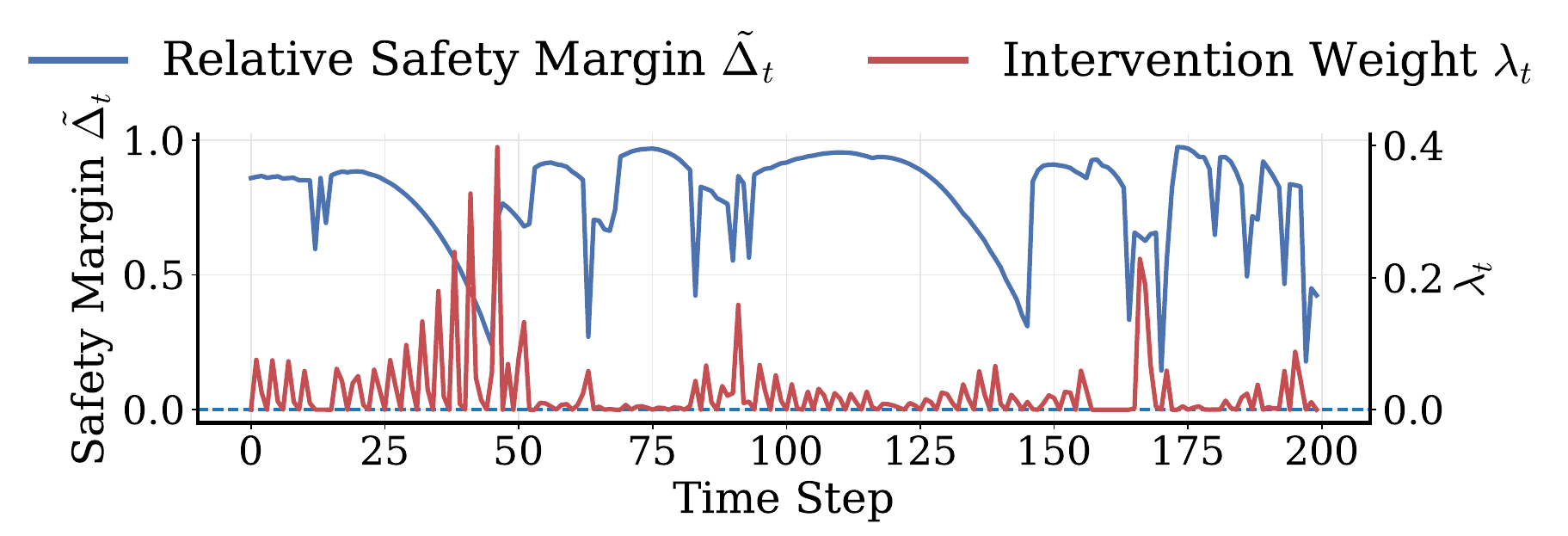}
    \caption{AutoSafe-controlled trajectories for CartPole (\textbf{Top}) and Quadrotor (\textbf{Bottom}). The intervention weight  $\lambda$ rises as the system approaches safety boundaries, enabling precautionary control, and remains low elsewhere to preserve performance-driven behavior.}
    \label{fig:trajectory}
\end{figure}

\subsection{Precautionary safety intervention}
The leverage of a safety prior enables AutoSafe to perform \emph{precautionary} safety intervention, rather than relying on last-moment triggering. As visualized in Fig.~\ref{fig:trajectory}, when the system approaches the safety boundary, the agent learns when and how strongly to intervene (e.g., in CartPole, \(\lambda\) increases primarily once the safety margin falls below \(\approx 0.3\)). 

Compared to CartPole, the Quadrotor requires more frequent and stronger interventions because the safety margin is more sensitive to state variations in the higher-dimensional state space (\(\mathbf{s} \in \mathbb{R}^{12}\) vs.\ \(\mathbf{s}\ \in \mathbb{R}^{4}\)), where deviations across multiple state components can jointly cause the safety margin to decrease more sharply. In both cases, $\lambda$ remains small most of the time, indicating that the learned policy largely maintains performance-driven behavior while being safe.
\subsection{Real world applicability}\label{sec:real_world}
The proposed architecture is favorable for safe online policy learning in real-world settings. By smoothly intervening, it avoids abrupt changes in system dynamics that can increase stochasticity and destabilize learning. Moreover, unlike CBF- or MPC-based methods~\citep{cheng2019end, grandia_multi-layered_2021}, it does not require online optimization, resulting in higher computational efficiency suitable for high-frequency control tasks. 

We demonstrate this feature in a real-world CartPole learning task
(Fig.~\ref{fig:real-world-device}), where the AutoSafe policy is deployed on an embedded device (Raspberry Pi) to ensure safe interaction, while policy parameters are periodically updated from a remote workstation. As shown in Fig.~\ref{fig:real-world-results}, AutoSafe enables the agent to interact safely with the environment while continuously improving task performance.
Over time, the parameter $\lambda$ decreases, indicating that the policy gradually reduces its reliance on the safe prior as it learns safer and more performant behaviors. Additionally, we show that AutoSafe can learn safely under dynamically changing obstacle constraints by efficiently adapting the offline-certified safe recoverable region without online optimization, as demonstrated in the additional CartPole and Quadrotor experiments in \cref{sec:autosafe_vary}.
\begin{figure}[t]
    \centering
    \begin{subfigure}[t]{0.45\linewidth}
        \centering
        \includegraphics[width=\linewidth]{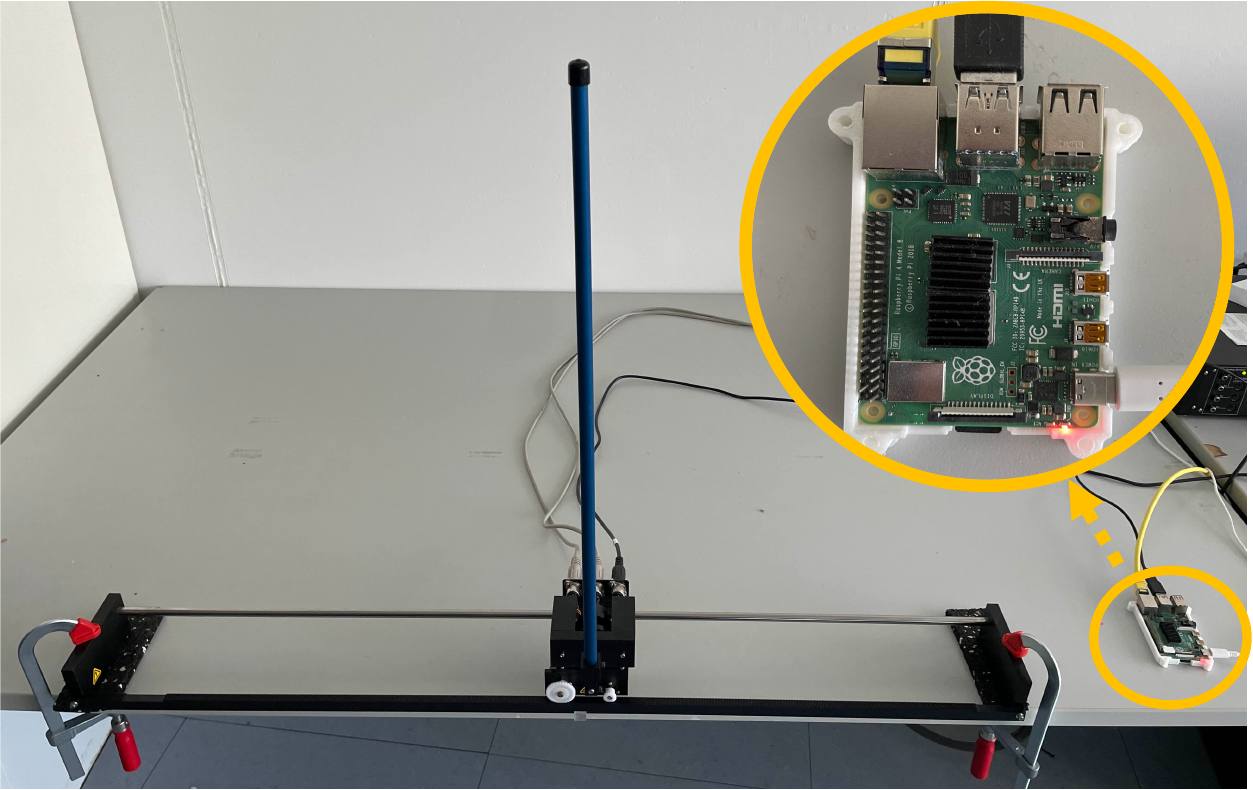}
        \caption{Real-world safe learning on a cart-pole system.}
        \label{fig:real-world-device}
    \end{subfigure}
    \hfill
    \begin{subfigure}[t]{0.52\linewidth}
        \centering
        \includegraphics[width=\linewidth]{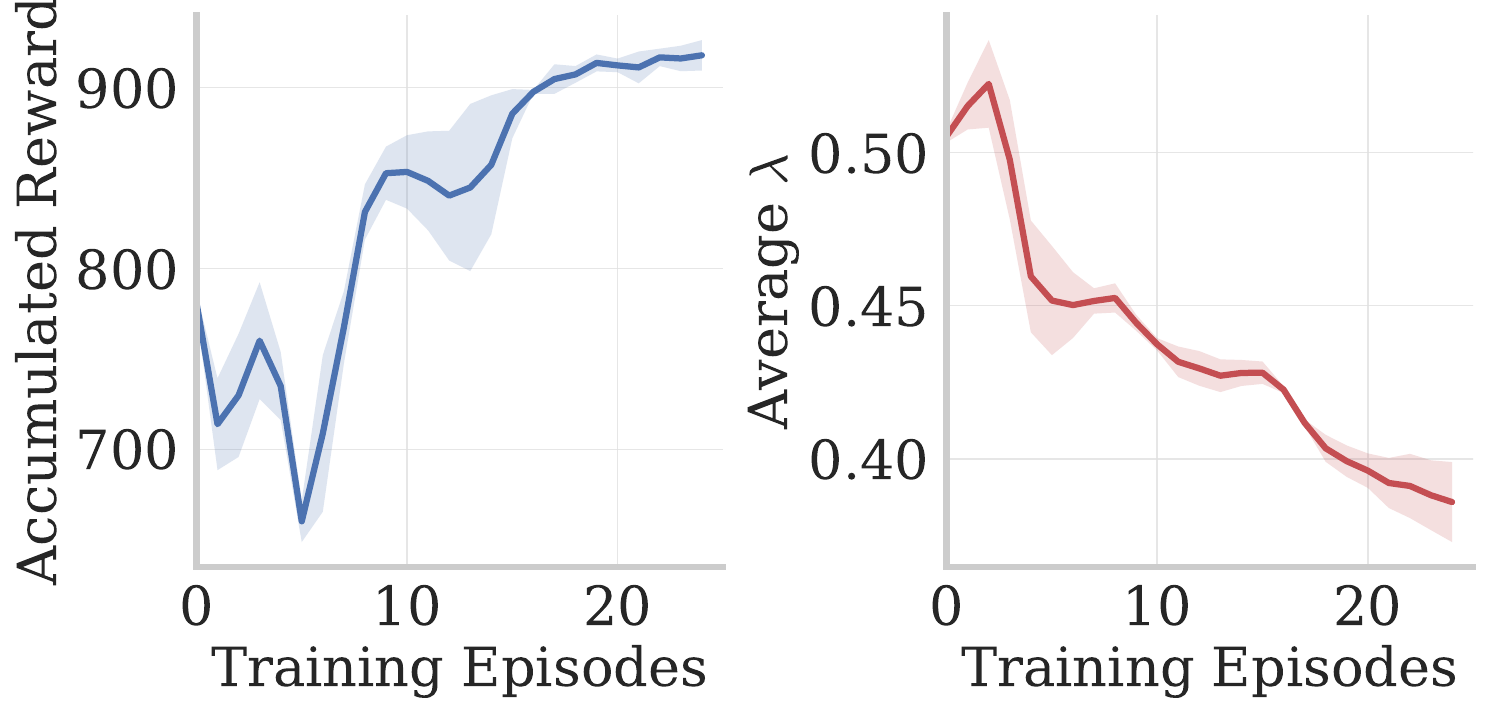}
        \caption{Experimental results over three runs.}
        \label{fig:real-world-results}
    \end{subfigure}
    \caption{Real-world experiments with AutoSafe. The agent is deployed on an embedded device, enabling safe interaction at a control frequency of $50\,\mathrm{Hz}$.}
    \label{fig:real-world}
\end{figure}
% leading to a smoother exponential ramp.
% This behavior suggests that the agent adapts to real-world stochasticity by favoring smoother control actions, as abrupt action changes can induce large disturbances and degrade stabilization performance.

\section{Related Work}
\label{sec:related-work}
\textbf{Safety as a soft constraint:}
A large body of work improves safety during learning by formulating safety requirements as soft constraints, including constrained policy optimization~\citep{wachi_safe_2020, achiam_constrained_2017}, policy-prior aided training~\citep{xie_learning_2018}, and Lyapunov-based reward shaping~\citep{westenbroek_lyapunov_2022, zhao_barrier-lyapunov_2023, caophysics}. Safety assurance in these methods is typically asymptotic or expectation-based after convergence, rather than being hard-enforced during training and deployment. An emerging direction incorporates safety priors to guide learning, using residual policies~\citep{johannink_residual_2019}, regularization~\citep{cheng2019control}, or policy fusion~\citep{rana_bayesian_2023}. However, these approaches primarily target data efficiency and performance improvement, rather than explicitly enforcing safety constraints.

\textbf{Enforce hard safety constraint:}
Hard safety guarantees are typically enforced via safety filters~\citep{hsu_safety_2024}, including action projection–based methods~\citep{ames_control_2019, cheng2019end} and action replacement mechanisms~\citep{alshiekh_safe_2018, zhong2023towards, bak_real-time_nodate, nesti_use_2025}. While effective at preventing violations, frequent or abrupt interventions can disrupt gradient flow and destabilize online and continual learning.

Recent efforts on differentiable safety filters~\citep{amos_optnet_2017, xiao_barriernet_2023, jin_safe_2021,markgraf_safe_2025, suttle_sampling-based_2024} have helped alleviate the gradient-flow issue in safe learning. However, many of these approaches still rely on solving online optimization problems~\citep{jin_safe_2021, amos_optnet_2017}, or are primarily evaluated in simplified task \cite{suttle_sampling-based_2024} or offline training settings using pre-collected datasets~\citep{xiao_barriernet_2023}. While these methods demonstrate promising capabilities, their scalability to higher-dimensional systems remains an active challenge~\citep{hsu_safety_2024}, which may limit their applicability in safe online learning scenarios where computational efficiency and timely safety intervention are critical.

\section{Conclusions and Limitations}
\label{sec:conclusions_limitations}
We introduced \textit{AutoSafe}, a policy architecture that incorporates safety monitoring and intervention directly into the action-generation process to enable smooth online reinforcement learning under hard safety constraints. By integrating safety as a structural inductive bias, the proposed framework preserves recoverable safety while maintaining stable learning dynamics and strong task performance. Experimental results on both simulated and real-world systems demonstrate that \textit{AutoSafe} can provide effective safety assurance without sacrificing learning capability.

Despite these promising results, several limitations remain for more challenging real-world training. First, similar to many provably safe learning architectures \cite{krasowski_provably_2023}, \textit{AutoSafe} builds on model-based safety design using prior knowledge of the system dynamics. The quality of the derived safe policy or safety monitor can affect the overall system performance. Although the proposed smooth policy composition is designed to remain robust to sub-optimal safe policies through the adaptive mixing parameter $\lambda$, inaccurate or overly conservative models may still limit exploration capability and achievable task performance.

In addition, the current framework primarily considers safety constraints defined in relatively local and structured state spaces. In more dynamic environments with time-varying constraints, disturbances, or moving obstacles, the safety filter backbone may need to be adapted accordingly, as shown in \cref{sec:autosafe_vary}. Maintaining safety in such scenarios may require online adaptation of the safe policy for disturbance rejection \citep{wang_l1simplex_2013} or safe set~\citep{ames_control_2019, wabersich_linear_2019}.

Future work will therefore focus on extending \textit{AutoSafe} toward more complex real-world settings, including higher-dimensional robotic systems, adaptive safety mechanisms under uncertainty, and dynamic environments with evolving safety constraints.

\bibliography{main}
\bibliographystyle{tmlr}

\appendix
\clearpage
% \appendixpage
\onecolumn
% \startcontents[sections]
% \printcontents[sections]{l}{1}{\setcounter{tocdepth}{2}}
% \newpage
\label{appendix}

\section{Supplementary Results for Risk-Calibrated Intervention}
\label{app:derivations}

This appendix provides the assumptions and derivations supporting
Sec.~\ref{sec:risk_calib}--Sec.~\ref{sec:learnable_rate}. The purpose is not to require
online estimation of all quantities in Eq.~\ref{eq:lambda_star}, but to justify the
structure of the intervention rule used by AutoSafe.

\subsection{Setup and Assumptions}

Fix a state $\mathbf{s}\in\mathcal{S}$. The learner samples a nominal action
$\mathbf{a}^{\theta}\sim\pi^\theta(\cdot\mid\mathbf{s})$, and the safe prior produces a
deterministic action
\begin{equation}
    \mathbf{a}^{\mathrm{safe}}(\mathbf{s})
    \coloneqq
    \pi^{\mathrm{safe}}(\mathbf{s}).
\end{equation}
For a state-conditioned intervention weight $\lambda\in[0,1]$, AutoSafe executes
\begin{equation}
    \tilde{\mathbf{a}}_\lambda
    =
    (1-\lambda)\mathbf{a}^{\theta}
    +
    \lambda\mathbf{a}^{\mathrm{safe}}(\mathbf{s}).
    \label{eq:app_composite_action}
\end{equation}
Throughout this appendix, $\lambda$ is treated as fixed after conditioning on
$\mathbf{s}$. If $\lambda$ is made dependent on the sampled action
$\mathbf{a}^{\theta}$, then Eq.~\ref{eq:app_composite_action} is no longer an affine map
with deterministic coefficient, and the Gaussian pushforward identities below become
approximations rather than exact distributional identities.

Assume the nominal policy is Gaussian,
\begin{equation}
    \pi^\theta(\cdot\mid\mathbf{s})
    =
    \mathcal{N}\!\left(
    \mu_\theta(\mathbf{s}),\Sigma_\theta(\mathbf{s})
    \right).
\end{equation}
Then $\tilde{\mathbf{a}}_\lambda$ is Gaussian with
\begin{equation}
    \tilde{\mathbf{a}}_\lambda
    \sim
    \mathcal{N}\!\left(
    \tilde{\mu}(\mathbf{s}),\tilde{\Sigma}(\mathbf{s})
    \right),
    \qquad
    \tilde{\mu}(\mathbf{s})
    =
    (1-\lambda)\mu_\theta(\mathbf{s})
    +
    \lambda\mathbf{a}^{\mathrm{safe}}(\mathbf{s}),
    \qquad
    \tilde{\Sigma}(\mathbf{s})
    =
    (1-\lambda)^2\Sigma_\theta(\mathbf{s}).
    \label{eq:app_mix_moments}
\end{equation}
This recovers Eq.~\ref{eq:mix_moments} in the main text. For $0\le\lambda<1$, the
differential entropy satisfies
\begin{equation}
    \mathcal{H}\!\left(
    \tilde{\pi}_\lambda(\cdot\mid\mathbf{s})
    \right)
    =
    \mathcal{H}\!\left(
    \pi^\theta(\cdot\mid\mathbf{s})
    \right)
    +
    m\log(1-\lambda),
    \label{eq:app_entropy}
\end{equation}
where $m$ is the action dimension. At $\lambda=1$, the executed distribution collapses to
a Dirac distribution at $\mathbf{a}^{\mathrm{safe}}(\mathbf{s})$ and does not have finite
differential entropy.

Safety is specified by a margin function $\Delta(\mathbf{s},\mathbf{a})$ and threshold
$\Delta_{\min}$. We define the relative margin
\begin{equation}
    \hat{\Delta}(\mathbf{s},\mathbf{a})
    \coloneqq
    \Delta(\mathbf{s},\mathbf{a})-\Delta_{\min},
    \qquad
    \hat{\Delta}(\mathbf{s},\mathbf{a})\ge 0
    \Longleftrightarrow
    \text{safe at } \mathbf{s}.
    \label{eq:app_relative_margin}
\end{equation}
The random margin under the executed action is
\begin{equation}
    Z(\lambda)
    \coloneqq
    \hat{\Delta}(\mathbf{s},\tilde{\mathbf{a}}_\lambda).
    \label{eq:app_Z_def}
\end{equation}
The chance constraint used in the main text is
\begin{equation}
    \Pr\!\left(Z(\lambda)<0\mid\mathbf{s}\right)
    \le
    \delta(\mathbf{s}).
    \label{eq:app_chance}
\end{equation}

For convenience, define
\begin{equation}
    \mu_\Delta(\mathbf{s})
    \coloneqq
    \mathbb{E}\!\left[
    \hat{\Delta}(\mathbf{s},\mathbf{a}^{\theta})
    \mid \mathbf{s}
    \right],
    \qquad
    \sigma_\Delta^2(\mathbf{s})
    \coloneqq
    \operatorname{Var}\!\left(
    \hat{\Delta}(\mathbf{s},\mathbf{a}^{\theta})
    \mid \mathbf{s}
    \right),
    \qquad
    \Delta_{\mathrm{safe}}(\mathbf{s})
    \coloneqq
    \hat{\Delta}\!\left(
    \mathbf{s},\mathbf{a}^{\mathrm{safe}}(\mathbf{s})
    \right).
    \label{eq:app_moments_defs}
\end{equation}
All expectations and variances are conditioned on the fixed state $\mathbf{s}$.

The following assumptions are used only to justify the closed-form template in
Eq.~\ref{eq:lambda_star}.

\begin{assumption}
\label{assump:affine}
For fixed $\mathbf{s}$, the relative margin
$\hat{\Delta}(\mathbf{s},\mathbf{a})$ is locally approximated by an affine function of
$\mathbf{a}$ on a neighborhood containing the interpolation segment between
$\mathbf{a}^{\theta}$ and $\mathbf{a}^{\mathrm{safe}}(\mathbf{s})$. That is, there exist
$b(\mathbf{s})\in\mathbb{R}$ and $\mathbf{g}(\mathbf{s})\in\mathbb{R}^m$ such that
\begin{equation}
    \hat{\Delta}(\mathbf{s},\mathbf{a})
    =
    b(\mathbf{s})
    +
    \mathbf{g}(\mathbf{s})^\top \mathbf{a}
    +
    \varepsilon_{\mathrm{lin}}(\mathbf{s},\mathbf{a}),
    \qquad
    \left|
    \varepsilon_{\mathrm{lin}}(\mathbf{s},\mathbf{a})
    \right|
    \le
    \epsilon_{\mathrm{lin}}(\mathbf{s})
    \label{eq:app_affine}
\end{equation}
for all actions $\mathbf{a}$ on this segment. When
$\epsilon_{\mathrm{lin}}(\mathbf{s})=0$, the margin is exactly affine along the
interpolation direction.
\end{assumption}

\begin{assumption}
\label{assump:subg_action}
Conditioned on $\mathbf{s}$, the nominal action $\mathbf{a}^{\theta}$ is sub-Gaussian:
for any $\mathbf{u}\in\mathbb{R}^m$,
$\mathbf{u}^\top(\mathbf{a}^{\theta}-\mathbb{E}[\mathbf{a}^{\theta}])$ is sub-Gaussian
with variance proxy $\mathbf{u}^\top\Sigma_\theta(\mathbf{s})\mathbf{u}$. This holds
exactly when
$\mathbf{a}^{\theta}\sim\mathcal{N}(\mu_\theta(\mathbf{s}),\Sigma_\theta(\mathbf{s}))$.
\end{assumption}

\begin{assumption}
\label{assump:reserve}
The safe prior has nonnegative relative safety margin at $\mathbf{s}$:
\begin{equation}
    \Delta_{\mathrm{safe}}(\mathbf{s})
    =
    \hat{\Delta}\!\left(
    \mathbf{s},\mathbf{a}^{\mathrm{safe}}(\mathbf{s})
    \right)
    \ge 0.
    \label{eq:app_safe_reserve}
\end{equation}
When strict reserve is required, we assume
$\Delta_{\mathrm{safe}}(\mathbf{s})>0$.
\end{assumption}

\subsection{Useful Facts}
\label{app:setup}

\begin{definition}[Sub-Gaussian random variable]
A scalar random variable $X$ is $\sigma^2$-sub-Gaussian if
\begin{equation}
    \mathbb{E}
    \left[
    \exp\!\left(t(X-\mathbb{E}[X])\right)
    \right]
    \le
    \exp\!\left(\frac{t^2\sigma^2}{2}\right),
    \qquad
    \forall t\in\mathbb{R}.
    \label{eq:app_subg_def}
\end{equation}
\end{definition}

\begin{lemma}
\label{lem:subg_tail}
If $X$ is $\sigma^2$-sub-Gaussian, then for any $t>0$,
\begin{equation}
    \Pr\!\left(X-\mathbb{E}[X]\le -t\right)
    \le
    \exp\!\left(-\frac{t^2}{2\sigma^2}\right).
    \label{eq:app_subg_tail}
\end{equation}
\end{lemma}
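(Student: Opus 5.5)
The plan is the standard Chernoff argument applied to the lower tail. Let $Y \coloneqq X-\mathbb{E}[X]$; the event $\{Y\le -t\}$ equals $\{-Y\ge t\}$. I will exponentiate this event with a free parameter, bound its probability by Markov's inequality, use the sub-Gaussian moment generating function bound from \eqref{eq:app_subg_def}, and then optimize over the parameter.

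\textbf{Step 1 (Markov).} For any $u>0$, the map $y\mapsto e^{-uy}$ is increasing in $-y$. Hence $\{Y\le -t\}=\{e^{-uY}\ge e^{ut}\}$, and Markov's inequality gives
\begin{equation*}
\Pr(Y\le -t)\le e^{-ut}\,\mathbb{E}\!\left[e^{-uY}\right].
\end{equation*}

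\textbf{Step 2 (MGF bound).} Apply \eqref{eq:app_subg_def} with the parameter $-u\in\mathbb{R}$. The definition holds for all real parameters, both positive and negative, so it covers the lower tail. This gives $\mathbb{E}[e^{-uY}]\le e^{u^2\sigma^2/2}$, and therefore $\Pr(Y\le -t)\le \exp(-ut+u^2\sigma^2/2)$.

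\textbf{Step 3 (optimize).} The exponent $-ut+u^2\sigma^2/2$ is a convex quadratic in $u$. It is minimized at $u^\star=t/\sigma^2$, which is positive since $t>0$, and the minimum value is $-t^2/(2\sigma^2)$. Substituting $u^\star$ yields \eqref{eq:app_subg_tail}.

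There is no real obstacle here. The only points needing care are that the MGF condition is invoked at a negative argument, and that $\sigma^2>0$ is assumed; if $\sigma^2=0$, then $X$ is almost surely constant and the bound holds trivially with right-hand side $0$.
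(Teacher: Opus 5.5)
Your proof is correct and is exactly the standard one-sided Chernoff argument that the paper invokes by citation to Vershynin: Markov's inequality applied to $e^{-uY}$, the sub-Gaussian MGF bound at the negative parameter $-u$, and optimization at $u^\star=t/\sigma^2$. Your remark on the degenerate case $\sigma^2=0$ is a harmless extra that the paper does not spell out.
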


\begin{proof}
This is the standard one-sided sub-Gaussian Chernoff bound; see
\citet{vershynin2018high}.
\end{proof}

\begin{corollary}
\label{cor:mean_std_subg}
Let $X$ be $\sigma^2$-sub-Gaussian. Then
\begin{equation}
    \Pr(X<0)\le\delta
    \quad\Leftarrow\quad
    \mathbb{E}[X]
    \ge
    \sqrt{2\log\!\frac{1}{\delta}}\,\sigma.
    \label{eq:app_mean_std_subg}
\end{equation}
Equivalently,
$\mathbb{E}[X]-\beta(\delta)\sigma\ge 0$ with
$\beta(\delta)=\sqrt{2\log(1/\delta)}$.
\end{corollary}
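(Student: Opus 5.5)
The plan is to apply Lemma~\ref{lem:subg_tail} directly, with $t$ equal to the mean $\mathbb{E}[X]$, and then rearrange the resulting exponential bound into the stated mean--standard-deviation condition. First I dispose of the degenerate cases. If $\delta\ge 1$ the claim is trivial, so assume $\delta\in(0,1)$, which gives $\beta(\delta)=\sqrt{2\log(1/\delta)}>0$. If $\sigma=0$, the sub-Gaussian condition with $\sigma=0$ forces $X=\mathbb{E}[X]$ almost surely. The reason is that $\mathbb{E}[\exp(t(X-\mathbb{E}X))]\le 1$ for all $t$, while Jensen's inequality gives the reverse inequality, and equality in Jensen for the strictly convex exponential forces $X-\mathbb{E}X$ to be a.s.\ constant, hence zero. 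The hypothesis then reads $\mathbb{E}[X]\ge 0$, so $\Pr(X<0)=0\le\delta$.

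In the main case, $\sigma>0$ and $\mathbb{E}[X]\ge\beta(\delta)\sigma>0$. I set $t\coloneqq\mathbb{E}[X]>0$ and write
\[
\Pr(X<0)\le\Pr(X\le 0)=\Pr\!\left(X-\mathbb{E}[X]\le -t\right)\le\exp\!\left(-\frac{\mathbb{E}[X]^2}{2\sigma^2}\right),
\]
where the last step is Lemma~\ref{lem:subg_tail}. Since $\mathbb{E}[X]\ge\beta(\delta)\sigma\ge 0$, squaring preserves the inequality and gives $\mathbb{E}[X]^2/(2\sigma^2)\ge \beta(\delta)^2/2=\log(1/\delta)$. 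The right-hand side above is therefore at most $\exp(-\log(1/\delta))=\delta$, which proves the implication.

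The equivalent formulation is just a restatement: $\mathbb{E}[X]\ge\sqrt{2\log(1/\delta)}\,\sigma$ is the same as $\mathbb{E}[X]-\beta(\delta)\sigma\ge 0$.

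The only delicate point is the edge cases, namely the strictness of $t>0$ required by Lemma~\ref{lem:subg_tail} and the case $\sigma=0$. Both are handled by the case split above; the rest is a one-line substitution.
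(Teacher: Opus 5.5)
Your proof is correct and matches the paper's argument: the paper's proof is the one-line instruction to apply Lemma~\ref{lem:subg_tail} with $t=\mathbb{E}[X]$, and your step $\mathbb{E}[X]^2/(2\sigma^2)\ge\beta(\delta)^2/2=\log(1/\delta)$ is the rearrangement it leaves implicit. Your separate treatment of $\sigma=0$ and $\delta\ge 1$, where the lemma's bound is undefined or $t=\mathbb{E}[X]$ could fail to be positive, fills in degenerate cases the paper does not mention.
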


\begin{proof}
Apply Lemma~\ref{lem:subg_tail} with $t=\mathbb{E}[X]$.
\end{proof}

\begin{corollary}[Gaussian quantile]
\label{cor:gaussian_quantile}
If $X\sim\mathcal{N}(\mu,\sigma^2)$, then
\begin{equation}
    \Pr(X<0)\le\delta
    \quad\Longleftrightarrow\quad
    \mu
    \ge
    z_{1-\delta}\sigma,
    \label{eq:app_gaussian_quantile}
\end{equation}
where $z_{1-\delta}=\Phi^{-1}(1-\delta)$ is the $(1-\delta)$ quantile of the standard
normal distribution.
\end{corollary}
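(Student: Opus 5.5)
The statement is Corollary~\ref{cor:gaussian_quantile}. I would prove it by standardizing $X$ and using strict monotonicity of the standard normal CDF $\Phi$. The degenerate case $\sigma=0$ needs separate treatment. There $X=\mu$ almost surely, so $\Pr(X<0)$ equals $0$ if $\mu\ge 0$ and $1$ if $\mu<0$. The right-hand side $\mu\ge z_{1-\delta}\cdot 0=0$ then matches the left-hand side for any $\delta\in(0,1)$. I will assume $\delta\in(0,1)$ throughout, so that $z_{1-\delta}$ is finite.

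For $\sigma>0$, write $X=\mu+\sigma Z$ with $Z\sim\mathcal{N}(0,1)$. Then
\[
\Pr(X<0)=\Pr\!\left(Z<-\mu/\sigma\right)=\Phi(-\mu/\sigma).
\]
The equality uses continuity of the Gaussian law, so that $\Pr(Z<x)=\Phi(x)$. Thus $\Pr(X<0)\le\delta$ holds if and only if $\Phi(-\mu/\sigma)\le\delta$.

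Since $\Phi$ is a continuous, strictly increasing bijection from $\mathbb{R}$ onto $(0,1)$, applying $\Phi^{-1}$ preserves the inequality in both directions. This gives $-\mu/\sigma\le\Phi^{-1}(\delta)$. By the symmetry $\Phi^{-1}(\delta)=-\Phi^{-1}(1-\delta)=-z_{1-\delta}$, this is equivalent to $\mu/\sigma\ge z_{1-\delta}$. Multiplying by $\sigma>0$ yields $\mu\ge z_{1-\delta}\sigma$.

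Every step above is an equivalence, so both directions of the biconditional follow at once. The only point needing care is that these steps really are equivalences: this relies on strict monotonicity and bijectivity of $\Phi$, together with the separate handling of $\sigma=0$ and of the endpoint values of $\delta$. Nothing else in the argument is delicate.
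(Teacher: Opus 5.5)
Your proof is correct. The paper gives no proof of this corollary; it simply states it as a standard fact right after the sub-Gaussian tail lemma. Despite the ``corollary'' label, it does not follow from that lemma, which only yields the one-directional implication with the more conservative constant $\sqrt{2\log(1/\delta)}\ge z_{1-\delta}$. The exact biconditional needs the explicit Gaussian CDF. Your argument supplies exactly that: the standardization $\Pr(X<0)=\Phi(-\mu/\sigma)$, strict monotonicity of $\Phi$, and the symmetry $\Phi^{-1}(\delta)=-z_{1-\delta}$. It therefore fills in the reasoning the paper leaves implicit. Your separate treatment of $\sigma=0$ is a small but real addition, since the paper never says whether degenerate Gaussians are allowed.

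One wording nit: your last sentence says the endpoint values of $\delta$ are handled separately, but you actually excluded them by assuming $\delta\in(0,1)$. That exclusion is the right call: for $\delta\in\{0,1\}$ the quantile $z_{1-\delta}$ is infinite, and when $\sigma=0$ the product $z_{1-\delta}\sigma$ is undefined.
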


\subsection{Closed-Form Intervention Weight}

Under Assumption~\ref{assump:affine} with
$\epsilon_{\mathrm{lin}}(\mathbf{s})=0$, the relative margin is exactly affine along the
interpolation segment. Therefore,
\begin{equation}
    Z(\lambda)
    =
    (1-\lambda)Z(0)
    +
    \lambda\Delta_{\mathrm{safe}}(\mathbf{s}).
    \label{eq:app_Z_linear}
\end{equation}
Taking moments gives
\begin{equation}
    \mathbb{E}[Z(\lambda)]
    =
    (1-\lambda)\mu_\Delta(\mathbf{s})
    +
    \lambda\Delta_{\mathrm{safe}}(\mathbf{s}),
    \qquad
    \operatorname{Var}(Z(\lambda))
    =
    (1-\lambda)^2\sigma_\Delta^2(\mathbf{s}).
    \label{eq:app_moment_interp_exact}
\end{equation}

\begin{proposition}
\label{prop:lambda_star}
Assume the sufficient tail condition
\begin{equation}
    \mathbb{E}[Z(\lambda)]
    -
    \beta\sqrt{\operatorname{Var}(Z(\lambda))}
    \ge
    0
    \label{eq:app_tail_suff}
\end{equation}
for some $\beta>0$. Under the exact interpolation identities in
Eq.~\ref{eq:app_moment_interp_exact}, define the risk-adjusted nominal margin
\begin{equation}
    L_\theta(\mathbf{s})
    \coloneqq
    \mu_\Delta(\mathbf{s})
    -
    \beta\sigma_\Delta(\mathbf{s}).
    \label{eq:app_Ltheta}
\end{equation}
Then the smallest $\lambda\in[0,1]$ satisfying Eq.~\ref{eq:app_tail_suff} is
\begin{equation}
    \lambda^*(\mathbf{s})
    =
    \mathrm{clip}_{[0,1]}
    \left(
    \frac{
    \beta\sigma_\Delta(\mathbf{s})-\mu_\Delta(\mathbf{s})
    }{
    \Delta_{\mathrm{safe}}(\mathbf{s})
    -\mu_\Delta(\mathbf{s})
    +
    \beta\sigma_\Delta(\mathbf{s})
    }
    \right)
    =
    \begin{cases}
    0, & L_\theta(\mathbf{s})\ge 0,\\[6pt]
    \dfrac{-L_\theta(\mathbf{s})}
    {\Delta_{\mathrm{safe}}(\mathbf{s})-L_\theta(\mathbf{s})},
    & L_\theta(\mathbf{s})<0.
    \end{cases}
    \label{eq:app_lambda_star}
\end{equation}
This is Eq.~\ref{eq:lambda_star} in the main text with
$\beta=\beta(\delta(\mathbf{s}))$.
\end{proposition}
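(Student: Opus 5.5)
The plan is to reduce the tail condition Eq.~\ref{eq:app_tail_suff} to a scalar affine inequality in $\lambda$ and then solve it. Substituting the exact interpolation identities Eq.~\ref{eq:app_moment_interp_exact}, and using $\sqrt{(1-\lambda)^2\sigma_\Delta^2}=(1-\lambda)\sigma_\Delta$ (valid since $\lambda\le 1$ and $\sigma_\Delta\ge 0$), the left-hand side of Eq.~\ref{eq:app_tail_suff} becomes
\[
f(\lambda)\coloneqq(1-\lambda)\mu_\Delta+\lambda\Delta_{\mathrm{safe}}-\beta(1-\lambda)\sigma_\Delta=(1-\lambda)L_\theta(\mathbf{s})+\lambda\Delta_{\mathrm{safe}}(\mathbf{s}).
\]
So the feasibility condition is simply $f(\lambda)\ge 0$, where $f$ is affine on $[0,1]$ with $f(0)=L_\theta$ and $f(1)=\Delta_{\mathrm{safe}}\ge 0$ by Assumption~\ref{assump:reserve}. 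In particular $\lambda=1$ is always feasible, so the feasible set is nonempty and closed, and its minimum exists.

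Next I split into cases on the sign of $L_\theta$. If $L_\theta\ge 0$, then $f(0)\ge 0$, so $\lambda^*=0$. If $L_\theta<0$, then $f(0)<0\le f(1)$, so the slope $\Delta_{\mathrm{safe}}-L_\theta$ is strictly positive. Hence $f$ is strictly increasing, and the feasible set is $[\lambda_0,1]$, where $\lambda_0$ is the root $\lambda_0=-L_\theta/(\Delta_{\mathrm{safe}}-L_\theta)$. This root lies in $(0,1]$ because $0<-L_\theta\le \Delta_{\mathrm{safe}}-L_\theta$. These two cases give the piecewise form in Eq.~\ref{eq:app_lambda_star}.

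Finally, I reconcile the piecewise form with the clipped expression. Writing $-L_\theta=\beta\sigma_\Delta-\mu_\Delta$ and $\Delta_{\mathrm{safe}}-L_\theta=\Delta_{\mathrm{safe}}-\mu_\Delta+\beta\sigma_\Delta$ shows that the unclipped ratio equals $\lambda_0$.

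The main obstacle is the clipping in the case $L_\theta\ge 0$. Here the ratio is $\le 0$ whenever its denominator is positive, so clipping returns $0$ and matches. The degenerate cases need separate handling:
\begin{itemize}
\item If the denominator $\Delta_{\mathrm{safe}}-L_\theta$ vanishes, this forces $L_\theta=\Delta_{\mathrm{safe}}=0$, and I will adopt the convention that the clip yields $0$.
\item If the denominator is negative, this requires $L_\theta>\Delta_{\mathrm{safe}}\ge 0$, and the numerator is also negative, so the ratio is positive and the clip is no longer trivially $0$.
\end{itemize}
I will note that in this last case the case-form $\lambda^*=0$ is the correct minimizer, and I will read the clipped formula as shorthand for the piecewise definition, i.e.\ the clip applies only when $L_\theta<0$. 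I will state this convention explicitly rather than claim identity in every regime. Over the stated range of $\lambda$, the case-form is the rigorous conclusion.
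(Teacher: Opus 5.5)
Your proof is correct and takes the paper's route: substitute the exact interpolation moments to get $(1-\lambda)L_\theta(\mathbf{s})+\lambda\Delta_{\mathrm{safe}}(\mathbf{s})\ge 0$, then split on the sign of $L_\theta$; your appeal to Assumption~\ref{assump:reserve} supplies the positivity of $\Delta_{\mathrm{safe}}-L_\theta$ that the paper's ``solving'' step uses without comment. Your caveat about the clip is also right and points to a flaw in the stated identity, not in your argument: when $L_\theta>\Delta_{\mathrm{safe}}\ge 0$ the unclipped ratio equals $L_\theta/(L_\theta-\Delta_{\mathrm{safe}})\ge 1$, so the clip returns $1$ while the true minimizer is $0$; one small slip is that a vanishing denominator forces only $L_\theta=\Delta_{\mathrm{safe}}$, not both equal to zero, though your convention still yields the correct value $0$ there.
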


\begin{proof}
Using Eq.~\ref{eq:app_moment_interp_exact},
\begin{align}
    \mathbb{E}[Z(\lambda)]
    -
    \beta\sqrt{\operatorname{Var}(Z(\lambda))}
    &=
    (1-\lambda)\mu_\Delta
    +
    \lambda\Delta_{\mathrm{safe}}
    -
    \beta(1-\lambda)\sigma_\Delta \\
    &=
    (1-\lambda)L_\theta
    +
    \lambda\Delta_{\mathrm{safe}}.
\end{align}
If $L_\theta\ge 0$, the condition holds at $\lambda=0$, so $\lambda^*=0$. If
$L_\theta<0$, solving
\[
    (1-\lambda)L_\theta+\lambda\Delta_{\mathrm{safe}}\ge 0
\]
gives
\[
    \lambda
    \ge
    \frac{-L_\theta}{\Delta_{\mathrm{safe}}-L_\theta}.
\]
Clamping to $[0,1]$ yields Eq.~\ref{eq:app_lambda_star}.
\end{proof}

The expression in Eq.~\ref{eq:app_lambda_star} has a geometric interpretation: the
intervention weight is the ratio between the nominal safety deficit,
$\max(0,-L_\theta(\mathbf{s}))$, and the total safety improvement available by moving
from the nominal policy toward the safe prior,
$\Delta_{\mathrm{safe}}(\mathbf{s})-L_\theta(\mathbf{s})$.

\subsection{Robustness to Local Approximation Error}

The exact derivation above assumes that the safety margin is affine along the
interpolation segment. We next state a conservative perturbation bound when the local
affine approximation has bounded error.

\begin{proposition}
\label{prop:approx_interp}
Under Assumption~\ref{assump:affine}, define the ideal interpolated margin
\begin{equation}
    Z_{\mathrm{int}}(\lambda)
    \coloneqq
    (1-\lambda)Z(0)
    +
    \lambda\Delta_{\mathrm{safe}}(\mathbf{s}).
\end{equation}
Then
\begin{equation}
    \left|
    Z(\lambda)-Z_{\mathrm{int}}(\lambda)
    \right|
    \le
    2\epsilon_{\mathrm{lin}}(\mathbf{s}).
    \label{eq:app_pointwise_err}
\end{equation}
Consequently,
\begin{equation}
    \left|
    \mathbb{E}[Z(\lambda)]
    -
    \left(
    (1-\lambda)\mu_\Delta
    +
    \lambda\Delta_{\mathrm{safe}}
    \right)
    \right|
    \le
    2\epsilon_{\mathrm{lin}}(\mathbf{s}),
    \label{eq:app_mean_err}
\end{equation}
and
\begin{equation}
    \sqrt{\operatorname{Var}(Z(\lambda))}
    \le
    (1-\lambda)\sigma_\Delta(\mathbf{s})
    +
    2\epsilon_{\mathrm{lin}}(\mathbf{s}).
    \label{eq:app_std_err}
\end{equation}
\end{proposition}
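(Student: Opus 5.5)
The plan is to write the affine part of the margin exactly, isolate the two residual terms, and then pass the pointwise bound to the mean and the standard deviation. Write $\ell(\mathbf{a}) \coloneqq b(\mathbf{s}) + \mathbf{g}(\mathbf{s})^\top\mathbf{a}$. By Assumption~\ref{assump:affine}, for every action $\mathbf{a}$ on the interpolation segment we have $\hat{\Delta}(\mathbf{s},\mathbf{a}) = \ell(\mathbf{a}) + \varepsilon_{\mathrm{lin}}(\mathbf{s},\mathbf{a})$ with $|\varepsilon_{\mathrm{lin}}|\le\epsilon_{\mathrm{lin}}(\mathbf{s})$.

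Since $\ell$ is affine and $\tilde{\mathbf{a}}_\lambda$ is a convex combination of $\mathbf{a}^\theta$ and $\mathbf{a}^{\mathrm{safe}}$,
\[
\ell(\tilde{\mathbf{a}}_\lambda) = (1-\lambda)\ell(\mathbf{a}^\theta) + \lambda\,\ell(\mathbf{a}^{\mathrm{safe}}).
\]
Substituting $\ell(\mathbf{a}) = \hat{\Delta}(\mathbf{s},\mathbf{a}) - \varepsilon_{\mathrm{lin}}(\mathbf{s},\mathbf{a})$ at all three points gives
\[
Z(\lambda) - Z_{\mathrm{int}}(\lambda) = \varepsilon_{\mathrm{lin}}(\mathbf{s},\tilde{\mathbf{a}}_\lambda) - (1-\lambda)\,\varepsilon_{\mathrm{lin}}(\mathbf{s},\mathbf{a}^\theta) - \lambda\,\varepsilon_{\mathrm{lin}}(\mathbf{s},\mathbf{a}^{\mathrm{safe}}).
\]
The triangle inequality bounds the right-hand side by $\epsilon_{\mathrm{lin}} + (1-\lambda)\epsilon_{\mathrm{lin}} + \lambda\epsilon_{\mathrm{lin}} = 2\epsilon_{\mathrm{lin}}$. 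This proves Eq.~\ref{eq:app_pointwise_err} for every realization of $\mathbf{a}^\theta$.

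For the mean bound, note that $\mathbb{E}[Z_{\mathrm{int}}(\lambda)] = (1-\lambda)\mu_\Delta + \lambda\Delta_{\mathrm{safe}}$, because $Z(0)=\hat{\Delta}(\mathbf{s},\mathbf{a}^\theta)$ and $\Delta_{\mathrm{safe}}$ is deterministic given $\mathbf{s}$. Taking expectations of the pointwise bound and using $|\mathbb{E}X|\le\mathbb{E}|X|$ gives Eq.~\ref{eq:app_mean_err}.

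For the standard-deviation bound, apply Minkowski's inequality in $L^2$ to the centered variables, i.e.\ the triangle inequality for $\sqrt{\operatorname{Var}(\cdot)}$:
\[
\sqrt{\operatorname{Var}(Z)} \le \sqrt{\operatorname{Var}(Z_{\mathrm{int}})} + \sqrt{\operatorname{Var}(Z-Z_{\mathrm{int}})}.
\]
The first term equals $(1-\lambda)\sigma_\Delta$, since $Z_{\mathrm{int}}$ is an affine function of $Z(0)$ with slope $1-\lambda$. For the second term, the variance of a random variable bounded by $c$ in absolute value satisfies $\operatorname{Var}\le\mathbb{E}[X^2]\le c^2$. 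With $c = 2\epsilon_{\mathrm{lin}}$ this gives Eq.~\ref{eq:app_std_err}.

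The main obstacle is not the computation but the domain of validity. Assumption~\ref{assump:affine} controls the error only for actions on the segment. So I must check that all three evaluation points, $\mathbf{a}^\theta$, $\mathbf{a}^{\mathrm{safe}}$ and $\tilde{\mathbf{a}}_\lambda$, lie on the segment between $\mathbf{a}^\theta$ and $\mathbf{a}^{\mathrm{safe}}$ for each realization; they are its two endpoints and a convex combination of them, so they do. I must also make sure $b$ and $\mathbf{g}$ do not depend on the realization of $\mathbf{a}^\theta$, which the assumption provides since they depend only on $\mathbf{s}$. With these points checked, the affine cancellation above holds almost surely, and the remaining steps are routine.
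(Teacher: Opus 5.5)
Your proposal is correct and matches the paper's proof essentially step for step: the same residual identity $Z(\lambda)-Z_{\mathrm{int}}(\lambda)=\varepsilon_{\lambda}-(1-\lambda)\varepsilon_{0}-\lambda\varepsilon_{\mathrm{safe}}$ bounded by the triangle inequality, then expectations of the pointwise bound for the mean, then the triangle inequality for the standard deviation applied to $Z(\lambda)=Z_{\mathrm{int}}(\lambda)+\xi_\lambda$ with $|\xi_\lambda|\le 2\epsilon_{\mathrm{lin}}$. You spell out points the paper leaves implicit ($\operatorname{Std}[Z_{\mathrm{int}}(\lambda)]=(1-\lambda)\sigma_\Delta$, and that a variable bounded by $c$ has standard deviation at most $c$), and your concern that $b,\mathbf{g}$ might depend on the realization is harmless but unnecessary, since the affine cancellation holds for each realization and any affine function.
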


\begin{proof}
Let
\[
    \varepsilon_\lambda
    =
    \varepsilon_{\mathrm{lin}}(\mathbf{s},\tilde{\mathbf{a}}_\lambda),
    \quad
    \varepsilon_0
    =
    \varepsilon_{\mathrm{lin}}(\mathbf{s},\mathbf{a}^{\theta}),
    \quad
    \varepsilon_{\mathrm{safe}}
    =
    \varepsilon_{\mathrm{lin}}(\mathbf{s},\mathbf{a}^{\mathrm{safe}}).
\]
By Assumption~\ref{assump:affine},
\[
    Z(\lambda)-Z_{\mathrm{int}}(\lambda)
    =
    \varepsilon_\lambda
    -
    (1-\lambda)\varepsilon_0
    -
    \lambda\varepsilon_{\mathrm{safe}}.
\]
Since each error term has magnitude at most
$\epsilon_{\mathrm{lin}}(\mathbf{s})$,
\[
    \left|
    Z(\lambda)-Z_{\mathrm{int}}(\lambda)
    \right|
    \le
    \epsilon_{\mathrm{lin}}
    +
    (1-\lambda)\epsilon_{\mathrm{lin}}
    +
    \lambda\epsilon_{\mathrm{lin}}
    =
    2\epsilon_{\mathrm{lin}}.
\]
Taking expectations gives Eq.~\ref{eq:app_mean_err}. For the standard deviation bound,
write
\[
    Z(\lambda)
    =
    Z_{\mathrm{int}}(\lambda)
    +
    \xi_\lambda,
    \qquad
    |\xi_\lambda|\le 2\epsilon_{\mathrm{lin}}.
\]
By the triangle inequality for standard deviation,
\[
    \operatorname{Std}[Z(\lambda)]
    \le
    \operatorname{Std}[Z_{\mathrm{int}}(\lambda)]
    +
    \operatorname{Std}[\xi_\lambda]
    \le
    (1-\lambda)\sigma_\Delta
    +
    2\epsilon_{\mathrm{lin}}.
\]
\end{proof}
It follows that a conservative sufficient condition under local approximation error is
\begin{equation}
    (1-\lambda)L_\theta(\mathbf{s})
    +
    \lambda\Delta_{\mathrm{safe}}(\mathbf{s})
    \ge
    2(1+\beta)\epsilon_{\mathrm{lin}}(\mathbf{s}).
    \label{eq:app_robust_suff}
\end{equation}
Indeed, Eq.~\ref{eq:app_mean_err} and Eq.~\ref{eq:app_std_err} imply
\begin{align}
    \mathbb{E}[Z(\lambda)]
    -
    \beta\operatorname{Std}[Z(\lambda)]
    \ge
    (1-\lambda)L_\theta
    +
    \lambda\Delta_{\mathrm{safe}}
    -
    2(1+\beta)\epsilon_{\mathrm{lin}}.
\end{align}
Thus, the same risk buffer $\beta\sigma_\Delta$ that controls stochastic tail risk also
provides robustness to local certificate mismatch when it dominates the linearization
error.

\subsection{Near-Determinism at the Boundary}
\label{app:hardening}

The main text motivates near-boundary determinism by requiring the tail budget to become
more conservative as the system approaches the safety boundary. The following result
formalizes this limiting behavior for the closed-form template.

\begin{proposition}
\label{prop:hardening_limit}
Assume $\sigma_\Delta(\mathbf{s})>0$ and
$\Delta_{\mathrm{safe}}(\mathbf{s})\in(0,\infty)$. Let the risk tolerance be
\[
    \delta(\mathbf{s})
    =
    \exp\!\left(-\rho(\tilde{\Delta}(\mathbf{s}))\right),
\]
where $\tilde{\Delta}(\mathbf{s})\in[0,1]$ is a normalized margin with
$\tilde{\Delta}(\mathbf{s})=0$ at the safety boundary, and
$\rho(x)\to\infty$ as $x\downarrow 0$. If the sub-Gaussian quantile
$\beta(\delta)=\sqrt{2\log(1/\delta)}$ is used, then for any sequence of states with
$\tilde{\Delta}(\mathbf{s})\downarrow 0$,
\begin{equation}
    \beta(\delta(\mathbf{s}))\to\infty
    \quad\Longrightarrow\quad
    \lambda^*(\mathbf{s})\to 1
    \quad\Longrightarrow\quad
    \tilde{\Sigma}(\mathbf{s})
    =
    (1-\lambda^*(\mathbf{s}))^2\Sigma_\theta(\mathbf{s})
    \to 0.
    \label{eq:app_hardening_limit}
\end{equation}
\end{proposition}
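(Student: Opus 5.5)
The proof is a chain of three elementary limits, each fed by the previous one; the only substantive step is the middle one, which uses the closed form of Proposition~\ref{prop:lambda_star}.

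\textbf{Step 1: the quantile diverges.} Since $\delta(\mathbf{s})=\exp(-\rho(\tilde{\Delta}(\mathbf{s})))$, we have $\log(1/\delta(\mathbf{s}))=\rho(\tilde{\Delta}(\mathbf{s}))$. Hence
\[
\beta(\delta(\mathbf{s}))=\sqrt{2\rho(\tilde{\Delta}(\mathbf{s}))},
\]
and this tends to $\infty$ along any sequence with $\tilde{\Delta}(\mathbf{s})\downarrow 0$, because $\rho(x)\to\infty$ as $x\downarrow 0$. Eventually $\rho>0$, so $\delta<1$ and $\beta$ is a well-defined positive number, as Proposition~\ref{prop:lambda_star} requires.

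\textbf{Step 2: $\lambda^*\to 1$ (the main step).} Write $\beta_k$, $\mu_k$, $\sigma_k$, $\Delta^{\mathrm{safe}}_k$ for the quantities along the sequence. Since $\sigma_k>0$ and $\beta_k\to\infty$, we get $L_\theta=\mu_k-\beta_k\sigma_k<0$ eventually. In that regime Proposition~\ref{prop:lambda_star} gives
\[
1-\lambda^*=\frac{\Delta^{\mathrm{safe}}_k}{\Delta^{\mathrm{safe}}_k-\mu_k+\beta_k\sigma_k}.
\]
This is the point that needs care, since the statement's hypotheses are pointwise while we pass along a sequence of states. I would make the implicit regularity explicit: along the sequence, $\Delta^{\mathrm{safe}}_k$ stays bounded above, $\mu_k$ stays bounded above, and $\sigma_k$ stays bounded below by a positive constant. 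These hold, for example, by continuity of the moments in $\mathbf{s}$ toward a limit boundary state where $\sigma_\Delta>0$ and $\Delta_{\mathrm{safe}}<\infty$. Under these bounds the denominator tends to $+\infty$ while the numerator stays bounded, so $1-\lambda^*\to 0$.

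If one prefers to avoid sequences, the same argument can be phrased at a fixed state $\mathbf{s}$ with the moments held fixed and $\beta\to\infty$. There the limit is immediate:
\[
1-\lambda^*=\frac{\Delta_{\mathrm{safe}}}{\Delta_{\mathrm{safe}}-\mu_\Delta+\beta\sigma_\Delta}\to 0 .
\]
Either way, nonnegativity of the denominator is not an issue: once $L_\theta<0$, we have $\Delta_{\mathrm{safe}}-L_\theta>0$ by Assumption~\ref{assump:reserve}.

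\textbf{Step 3: the covariance collapses.} By Eq.~\ref{eq:app_mix_moments}, $\tilde{\Sigma}(\mathbf{s})=(1-\lambda^*(\mathbf{s}))^2\Sigma_\theta(\mathbf{s})$. The factor $(1-\lambda^*)^2$ tends to $0$ by Step~2. Provided $\Sigma_\theta(\mathbf{s})$ stays bounded along the sequence, which is trivial at a fixed state and holds by continuity otherwise, we conclude $\tilde{\Sigma}\to 0$ in any matrix norm.
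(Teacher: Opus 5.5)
Your proof is correct and follows the paper's argument: the quantile diverges, you substitute it into the closed form of Proposition~\ref{prop:lambda_star}, let $\beta\to\infty$ with the moments held fixed, and read off the covariance collapse from Eq.~\ref{eq:app_mix_moments}. Your exact identity $1-\lambda^*=\Delta_{\mathrm{safe}}/(\Delta_{\mathrm{safe}}-\mu_\Delta+\beta\sigma_\Delta)$ and your explicit regularity conditions for passing along a sequence of states are tidier than the paper's informal ``dominant terms'' step, which silently treats $\mu_\Delta,\sigma_\Delta,\Delta_{\mathrm{safe}}$ as constant along the sequence.
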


\begin{proof}
Using Eq.~\ref{eq:app_lambda_star}, for sufficiently large $\beta$,
\[
    \lambda^*(\mathbf{s})
    =
    \frac{
    \beta\sigma_\Delta-\mu_\Delta
    }{
    \Delta_{\mathrm{safe}}-\mu_\Delta+\beta\sigma_\Delta
    }.
\]
As $\beta\to\infty$, the dominant terms give
\[
    \lambda^*(\mathbf{s})
    \to
    \frac{\beta\sigma_\Delta}{\Delta_{\mathrm{safe}}+\beta\sigma_\Delta}
    =
    1
    -
    \frac{\Delta_{\mathrm{safe}}}
    {\Delta_{\mathrm{safe}}+\beta\sigma_\Delta}
    \to
    1,
\]
because $\sigma_\Delta>0$ and $\Delta_{\mathrm{safe}}<\infty$. The covariance claim then
follows immediately from Eq.~\ref{eq:app_mix_moments}.
\end{proof}

Proposition~\ref{prop:hardening_limit} motivates the endpoint behavior of the implemented
rate, but AutoSafe does not need to estimate the risk potential $\rho$ explicitly.
Instead, Eq.~\ref{eq:shifted_boltzmann} enforces the boundary behavior by construction:
$\lambda(0,p)=1$ and $\lambda(1,p)=0$ for all $p>0$. The learnable sharpness parameter
$p$ controls the finite-margin intervention profile, while the covariance collapse at the
boundary follows directly from the affine policy composition.

\newpage
\section{Safety Design for Simplex}
\label{app:lmi}

There exist many approaches for safe policy design. In this work, we adopt the procedure introduced in~\citep{seto1999case} to synthesize a robust safe controller and derive a certified safety envelope using an over-approximated system model. In this appendix, we summarize the general formulation and solution procedure. Detailed implementation and experiment-specific parameters are provided in the supplementary code.

Consider the nonlinear system
\begin{align}
\dot{\mathbf{s}}
=
\underbrace{\mathbf{A}\mathbf{s} + \mathbf{B}\mathbf{a}}_{\text{known}}
+
\underbrace{\mathbf{f}(\mathbf{s},\mathbf{a})}_{\text{unknown}},
\label{eq:real_system}
\end{align}
where $\mathbf{s}\in\mathbb{R}^n$ and $\mathbf{a}\in\mathbb{R}^m$ denote the state and action, respectively.

The objective is to design a robust safe controller that keeps the system within the safety set
\begin{align}
\mathcal{S}_f
\coloneqq
\left\{
\mathbf{s}\in\mathbb{R}^n
\mid
\mathbf{A}_s \mathbf{s}\le \mathbf{b}_s
\right\},
\label{eq:safety_set}
\end{align}
under the action constraints
\begin{align}
\mathcal{A}_f
\coloneqq
\left\{
\mathbf{a}\in\mathbb{R}^m
\mid
\mathbf{A}_a \mathbf{a}\le \mathbf{b}_a
\right\}.
\end{align}

Here, $\mathbf{A}_s\in\mathbb{R}^{n_s\times n}$ and $\mathbf{b}_s\in\mathbb{R}^{n_s}$ define the state constraints, while $\mathbf{A}_a\in\mathbb{R}^{n_a\times m}$ and $\mathbf{b}_a\in\mathbb{R}^{n_a}$ define the action constraints.

\begin{definition}[Safety Definition] Consider the safety set $\mathcal{S}_f$ Eq.~\ref{eq:safety_set}. The system Eq.~\ref{eq:real_system}is said to be safe, if given any $\mathbf{s}_t \in \mathcal{S}_f$, the $\mathbf{s}_{t+1} \in \mathcal{S}_f$ holds for any time $t \in \mathbb{N}$. \label{defsafety} \end{definition}

Directly designing a verifiable safe controller for the nonlinear system~Eq.~\ref{eq:real_system} is difficult due to the unknown nonlinear term $\mathbf{f}(\mathbf{s},\mathbf{a})$. Instead, we consider the linear over-approximation~\citep{simplex}
\begin{align}
\dot{\mathbf{s}}
=
\mathbf{A}\mathbf{s} + \mathbf{B}\mathbf{a}.
\label{eq:linear_model}
\end{align}

We parameterize the safe controller as a linear feedback policy
\begin{align}
\mathbf{a} = \mathbf{F}\mathbf{s},
\end{align}
where $\mathbf{F}\in\mathbb{R}^{m\times n}$ is the controller gain matrix. The resulting closed-loop dynamics become
\begin{align}
\dot{\mathbf{s}}
=
\bar{\mathbf{A}}\mathbf{s},
\qquad
\bar{\mathbf{A}}
=
\mathbf{A}+\mathbf{B}\mathbf{F}.
\label{eq:closed_loop}
\end{align}

To jointly enforce state and action constraints, we define the unified constraint matrix
\begin{align}
\mathbf{D}
=
\begin{bmatrix}
\mathbf{A}_s \operatorname{diag}(\mathbf{b}_s)^{-1}
\\
\mathbf{A}_a \mathbf{F} \operatorname{diag}(\mathbf{b}_a)^{-1}
\end{bmatrix},
\end{align}
such that the constraints can be compactly written as
\begin{align}
\mathbf{D}\mathbf{s}\le \mathbf{1}.
\end{align}

The objective is to find a controller gain $\mathbf{F}$ such that the closed-loop system is asymptotically stable and admits a certified forward-invariant subset of the safety set.

\begin{definition}[Quadratic Stability~\citep{seto1999case}]
The closed-loop system Eq.~\ref{eq:closed_loop} is quadratically stable if there exists a positive definite matrix $\mathbf{P}\succ0$ such that the quadratic Lyapunov function
\begin{align}
V(\mathbf{s})
=
\mathbf{s}^\top \mathbf{P}\mathbf{s}
\end{align}
has negative derivative along all trajectories of the system.
\label{def:quadratic_stability}
\end{definition}

The derivative of the Lyapunov function is
\begin{align}
\dot{V}(\mathbf{s})
=
\mathbf{s}^\top
\left(
\bar{\mathbf{A}}^\top \mathbf{P}
+
\mathbf{P}\bar{\mathbf{A}}
\right)
\mathbf{s}.
\end{align}

Therefore, the closed-loop system is asymptotically stable if there exists $\mathbf{P}\succ0$ satisfying
\begin{align}
\bar{\mathbf{A}}^\top \mathbf{P}
+
\mathbf{P}\bar{\mathbf{A}}
\preceq 0.
\label{eq:lyapunov_condition}
\end{align}

The corresponding certified stability region is the ellipsoid
\begin{align}
\Omega
\triangleq
\left\{
\mathbf{s}\in\mathbb{R}^n
\mid
\mathbf{s}^\top \mathbf{P}\mathbf{s}\le 1
\right\}.
\label{eq:stability_region}
\end{align}

To maximize the size of this certified region, we introduce the change of variables
\begin{align}
\mathbf{Q} = \mathbf{P}^{-1},
\qquad
\mathbf{R} = \mathbf{FQ},
\end{align}
and formulate the following semidefinite program:
\begin{align}
\min_{\mathbf{Q}\succ0,\mathbf{R}}
\quad
& -\log\det(\mathbf{Q})
\label{eq:sdp_obj}
\\
\text{s.t.}\quad
&
\mathbf{D}\mathbf{Q}\mathbf{D}^\top
\preceq
\mathbf{I},
\label{eq:sdp_constraint_state}
\\
&
\begin{bmatrix}
\alpha \mathbf{Q}
&
(\mathbf{A}\mathbf{Q}+\mathbf{B}\mathbf{R})^\top
\\
\mathbf{A}\mathbf{Q}+\mathbf{B}\mathbf{R}
&
\mathbf{Q}
\end{bmatrix}
\succeq 0,
\label{eq:sdp_constraint_stability}
\\
&
\begin{bmatrix}
\mathbf{Q}
&
\mathbf{R}^\top
\\
\mathbf{R}
&
\frac{1}{\beta}\mathbf{I}_m
\end{bmatrix}
\succeq 0.
\label{eq:sdp_constraint_action}
\end{align}

Solving the SDP in~Eq.~\ref{eq:sdp_obj}--Eq.~\ref{eq:sdp_constraint_action} yields $\mathbf{Q}$ and $\mathbf{R}$, from which the Lyapunov matrix and controller gain are recovered as
\begin{align}
\mathbf{P}
=
\mathbf{Q}^{-1},
\qquad
\mathbf{F}
=
\mathbf{R}\mathbf{Q}^{-1}.
\end{align}

The matrices $\mathbf{P}$ and $\mathbf{F}$, together with the system matrices $\mathbf{A}$ and $\mathbf{B}$ used in the experiments, are provided in the accompanying code.

% To handle the model mismatch, we can add the upper bound (worst case obtained) for the model-mismatch $\mathbf{f(s,a)}$ to \ref{eq:v_derivitive_linear} as:
% \begin{align}
%     \dot{V} &= \mathbf{s}^\top(\overline{\mathbf{A}}^\top \mathbf{P} + \mathbf{P}\overline{\mathbf{A}}) \mathbf{s} + 2 \cdot \mathbf{s}^\top \mathbf{P} \cdot \mathbf{f(s,a)} \nonumber \\
%             &< \mathbf{s}^\top(\overline{\mathbf{A}}^\top \mathbf{P} + \mathbf{P}\overline{\mathbf{A}}) \mathbf{s} + 2 \left\| \mathbf{s}^\top \mathbf{P} \right\| \cdot \left \| \mathbf{f(s,a)} \right \| \nonumber \\
%             &< \mathbf{s}^\top(\overline{\mathbf{A}}^\top \mathbf{P} + \mathbf{P}\overline{\mathbf{A}}) \mathbf{s} + \delta \nonumber < 0 \\
% \end{align}

% We then update the above formulation as: 
% \begin{align}
%     \mathbf{s}^\top(\overline{\mathbf{A}}^\top \mathbf{P} + \mathbf{P}\overline{\mathbf{A}}) \mathbf{s} <  - \delta
% \end{align}

\newpage
\section{Experimental Details}\label{app:exp}
\subsection{Algorithm Implementations}

\subsubsection{Soft actor-critic (SAC) Baseline}
Our implementation of Soft Actor-Critic follows~\citep{sac} and~\citep{fujimoto_addressing_2018}, with default parameters summarized in Table~\ref{tab:sac_hyperparams}. These parameters are shared across all baseline algorithms unless specified otherwise. 
\begin{table}[ht]
    \centering
    \renewcommand{\arraystretch}{1.2}
    \caption{Parameter setting for Soft Actor-Critic (SAC)}
    \label{tab:sac_hyperparams}
    \begin{tabular}{l c}
        \hline
        \textbf{Hyperparameter} & \textbf{Value} \\
        \hline
        Discount factor ($\gamma$) & 0.99 \\
        Learning rate (actor, critic) & $3 \times 10^{-4}$ \\
        Optimizer & Adam \\
        Target smoothing coefficient ($\tau$) & 0.005 \\
        Entropy coefficient & 0.1 \\
        Target update interval & 1 \\
        Activation function & ReLU \\   
        Training steps per environment step & 1 \\
        Evaluation period (steps) & 10000 \\ 
        Neural Network (MLP) & [256, 256] \\
        Batch size & 128\\
        \hline
    \end{tabular}
\end{table}

\begin{table}[ht]
    \centering
    \renewcommand{\arraystretch}{1.2}
    \caption{Task Specific Parameter Setting}
    \label{tab:sac_hyperparams}
    \begin{tabular}{l c c c c }
        \hline
        \textbf{Hyperparameter} & \textbf{CartPole} & \textbf{Glucose} & \textbf{Quadrotor} & \textbf{Quadruped} \\
        \hline
        Replay Memory size & $2*10^5$  &$1*10^5$ & $1*10^6$  & $5*10^6$ \\
        Entropy coefficient & 0.1 & 0.2 & 0.005 & 0.005\\
        Total training steps & $2*10^5$  &$1*10^5$ &$1*10^6$ &$5*10^5$\\
        Maximum steps per episode & 500  &200 &1000 &3000 \\
        \hline
    \end{tabular}
\end{table}

\subsubsection{AutoSafe}

\begin{figure}
  \centering
  \includegraphics[width=0.95\textwidth]{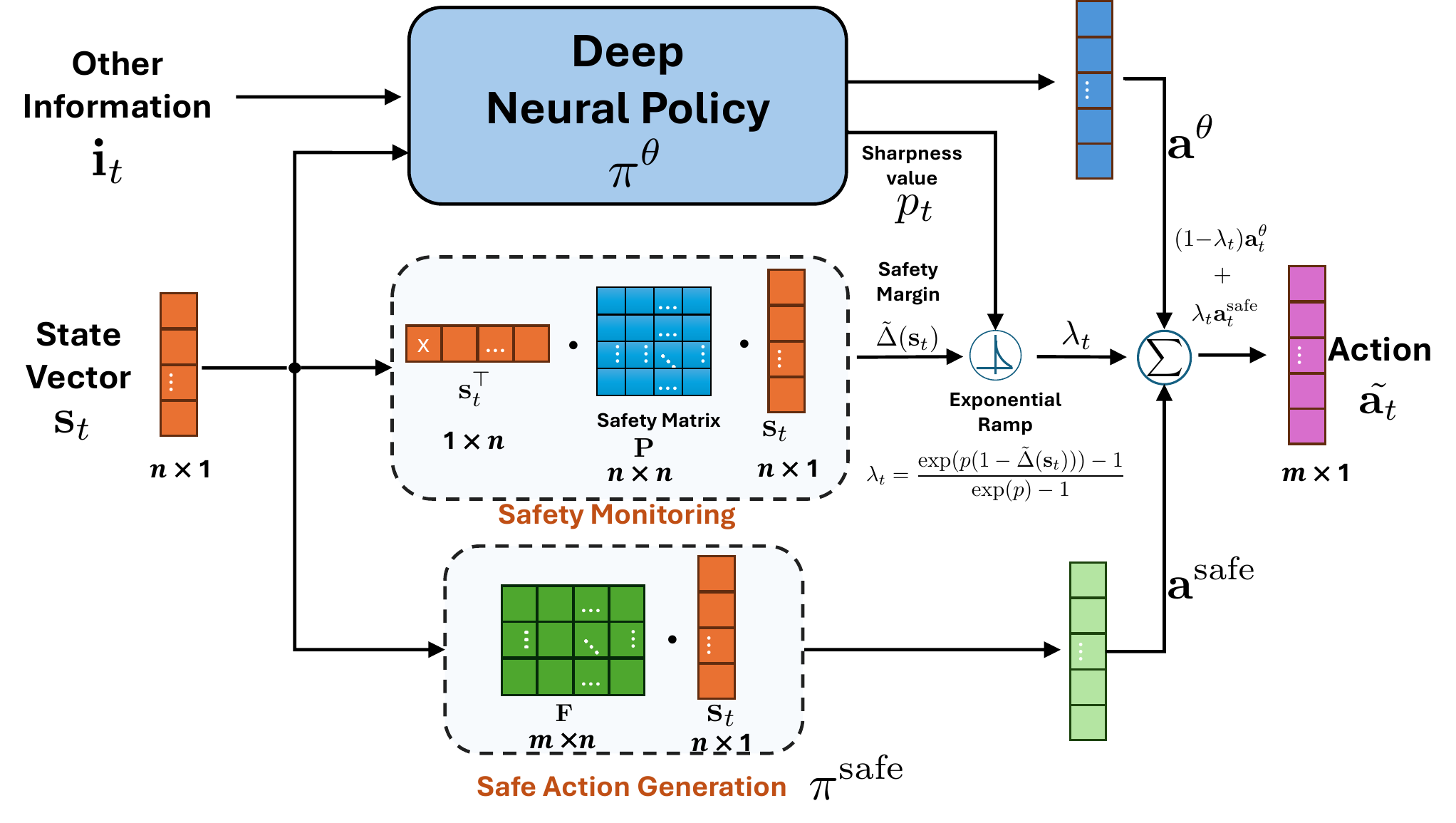}
    \caption{\textbf{AutoSafe}: The policy uses the state vector $\mathbf{s}_t$, included in the observation, to assess risk $\tilde\Delta(\mathbf{s}_t)$ and generate safe action $\mathbf{a}^\mathrm{safe}$. In parallel, the learning-based component processes the full observation $\mathbf{o}_t=(\mathbf{s}_t, \mathbf{i}_t)$ to produce high-performance but potentially unsafe actions $\mathbf{a}^{\theta}$. The two action outputs are fused through a weighted summation, where the weights are determined by a learnable exponential activation function conditioned on the risk value.}
  \label{fig:diagram}
\end{figure}

\textbf{AutoSafe} fuses a standard DRL policy $\pi_\theta$ with a safe model-based policy $\pi_\mathrm{safe}$ through a weighted summation of their action outputs: $\tilde{\mathbf{a}} = (1 - \lambda(\mathbf{s})) \cdot \mathbf{a}^\theta(\mathbf{o}) + \lambda(\mathbf{s}) \cdot \mathbf{a}^\mathrm{safe}(\mathbf{s})$. Here, we use a more general expression for the input of DRL, because DRL can work on a broader range of information, for instance, images and augmented state information that includes past states. The specific observation designs can be found in the implementation details of the applications.
The DRL policy $\pi_\theta$ takes the full observation $\mathbf{o}_t = (\mathbf{s}_t, \mathbf{i}_t)$, which includes the state vector $\mathbf{s}_t$ and additional information $\mathbf{i}_t$ (e.g., images), and generates an action  $\mathbf{a}_\theta \sim \pi_\theta(\cdot \mid \mathbf{o}_t).$ The safe policy $\pi_\mathrm{safe}$ only observes the state vector and outputs a safe action $\mathbf{a}_\mathrm{safe} = \mathbf{Fs}.$

The mixing coefficient $\lambda(\mathbf{s}) \in [0,1]$ is determined by an exponential-like activation function that takes as input a non-negative safety margin $\Delta_t \in \mathbb{R}_{\ge 0}$, computed by a non-learnable Safety monitor function, together with a learnable sharpness parameter $p$. When the agent operates well inside the safety envelope, $\lambda(\mathbf{s})$ is small and the action is dominated by $\mathbf{a}_\theta$. As the risk $z$ increases near the boundary of the safety set $\mathcal{S}_c$, $\lambda(\mathbf{s})$ grows, shifting the action towards $\mathbf{a}_\mathrm{safe}$. At the boundary, the agent relies solely on the safe policy to ensure constraint satisfaction. The sharpness parameter $p$ controls the sharpness of this transition. An overview of the entire architecture is shown in Fig.~\ref{fig:diagram}.

The proposed \textbf{AutoSafe} neural policy additionally has a temperature prediction head compared to the standard actor network in SAC. This prediction head shares the same backbone as the action prediction head and is activated using the $\text{tanh}$ activation function. We then apply an affine transformation to map the output range $(-1, 1)$ into $(p_\text{min}, p_\text{max})$. We set $p_\text{min} = 1.0$ and $T_\text{max} = 25.0$ for all studied case studies. The selection of the range can be flexible. We set $p_\text{max} = 25.0$ to ensure the DRL agent has enough freedom to explore the state using its own action within the safety envelope. The value of $\lambda$ is capped to $1$ for the case of exceeding the safety envelope caused by abrupt environmental uncertainties such as disturbance or noise.

\subsubsection{Simplex}
The implementation of the Simplex architecture follows the standard setting as in \citep{simplex, lee_neural_2020}. The system's safety is backed up by a safe policy. The switching between the safe and learning-based policy is determined by a state-dependent function, detailed as follows:
\[
\tilde{\mathbf{a}}_t =
\begin{cases}
\mathbf{a}^\text{safe}_t = \pi^\text{safe}(\mathbf{s}_t), & \text{if}~  1 -\mathbf{s}_t ^\top\mathbf{P} \mathbf{s} \le \Delta_\mathrm{min}, i.e, ~\mathbf{s} \notin \Omega \\
{\mathbf{a}^{\theta}}_t\sim \pi^{\theta}(\cdot \mid \mathbf{o}_t), & \text{otherwise}.\\
\end{cases}
\]
The matrix $\mathbf{P}$ is solved from the LMIs problem in~\ref{app:lmi} and identical to the $\mathbf{P}$ in \textbf{AutoSafe}. The safe policy $\pi_{safe}$ generates the safe action as $\mathbf{a}_{safe} = \mathbf{Fs}$.

\subsubsection{CBF}
In this experiment, we implement a Control-Barrier Function based approach for the safety back up mechanism. Our implementation of CBF follows the setup introduced in \citep{ames_control_2019}. In particular, given an action from a DRL policy \(\mathbf{a}_\theta\), a CBF-based safety filter computes a safe action $\mathbf{a}_{safe}$ by solving the following quadratic program:
\begin{align}
    \mathbf{a}_{safe}
    &= \arg\min_{\mathbf{a}} \frac{1}{2}\big\| \mathbf{a} - \mathbf{a}_\theta\big\|^2 \\
    \text{s.t.}\quad
    & L_fh(\mathbf{s}) + L_gh(\mathbf{s})\mathbf{a} \ge -\alpha(h(\mathbf{s}))
    \label{eq:cbf-qp-constraint}\\
    &\mathbf{A}_a \mathbf{a} \le \mathbf{b}_a,
\end{align}
where $\alpha(h(\mathbf{s})) = k\cdot h(\mathbf{s})$ and $h(\mathbf{s})$is defined as:
\begin{align}
    h(\mathbf{s}) = \mathbf{b}_s - \mathbf{A}_s \mathbf{s}\quad h(\mathbf{s}) \in \mathbb{R}^{n_s}.
\end{align}
Here, $h(\mathbf{s})$ is defined in matrix form, yielding a vector-valued barrier function. In practice, each component of this vector is enforced as an individual CBF constraint. The $(\mathbf{A}_s, \mathbf{b}_s)$ and $(\mathbf{A}_a, \mathbf{b}_a)$ correspond to the state constraints and action constraints, respectively, as introduced in the background section. For all experiments, we set $k=0.5$. 

The results are shown in ~\cref{fig:all-runs}. We observed that the constructed CBF baseline reduces safety violations, but its performance remains below that of the Simplex approach. In practice, we found that solving the CBF QP online is sensitive to parameter choices and can frequently become infeasible, especially near the safety boundary. Additional parameter tuning and system-specific tuning may improve its performance, but this process is nontrivial and beyond the scope of this work.

\subsection{Applications Details}
\subsubsection{Cartpole}
\textbf{Task Definition:} The goal of this task is to balance the pole at the intended target position $\hat{x}$ by driving the cart using force input. In this task, the observation of the agent is defined as $o_t = \{x_t, \dot{x}_t, \sin(\theta), \cos(\theta), \dot{\theta}\}$. Our method and the other methods that use a model-based safe prior require the tracking error $e_t$ as the additional input for the safe policy to generate a safe action. $e_t$ is defined as the difference between the current state $\mathbf{s}_t$ and the control equilibrium $\mathbf{s}^*$ of the model-based design, $e_t = \{x_t - x^*, \dot{x} - \dot{x}^*, \theta_t - \theta^*, \dot{\theta}_t - \dot{\theta}^* \}$. The equilibrium is set as $\mathbf{s}^* = \{0, 0, 0, 0\}$. The control loop is running at 50Hz.

We append the tracking error $e_t$ to the observation space of the other model-free methods to ensure all algorithms receive the same amount of information. We adopt the reward function proposed in \citep{cao_cloud-edge_2022}, formulated as 
\begin{equation*}
        r = e^{-\delta \cdot d(\mathbf{s}_t - \hat{\mathbf{s}})} - \beta\cdot\mathbf{a}_t^2,
\end{equation*}
where $\delta = 5$ is a parameter that adjusts the smoothness of the exponential function; $d(\cdot)$ is the Euclidean distance between the current state $\mathbf{s}_t$ and the control target $\mathbf{\hat{s}}= \{0.1, 0, 0, 0\}$; $\beta$ is a parameter to balance the reward and the action penalty. For more details about the task setting, we refer interested readers to \citep{cao_cloud-edge_2022}. 

\textbf{Safety Constraints:}
In this task, the safety constraints are 
\[
|x_t| \le x_{\text{lim}} \quad
|\theta_t| \le \theta_{\text{lim}},
\]
where $x_{\text{lim}} = 0.5~\text{m}$ and $\theta_{\text{lim}} = 0.785~\text{rad}$.

\textbf{Safety Design}
The safety envelope and safe policy are obtained from solving an LMI problem, as discussed in \cref{app:lmi}. The system model can be found at \citep{florian2007correct}. The linearized model and the code to calculate the matrix $\mathbf{P}$ and $\mathbf{F}$ are available in the attached supplementary files. For more details of solving LMIs for the cartpole task, we refer the interested reader to \citep{caophysics, seto1999case}.

\subsubsection{Glucose}
\textbf{Task Definition} The problem setting and simulation of this application are adopted from \citep{tian_reinforcement_2024}. In the blood glucose regulation task, the goal is to regulate the blood glucose level $G$ to minimize the Magni risk \citep{fox_deep_2020} by controlling insulin injection $a_I$. The dynamics of the glucose control problem are governed by the following ODEs \citep{tian_reinforcement_2024}, 
\begin{align}
    \dot{G} &= -p_1(G - G_b) - GX + D_t,\nonumber \\
    \dot{X} &= -p_2 X + p_3 (I - I_b),\nonumber \\
    \dot{I} &= -n (I - I_b) + a_I\nonumber
\end{align}
Here, \(G\) represents the amount of glucose in the blood, and \(I\) represents the amount of insulin in the blood. \(X\) describes the delayed effect of insulin on lowering blood glucose, which is often unobservable. In this task, the observation is the $ o_t = \{G_t, \Delta G_t, t\}$, where $\Delta G_t = G_t - G_{t-1}$ and $t$ is the total time passed after meal ingestion. The equilibrium of the model-based design is set to be the normal fasting level of glucose and insulin. $\mathbf{s}^*= \{G^*, X^*, I^*\} = \{138, 0, 7\}$. The reward function is defined as
\begin{equation*}
r = 
\begin{cases}
- (3.35506 \times ((\text{ln}^{0.8353}) - 3.7932)^2 & \text{if } 10 \leq G \leq 1000, \\
-1e3 & \text{otherwise }.
\end{cases}
\end{equation*}

\textbf{Safety Constraints}
We follow the safety constraints introduced in \citep{tian_reinforcement_2024} as:
\[
 G_{\text{min}} \leq G_t \leq G_{\text{max}},
\]
where $G_{\text{min}} = 10$ and $G_{\text{max}} = 1000$. 

\textbf{Moded-based Design}
The safety envelope and safe policy are obtained from solving an LMI problem, as discussed in \cref{app:lmi}. The system model can be found at \citep{tian_reinforcement_2024}. The linearized model and the code to calculate the matrix $\mathbf{P}$ and $\mathbf{K}$ are available in the attached supplementary files. 
\subsubsection{3D Quadrotor Goal Reaching}

\begin{figure}[t]
    \centering
    \begin{subfigure}[t]{0.45\linewidth}
    \centering
    \includegraphics[width=\linewidth]{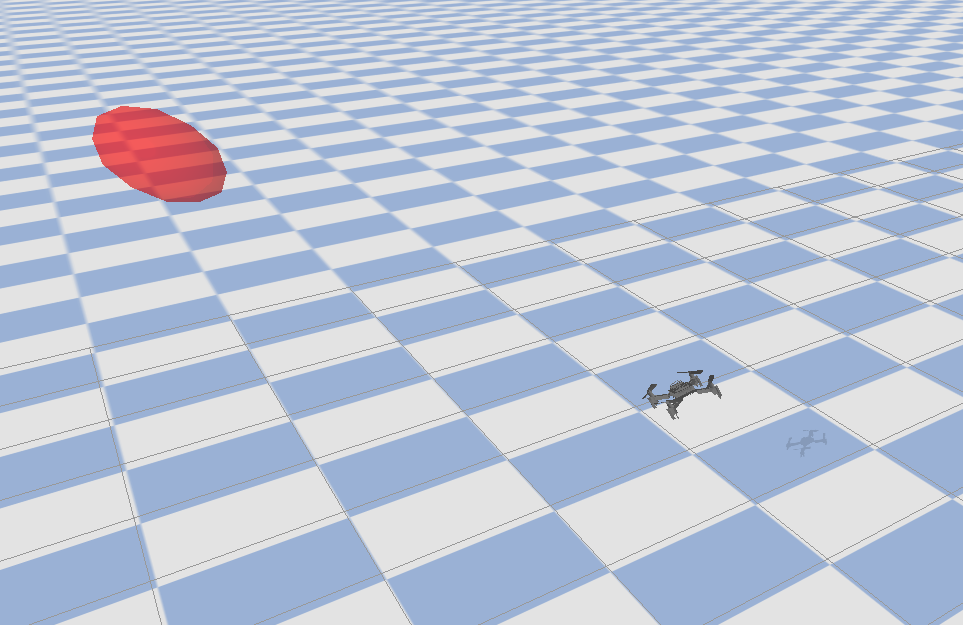}
    \caption{Setup of quadrotor goal reaching. The red sphere represents the target zone.}
    \label{fig:quadrotor}
    \end{subfigure}%
    \hfill
    \begin{subfigure}[t]{0.45\linewidth}
            \centering
    \includegraphics[width=\linewidth]{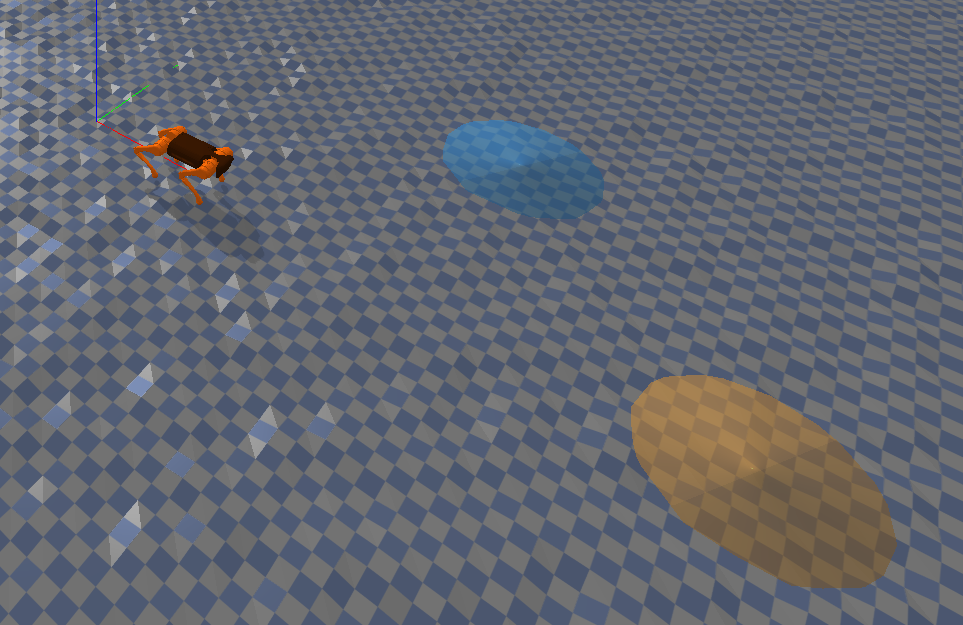}
    \caption{Setup of quadruped navigation. The target pick-up zone is indicated as a blue spot, and the delivery zone is indicated as a yellow spot. The terrain is randomized with uneven height.}
    \label{fig:quadruped}
    \end{subfigure}
    \caption{Experimental setup illustration of quadrotor and quadruped robots}
    \label{fig:rotor-dog-exp}
\end{figure}

% \begin{figure}
%     \centering
%     \includegraphics[width=0.48\linewidth]{images/quadrotor.png}
%     \caption{Setup of quadrotor goal reaching. The red sphere represents the target zone}.
%     \label{fig:quadrotor}
% \end{figure}
\textbf{Task Definition} The goal of this task is to control a quadrotor to reach a target goal position $\{ \hat{x}, \hat{y}, \hat{z}\}$ by controlling the thrust input on each propeller. The observation for the agent is $o=\{x, y, z, \theta_x, \theta_y, \theta_z, \dot{x}, \dot{y}, \dot{z}, \dot{\theta_x}, \dot{\theta_y}, \dot{\theta_z}\}$. The action space consists of a four-dimensional trust input, denoted as $a = \{u_1, u_2, u_3, u_4\}$. The equilibrium of the model-based design is $s^* = \{0, 0, 0, 0, 0, 0, 0, 0, 0, 0, 0, 0\}$. The reward function is adopted from \citep{yuan2022safe} as: 
\begin{equation*}
    r = e^{-\alpha \cdot ( \|x- \hat{x}\|^2 + \|y- \hat{y}\|^2 + \|z- \hat{z}\|^2) - \beta \cdot \|a\|^2} ,
\end{equation*}
where $\alpha=1.0$ and $\beta=1e-4$  are the weights to balance the distance-related reward and action penalty.
In our case study, we set the initial position of the quadrotor as $\mathbf{s}_{0_{xyz}} = \{1.5, 1.5,1.5\}$ and the target position of the quadrotor as $\hat{\mathbf{s}}_{xyz} = \{ 2.5, 2.5, 2.5\}$. The control loop is running at 50Hz.

\textbf{Safety Constraints} The safety constraint is defined as 
\begin{align}
    x_{\text{min}} < x_t < x_{\text{max}}, \quad y_{\text{min}} < y_t < y_{\text{max}}, \quad z_{\text{min}} < z_t < z_{\text{max}}
\end{align}
where $x_\text{min}=-5.0~\text{m}, y_\text{min}=-5.0~\text{m}, z_\text{min} = 0.0~\text{m}, $ and $x_\text{max} = y_\text{max} =z_\text{max} = 5.0~\text{m}$, representing the allowable moving area in the $x-y-z$ space.

\textbf{Safety Design}
The safety envelope and safe policy are obtained from solving an LMI problem, as discussed in \cref{app:lmi}. The system model can be found at \citep{yuan2022safe}. The linearized model and the code to calculate the matrix $\mathbf{P}$ and $\mathbf{F}$ are available in the attached supplementary files. 

\subsubsection{Quadruped Navigation on Uneven Terrain}
% \begin{figure}
%     \centering
%     \includegraphics[width=0.48\linewidth]{images/quadruped.png}
%     \caption{Setup of quadruped navigation. The target pick-up zone is indicated as a blue spot, and the delivery zone is indicated as a yellow spot. The terrain is randomized with uneven height.}
%     \label{fig:quadruped}
% \end{figure}
\textbf{Task Definition}
In this task, we aim to train a safe RL policy to enable the quadruped robot to walk through the uneven terrain to finish a virtual package delivery problem. The robot needs to first go to a package pick-up zone (A) and then navigate a package drop zone (B). We assume a virtual package is automatically attached to the robot when the robot reaches zone A and detached when it reaches zone B. The terrain is created unevenly by randomly placing blocks on the floor, with the maximum height of the unevenness to be 4 centimeters. The control loop is running at 200Hz.

The observation of the agent includes the pose of the robot in the world coordinates and the relative distance toward goal A for picking up, and the relative distance toward goal B for dropping off. We additionally add the task phase ID, $i$ (0 or 1), for the picking up and delivery. Overall, $o$ is defined as $o = \{x, y, z, \theta_x, \theta_y, \theta_z, \dot{x}, \dot{y}, \dot{z}, \dot{\theta_x}, \dot{\theta_y}, \dot{\theta_z}, x_{rel}, y_{rel}, z_{rel}, i$. The action space is defined as the desired acceleration as $a = \{\ddot{x}, \ddot{y}, \ddot{z}, \ddot{\theta_x}, \ddot{\theta_y}, \ddot{\theta}_z\}$. The acceleration input is then mapped to the low-level joint angles using a Model Predictive Controller (MPC) \citep{yang_fast_2022}. The equilibrium point for the model-based design is defined as $s^* = \{0, 0, 0.24, 0, 0, 0, 0.26, 0, 0, 0, 0, 0, 0\}$, meaning maintaining the height $z = 0.24$ and forward velocity $\dot{x} = 0.26~\text{m/s}$. The reward is designed piece-wise to incentivise the robot to move to A and B to finish the whole task, as 
\begin{equation*}
r(\mathbf{s}) =
w_d \, r_d(\mathbf{s}, g)
+ w_h \, r_h(\mathbf{s}, g)
+ w_z \, r_z(\mathbf{s})
+ R_1 \,\mathbb{I}_{\text{pickup}}
+ R_2 \,\mathbb{I}_{\text{delivery}}.
\end{equation*}
Each term of the reward is defined as:
\begin{align*}
 &\text{positional reward}: r_d(\mathbf{s}, g) = -\|\mathbf{s}_{xyz} - \hat{\mathbf{s}}_{xyz} \|, \\
 &\text{directional reward}:r_h(\mathbf{s}, g) = -|\theta - \theta_g|, \\
 &\text{height regulation reward}: r_z(\mathbf{s}) = -|z - z_{\text{ref}}|, \\
 &\mathbb{I}_{\text{pickup}} =
 \begin{cases}
1,  &\text{if reaches the pickup zone (A)}\\
0,  &\text{otherwise}
\end{cases} \\
&\mathbb{I}_{\text{delivery}} =
\begin{cases}
1,  &\text{if reaches the delivery zone (B)} \\
0,  &\text{otherwise}
\end{cases}
\end{align*}

where the coefficient of each term is detailed as \begin{align*}
w_d = 1.0, \quad w_h = 0.25, \quad w_z = 0.25 \quad R_1 = 2.5, \quad R_2 = 50.
\end{align*}

\textbf{Safety Constraints} The safety constraint in this task is mainly considered as the robot not falling and jumping too high while moving forward, as:
\begin{equation*}
    z_{\text{min}} \leq z_t \leq z_{\text{max}}
\end{equation*}
where $z_\text{min} = 0.16~\text{m}$ and $z_\text{max} = 0.8~\text{m}$.

\textbf{Safety design}
The safety envelope and safe policy are obtained from solving an LMI problem, as discussed in \cref{app:lmi}. The linearized model and the code to calculate the matrix $\mathbf{P}$ and $\mathbf{F}$ are available in the attached supplementary files. 

\newpage
\section{Additional Experimental Results}\label{app:additional_results}
In this section, we summarize the additional experimental results to provide more insights. 
\subsection{Experimental results for all methods on all considered experiments.}
We summarize the training progress and the accumulated safety violations in Fig~\ref{fig:all-runs}. The $\Delta_\mathrm{min} =0 $ is used for Simplex and AutoSafe as the default value. The detailed ablation study on $\Delta_\mathrm{min}$ is shown in Table~\ref{tab:abalation_delta}.
\begin{figure}[h]
    \centering
    \includegraphics[width=0.95\linewidth]{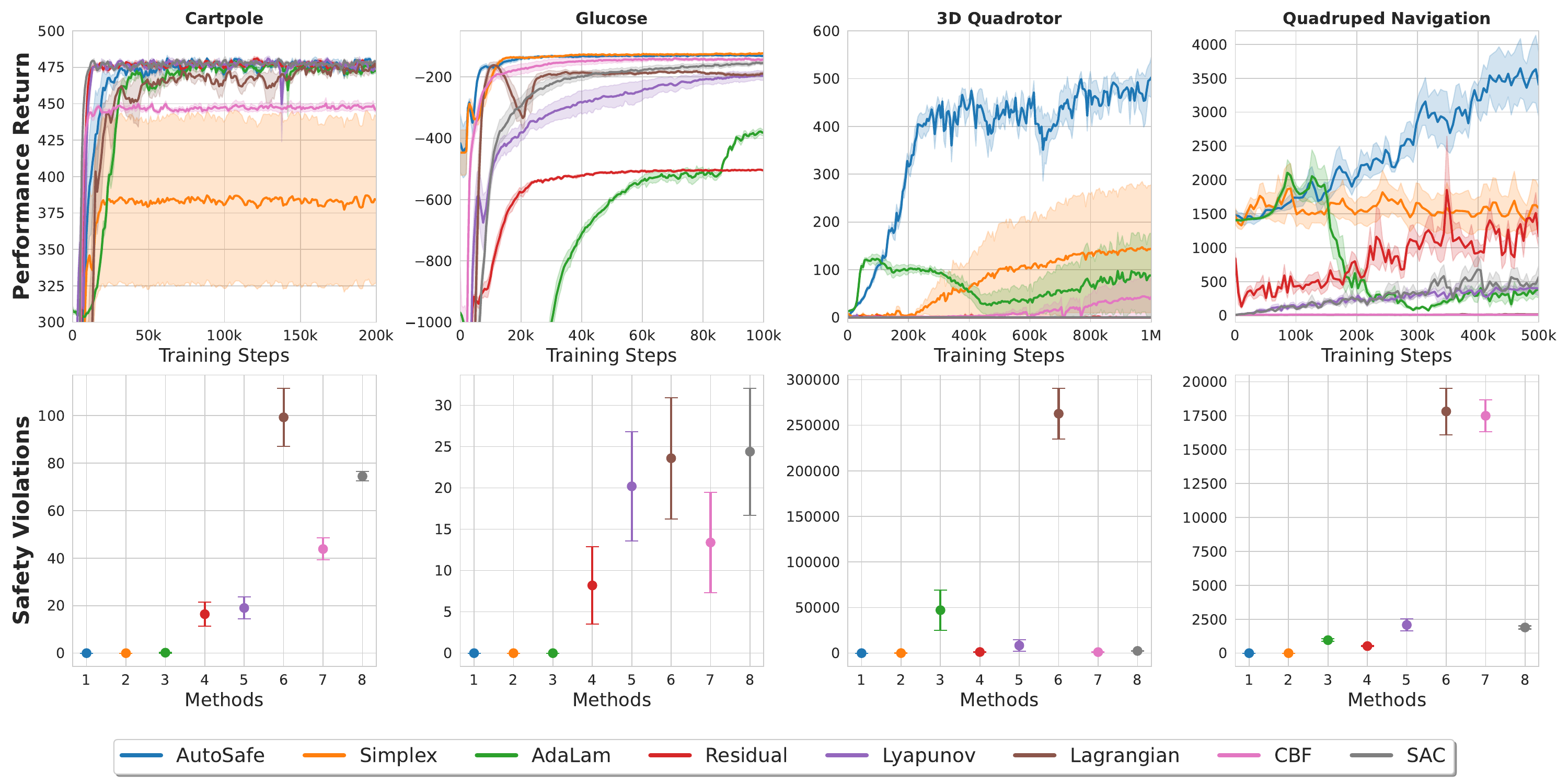}
    \caption{Performance return curve and the accumulated safety violation for all algorithms in the considered case studies}
    \label{fig:all-runs}
\end{figure}

\subsection{Divergence of the hard-safety intervention}
We visualize the training loss curves for \textit{AutoSafe} and \textit{Simplex}. We observe that training of the \textit{Simplex} is gradually diverging with a large critic loss, as shown in Fig.~\ref{fig:loss-comparision}. We attribute this to the frequent interventions by the safe policy, which creates discontinuities in the data distribution, meaning the state–action transitions suddenly shift from those produced by the learning policy to those imposed by the safe controller. This mismatch makes the data less smooth and harder for the neural network to approximate, causing unstable or diverging learning behavior. 

\begin{figure}
    \centering
    \includegraphics[width=0.95\linewidth]{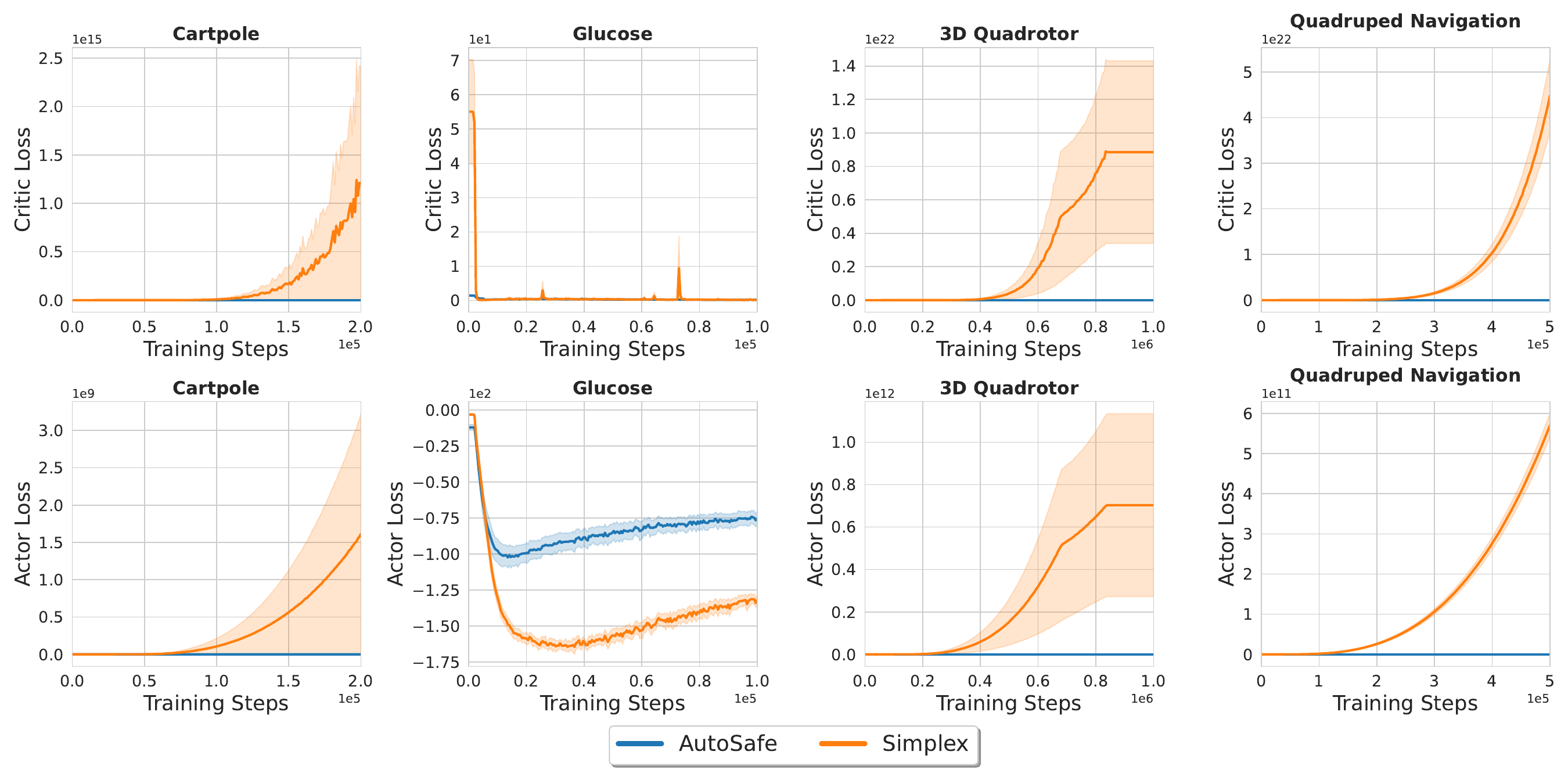}
    \caption{Critic Loss and Actor Loss of AutoSafe and Simplex-Based Method}
    \label{fig:loss-comparision}
\end{figure}

\subsection{Ablation on the setting of Temperature $p$}
In this case study, we compare our learnable temperature setting against two heuristics, including linear and exponential increasing. The result is shown in Fig.~\ref{fig:ablation_temperature}. We found that all methods work similarly regarding performance, except for the quadrotor case. However, designing an effective scheduling scheme is non-trivial, which requires repetitive manual tuning. Moreover, the manual setting might introduce bias. We visualize the evolution of the learnable sharpness parameter $T$ during the policy learning in Fig.~\ref{fig:tem_autosafe}. It can be seen that the learned sharpness converges to different values across tasks, suggesting that it may not be trivial to manually set the "right" parameter using heuristics. 

\begin{figure}
    \centering
    \includegraphics[width=0.95\linewidth]{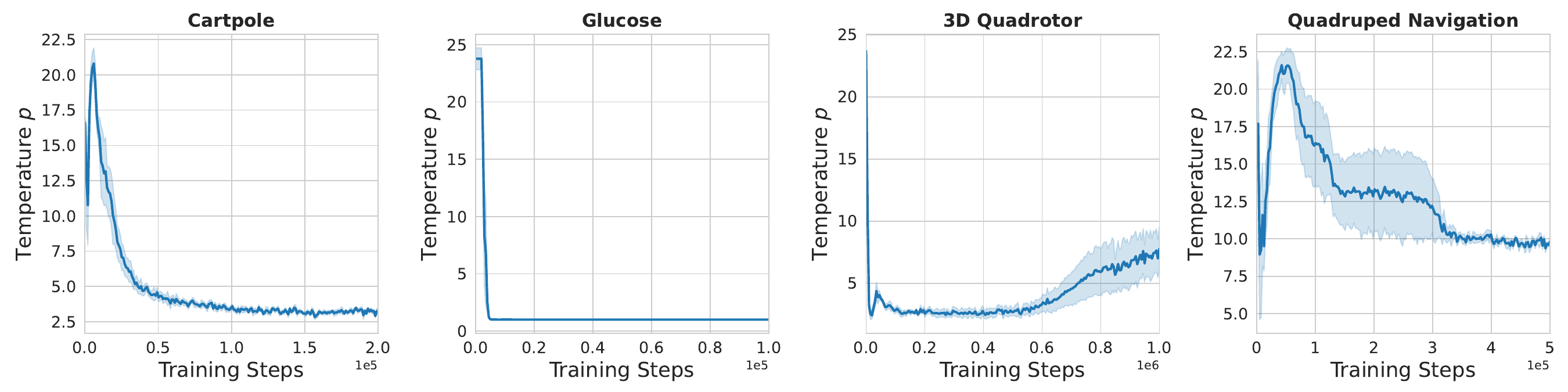}
    \caption{The evolution of sharpness parameter $p$ during policy learning for studied applications}
    \label{fig:tem_autosafe}
\end{figure}
\begin{figure}
    \centering
    \includegraphics[width=0.95\linewidth]{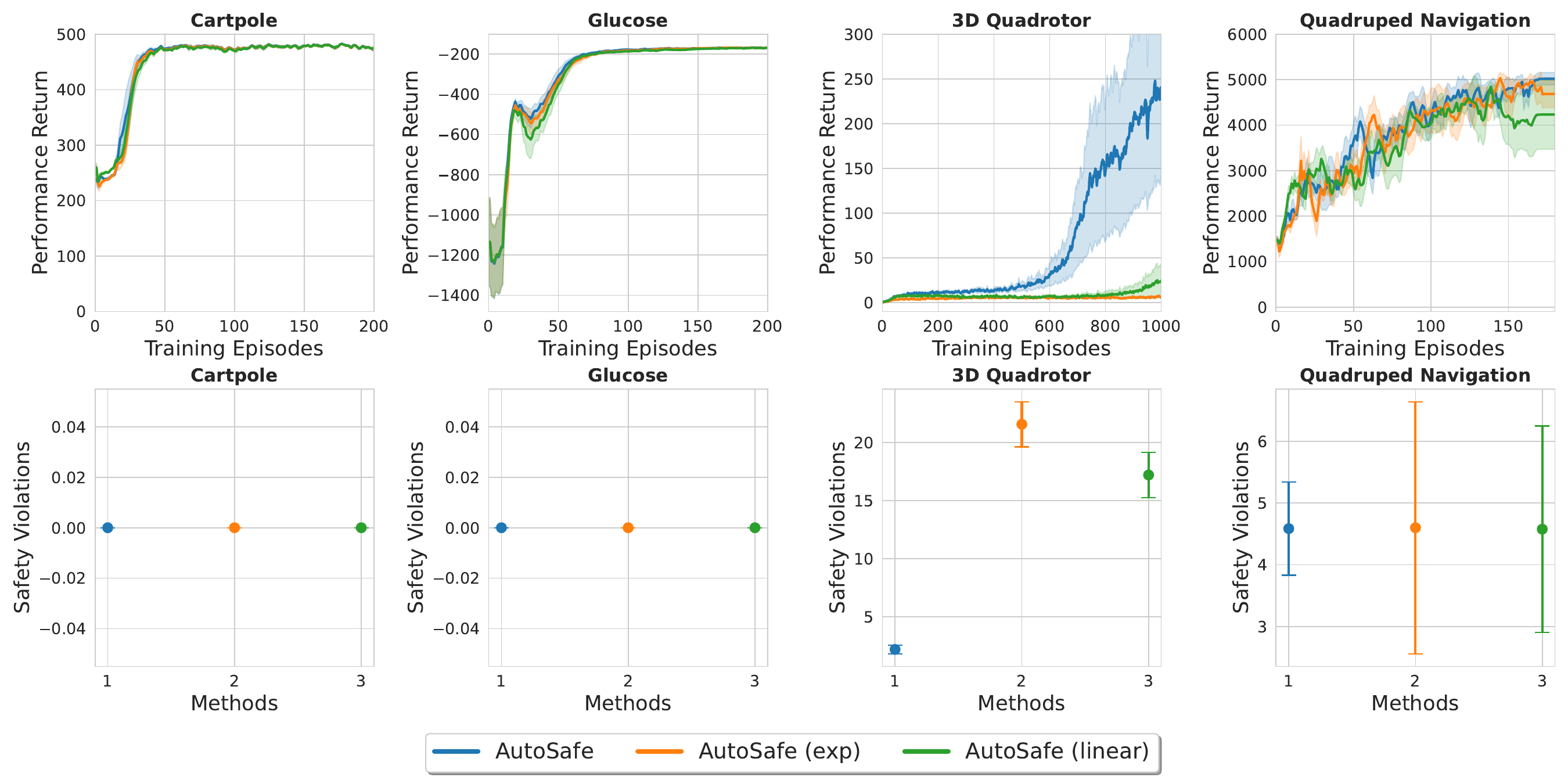}
    \caption{Ablation Study of sharpness parameter $p$: Learning-based vs. Schedule-based.}
    \label{fig:ablation_temperature}
\end{figure}

\subsection{Ablation of AdaLam}

In this study, we investigate several ways of setting the weights between safe action and learning-based action, as shown in Fig.~\ref{fig:lam}. We found out that initializing the $\lambda$ to close to $1$ at the beginning of the training enables safe interactions. For simple tasks, such as cartpole and glucose, the agent could learn using the data generated by the safe policy. However, we found that it is not effective in high-dimensional cases. For a learning-based setting, the exploration is not effective; therefore, the performance of the policy is barely improved. For the scheduled setting, we found that frequent safety violations occur when $\lambda$ decreases. The frequent safety violations generate a lot of uninformative data, where the agent cannot learn to converge.
\begin{figure}
    \centering
\includegraphics[width=0.95\linewidth]{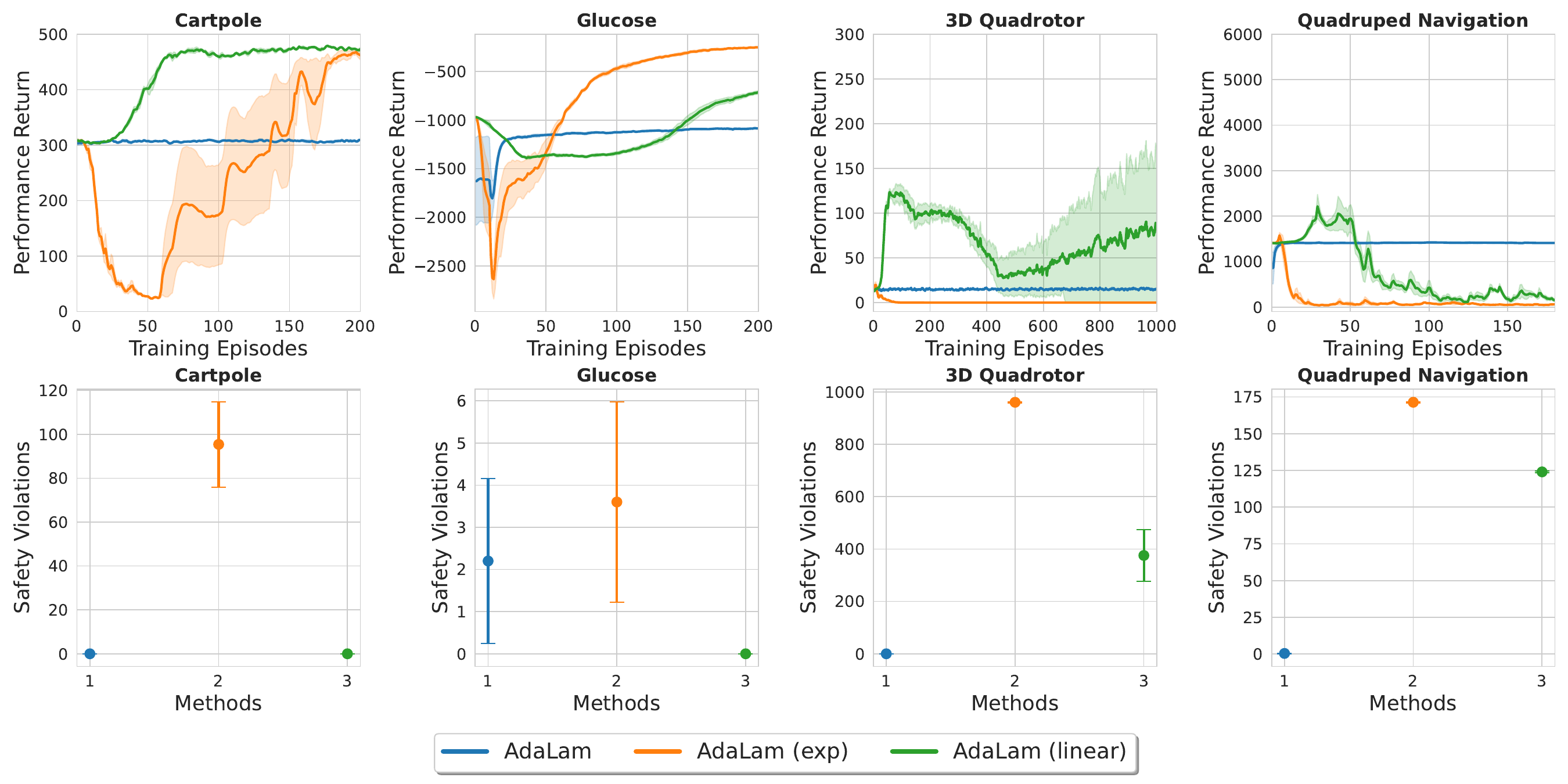}
    \caption{Ablation of the adaptation for $\lambda$ in AdaLam}
    \label{fig:lam}
\end{figure}

\subsection{Under varying constraints}
\label{sec:autosafe_vary}
In this section,  we demonstrate that the current safety design can be readily extended to handle online constraints with slight modification. Specifically, the offline recovery region can be reinterpreted and transformed into a closed-form expression to accommodate new constraints while preserving local invariance guarantees.
\begin{figure}[h]
    \centering \includegraphics[width=0.5\linewidth]{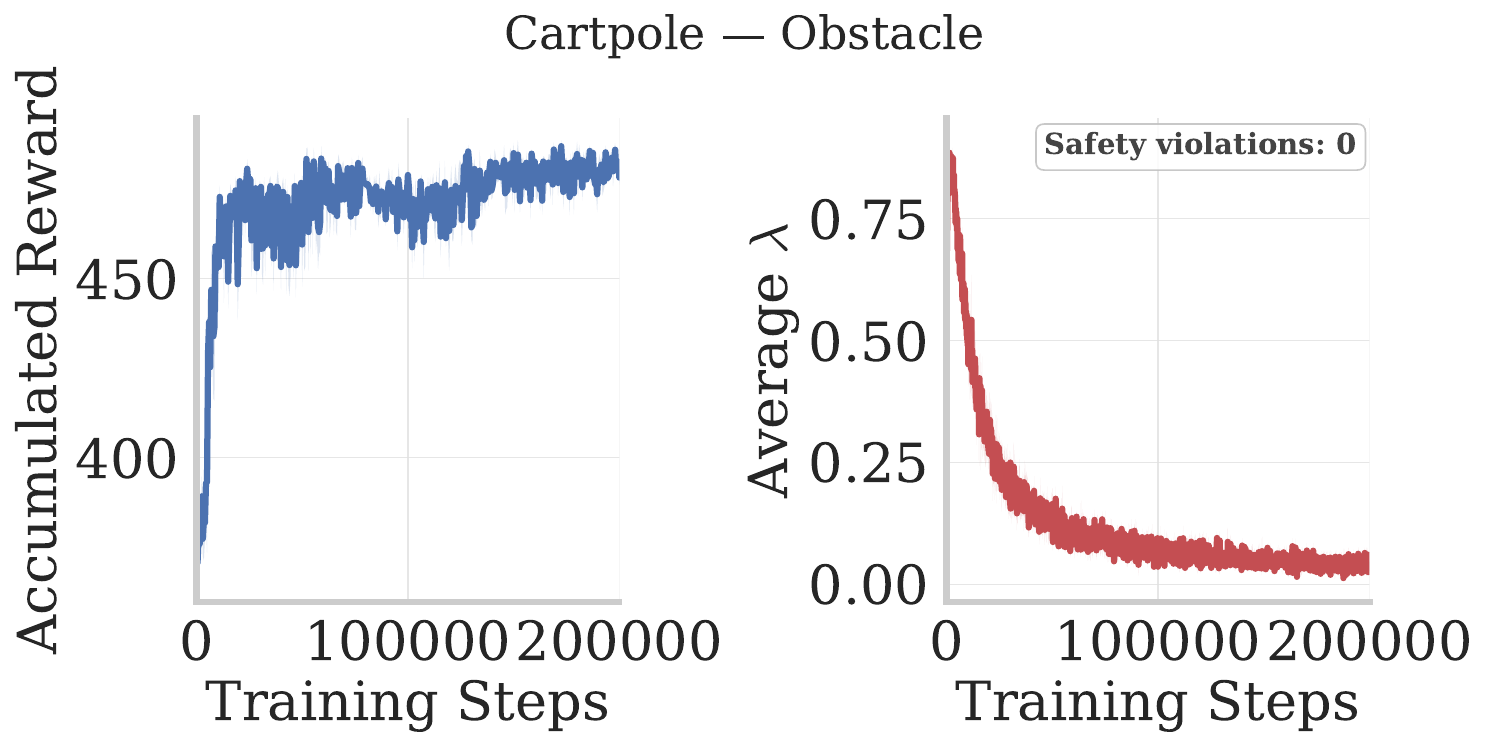}
    \includegraphics[width=0.4\linewidth]{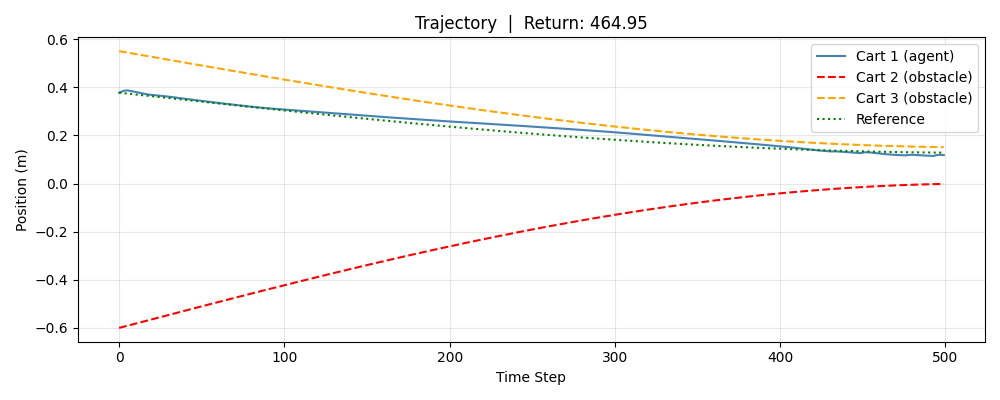}
    \caption{State trajectories over an evaluation episode. The dashed lines indicate the time-varying constraints on the position imposed by each intruder.}
    \label{fig:cartpole-vary}
\end{figure}

\paragraph{CartPole} The DRL agent learns to drive the cart-pole system toward a time-varying target position \(x_t^*\) while two moving obstacle constraints dynamically shrink the admissible state space. The state space is \(\mathcal{S} \subset \mathbb{R}^4\), and the action space is \(\mathcal{A} \subset \mathbb{R}^1\).

The system is subject to the following fixed safety constraints:
\begin{align}
|\theta_t| \leq \theta_{\max}, \qquad |x_t| \leq x_{\max}.
\end{align}

In addition, two intruder-induced time-varying constraints impose dynamic bounds on the cart position:
\begin{align}
x_t \geq \ell_t, \qquad x_t \leq r_t,
\end{align}
where \(\ell_t\) and \(r_t\) are updated online according to the motion of the intruders.

As shown in Fig.~\ref{fig:cartpole-vary}, \textbf{AutoSafe} learns to safely drive the cartpole to reach the target goal under two moving objects constraints without safety violations.
\begin{figure}[h]
    \centering
    \includegraphics[width=0.5\linewidth]{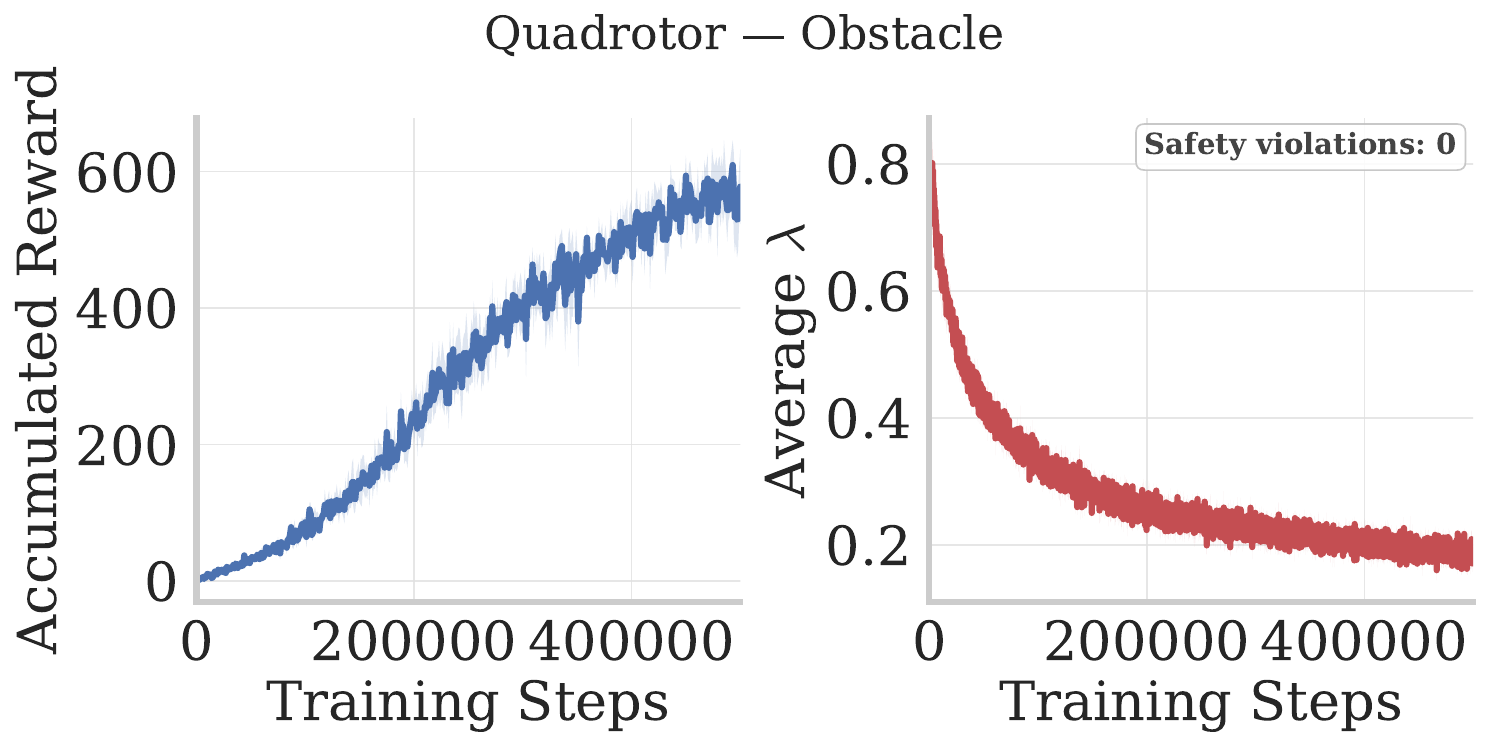}
    \includegraphics[width=0.4\linewidth]{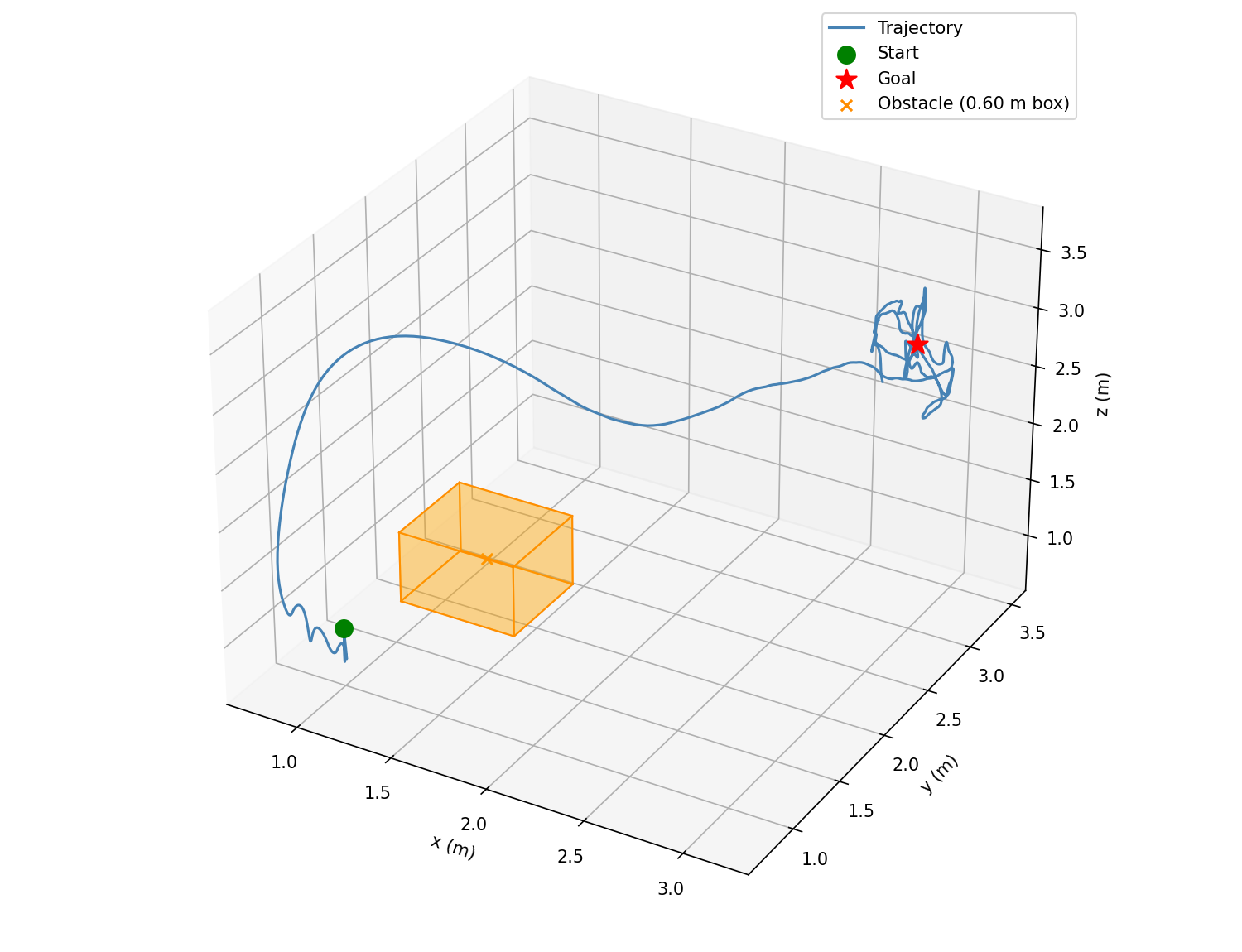}
    \caption{example evaluation trajectories with randomized obstacle positions. The adapted safe recoverable region enables safe clearance for goal reaching.}
    \label{fig:quadrotor-vary}
\end{figure}
\paragraph{Quadrotor} A quadrotor must reach a goal position in 3D space while avoiding a static obstacle. The state space is \(\mathcal{S} \subset \mathbb{R}^{12}\), and the action space is \(\mathcal{A} \subset \mathbb{R}^{4}\).

The system is subject to the following fixed safety constraints:
\begin{align}
|\phi_t| \leq \phi_{\max}, \qquad
|\psi_t| \leq \psi_{\max}, \qquad
|\vartheta_t| \leq \vartheta_{\max},
\end{align}
and an altitude constraint
\begin{align}
z_t \geq z_{\min}.
\end{align}

In addition, the obstacle is represented by a collection of time-varying supporting hyperplane constraints:
\begin{align}
a_{i,t}^\top p_t \geq b_{i,t},
\qquad i = 1, \dots, F,
\end{align}
where each obstacle face \(i\) defines a supporting half-space that is updated online according to the relative motion of the quadrotor.

As shown in Fig.~\ref{fig:quadrotor-vary}, \textit{AutoSafe} learns to safely drive the quadrotor to reach the target goal while avoiding the obstacle.

% \subsection{Insights of goal Conflicting}
% In this experiment, we design a position tracking problem in cartpole: the agent needs to control the cartpole to reach and balance at the target position. We intentionally make the goal of the safe policy to be at $x_{goal} = 0.0~\text{m}$ and create a list of goal-conflicting scenarios for DRL with $x_{goal} = \{0.0, 0.1, 0.2 ,0.3\}~\text{m}$, 
% where the larger number corresponds to the stronger goal conflict. $0.3$ is set near the boundary of the safety envelope which has the strongest conflict. As shown in Figure~\ref{fig:goal_conflict_autosafe_simplex}, our method can handle different levels of the goal conflicting, outperforming the \textit{hard-intervention} counterpart \textbf{Simplex} (see comparison results in Figure~\ref{fig:goal_conflict_autosafe_simplex}). However, it can also be noticed that our method also struggles when the DRL goal is very close the safety boundary where the safe policy inevitably intervene to keep the system safe. In such cases, performance is naturally constrained to be degraded compared to the settings with weaker goal conflict.

% \begin{figure}
%     \centering
%     \includegraphics[width=0.95\linewidth]{images/simplex_vs_autosafe.pdf}
%     \caption{Ablation study on the goal conflicting for \textbf{AutoSafe} and \textbf{Simplex}}
%     \label{fig:goal_conflict_autosafe_simplex}
% \end{figure}

\subsection{Sensitivity Analysis on $\Delta_\mathrm{min}$}
\label{sec:delta_ablation}
In \cref{tab:abalation_delta}, $\Delta_\mathrm{min} = 0$ corresponds to the most aggressive setting, where the safety switch happens when the system exits the maximum safety recoverable region. $\Delta_\mathrm{min} = 1.0$ corresponds to the most conservative setting, where $\lambda = 1$, meaning the system is fully controlled by the $\pi^\text{safe}$.
As expected, larger \(\Delta_{\min}\) leads to fewer safety violations due to tighter safety enforcement, but may also reduce task performance by constraining exploration. \textit{AutoSafe} outperforms \textit{Simplex} in most tasks, while showing comparable performance on the Glucose Regulation benchmark. As \(\Delta_{\min} \rightarrow 1\), both methods converge toward the same behavior.

\begin{table*}[t]
\centering
\caption{Effect of $\Delta_\mathrm{min}$ on performance and safety across four continuous control tasks. Mean $\pm$ std over five random seeds.}
\label{tab:abalation_delta}
\setlength{\tabcolsep}{6pt}
\resizebox{\textwidth}{!}{
\begin{tabular}{c cc cc cc cc }
\toprule
\textbf{Method} & \multicolumn{2}{c}{\textbf{Cartpole}} & \multicolumn{2}{c}{\textbf{Glucose}} & \multicolumn{2}{c}{\textbf{Quadrotor}} & \multicolumn{2}{c}{\textbf{Quadruped}} \\
\cmidrule(lr){2-3} \cmidrule(lr){4-5} \cmidrule(lr){6-7} \cmidrule(lr){8-9}
\textbf{Method} & Return $\uparrow$ & Viol. $\downarrow$ & Return $\uparrow$ & Viol. $\downarrow$ & Return $\uparrow$ & Viol. $\downarrow$ & Return $\uparrow$ & Viol. $\downarrow$ \\
\midrule
AutoSafe$_{\Delta_\mathrm{min} = 0.0}$ & 477.7 $\pm$ 2.5 & 0.0 $\pm$ 0.0 & -131.2 $\pm$ 3.7 & 0.0 $\pm$ 0.0 & 498.0 $\pm$ 87.1 & 1.0 $\pm$ 1.2 & 3484.7 $\pm$ 985.7 & 1.0 $\pm$ 1.2 \\
SimplexRL$_{\Delta_\mathrm{min} = 0.0}$ & 384.5 $\pm$ 127.9 & 0.0 $\pm$ 0.0 & -124.0 $\pm$ 4.1 & 0.0 $\pm$ 0.0 & 142.9 $\pm$ 297.0 & 53.6 $\pm$ 61.1 & 1558.6 $\pm$ 690.0 & 1.4 $\pm$ 2.1 \\
\midrule
AutoSafe$_{\Delta_\mathrm{min} = 0.25}$ & 476.6 $\pm$ 3.3 & 0.0 $\pm$ 0.0 & -133.0 $\pm$ 4.4 & 0.0 $\pm$ 0.0 & 564.0 $\pm$ 68.1 & 0.2 $\pm$ 0.4 & 2549.6 $\pm$ 971.9 & 0.6 $\pm$ 0.9 \\
SimplexRL$_{\Delta_\mathrm{min} = 0.25}$ & 395.8 $\pm$ 111.8 & 0.0 $\pm$ 0.0 & -126.2 $\pm$ 3.0 & 0.0 $\pm$ 0.0 & 13.9 $\pm$ 11.5 & 30.4 $\pm$ 48.5 & 2072.8 $\pm$ 758.0 & 0.6 $\pm$ 0.9 \\
\midrule
AutoSafe$_{\Delta_\mathrm{min} = 0.5}$ & 475.2 $\pm$ 2.4 & 0.0 $\pm$ 0.0 & -134.1 $\pm$ 3.3 & 0.0 $\pm$ 0.0 & 740.8 $\pm$ 27.2 & 0.0 $\pm$ 0.0 & 2015.2 $\pm$ 757.7 & 0.0 $\pm$ 0.0 \\
SimplexRL$_{\Delta_\mathrm{min} = 0.5}$ & 419.3 $\pm$ 75.3 & 0.0 $\pm$ 0.0 & -126.1 $\pm$ 1.8 & 0.0 $\pm$ 0.0 & 17.5 $\pm$ 10.7 & 50.0 $\pm$ 53.7 & 1880.5 $\pm$ 461.8 & 0.2 $\pm$ 0.4 \\
\midrule
AutoSafe$_{\Delta_\mathrm{min} = 0.75}$ & 469.4 $\pm$ 8.1 & 0.0 $\pm$ 0.0 & -139.7 $\pm$ 5.9 & 0.0 $\pm$ 0.0 & 693.8 $\pm$ 66.5 & 0.0 $\pm$ 0.0 & 1479.2 $\pm$ 76.9 & 0.0 $\pm$ 0.0 \\
SimplexRL$_{\Delta_\mathrm{min} = 0.75}$ & 430.5 $\pm$ 2.4 & 0.0 $\pm$ 0.0 & -129.2 $\pm$ 2.8 & 0.0 $\pm$ 0.0 & 22.7 $\pm$ 5.7 & 0.0 $\pm$ 0.0 & 1969.6 $\pm$ 427.4 & 0.0 $\pm$ 0.0 \\
\midrule
AutoSafe$_{\Delta_\mathrm{min} = 1.0}$ & 304.3 $\pm$ 1.4 & 0.0 $\pm$ 0.0 & -141.5 $\pm$ 13.1 & 0.0 $\pm$ 0.0 & 9.9 $\pm$ 0.0 & 0.0 $\pm$ 0.0 & 1416.6 $\pm$ 34.7 & 0.0 $\pm$ 0.0 \\
SimplexRL$_{\Delta_\mathrm{min} = 1.0}$ & 304.3 $\pm$ 1.4 & 0.0 $\pm$ 0.0 & -129.7 $\pm$ 2.8 & 0.0 $\pm$ 0.0 & 11.2 $\pm$ 0.0 & 0.0 $\pm$ 0.0 & 1416.6 $\pm$ 34.6 & 0.0 $\pm$ 0.0 \\
\midrule
\bottomrule
\end{tabular}
} % End resizebox
\end{table*}
\subsection{Walltime comparison}
\label{sec:time_comparison}
\begin{table*}[h]
    \centering
    \caption{\textbf{Computation Time Comparison.} Average wall-clock time (in milliseconds) per step for inference and optimization steps. Mean~$\pm$~std computed over all logged episodes (one seed). Lower is better ($\downarrow$). Most time-consuming method per column is \underline{underlined}.}
    \label{tab:wall_time}
    \resizebox{\textwidth}{!}{
    \begin{tabular}{l cc cc cc cc}
        \toprule
        & \multicolumn{2}{c}{\textbf{Cartpole}}
        & \multicolumn{2}{c}{\textbf{Glucose}}
        & \multicolumn{2}{c}{\textbf{3D Quadrotor}}
        & \multicolumn{2}{c}{\textbf{Quadruped}} \\
        \cmidrule(lr){2-3}\cmidrule(lr){4-5}\cmidrule(lr){6-7}\cmidrule(lr){8-9}
        \textbf{Method}
        & \textbf{Inf.}$\downarrow$ & \textbf{Opt.}$\downarrow$
        & \textbf{Inf.}$\downarrow$ & \textbf{Opt.}$\downarrow$
        & \textbf{Inf.}$\downarrow$ & \textbf{Opt.}$\downarrow$
        & \textbf{Inf.}$\downarrow$ & \textbf{Opt.}$\downarrow$ \\
        \midrule
        SAC~\citep{sac} & $2.55 \pm 0.34$ & $3.19 \pm 0.54$ & $4.38 \pm 0.55$ & $5.76 \pm 0.31$ & $2.70 \pm 0.28$ & $3.20 \pm 0.33$ & $2.34 \pm 0.21$ & $3.34 \pm 0.23$ \\
        SimplexRL~\citep{lee_neural_2020} & $2.42 \pm 0.73$ & $3.36 \pm 0.54$ & $2.88 \pm 0.85$ & $5.68 \pm 0.45$ & $0.31 \pm 0.01$ & $2.87 \pm 0.09$ & $0.65 \pm 0.07$ & $3.08 \pm 0.32$ \\
        CBF~\citep{ames_control_2019} & $\underline{8.16 \pm 2.29}$ & $3.45 \pm 1.25$ & $\underline{8.84 \pm 2.46}$ & $5.06 \pm 1.63$ & $\underline{8.00 \pm 1.15}$ & $3.57 \pm 0.61$ & $\underline{8.86 \pm 1.42}$ & $3.20 \pm 0.54$ \\
        AdaLam~\citep{tian_reinforcement_2024} & $3.68 \pm 0.92$ & $\underline{4.22 \pm 1.13}$ & $6.44 \pm 1.50$ & $\underline{6.80 \pm 1.79}$ & $3.56 \pm 0.54$ & $\underline{4.06 \pm 0.58}$ & $3.76 \pm 0.01$ & $\underline{5.06 \pm 0.01}$ \\
        Residual~\citep{johannink_residual_2019} & $2.75 \pm 0.36$ & $3.34 \pm 0.52$ & $4.49 \pm 0.51$ & $5.83 \pm 0.29$ & $2.81 \pm 0.27$ & $3.24 \pm 0.35$ & $2.39 \pm 0.20$ & $3.32 \pm 0.30$ \\
        Lyapunov~\citep{westenbroek_lyapunov_2022} & $2.66 \pm 0.35$ & $3.35 \pm 0.52$ & $4.40 \pm 0.55$ & $5.76 \pm 0.31$ & $2.94 \pm 0.38$ & $3.32 \pm 0.41$ & $2.36 \pm 0.23$ & $3.36 \pm 0.23$ \\
        Lagrangian~\citep{ha_learning_2020} & $2.60 \pm 0.37$ & $3.72 \pm 0.64$ & $4.25 \pm 0.64$ & $6.53 \pm 0.78$ & $2.53 \pm 0.56$ & $3.21 \pm 0.71$ & $2.37 \pm 0.20$ & $3.68 \pm 0.28$ \\
        \midrule
        \textbf{AutoSafe (Ours)} & $3.68 \pm 0.88$ & $3.73 \pm 0.89$ & $4.30 \pm 0.45$ & $5.87 \pm 0.27$ & $3.60 \pm 0.47$ & $3.58 \pm 0.43$ & $2.82 \pm 0.58$ & $3.19 \pm 0.70$ \\
        \bottomrule
    \end{tabular}
    }
\end{table*}

In this experiment, we observe that CBF (optimization-based) incurs the highest inference cost due to the per-step online optimization loop, and shows higher variance in action generation.
AdaLam requires higher optimization time, as the mixing parameter 
 is trained separately from the Q-network.
We see that SimplexRL achieves a relatively faster inference time because when the safe policy is activated, the inference switches from network forward pass to cheap matrix multiplication.
AutoSafe maintains low computational overhead for action inference and policy optimization comparable to the other baselines, while achieving strong performance in terms of return and safety violations.

\end{document}